\pgfplotsset{compat=1.7}
\DeclareMathAlphabet\mathbfcal{OMS}{cmsy}{b}{n}
\theoremstyle{plain}
\newtheorem{thm}{Theorem}[section]
\newtheorem{lem}[thm]{Lemma}
\newtheorem{cor}[thm]{Corollary}
\theoremstyle{definition}
\newtheorem{thmDef}{Theorem}[section]
\newtheorem{defn}[thmDef]{Definition}
\newtheorem{assumption}[thmDef]{Assumption}
\theoremstyle{plain}
\newtheorem{thmrem}{Theorem}[section]
\newtheorem{rem}[thmrem]{Remark}
\newcommand{\bb}[1]{\left(#1\right)}
\newcommand{\bas}[1]{\begin{align*}#1\end{align*}}
\newcommand{\ba}[1]{\begin{align}#1\end{align}}
\newcommand{\txtred}[1]{}
\newcommand{\var}{\mbox{var}}
\newcommand{\bzero}{{\bf 0}}
\newcommand{\bone}{{\bf 1}}
\newcommand{\bA}{{\bf A}}
\newcommand{\bB}{{\bf B}}
\newcommand{\bC}{{\bf C}}
\newcommand{\be}{{\bf e}}
\newcommand{\bE}{{\bf E}}
\newcommand{\bF}{{\bf F}}
\newcommand{\bG}{{\bf G}}
\newcommand{\bH}{{\bf H}}
\newcommand{\bI}{{\bf I}}
\newcommand{\bM}{{\bf M}}
\newcommand{\bO}{{\bf O}}
\newcommand{\bP}{{\bf P}}
\newcommand{\bR}{{\bf R}}
\newcommand{\bu}{{\bf u}}
\newcommand{\bU}{{\bf U}}
\newcommand{\bv}{{\bf v}}
\newcommand{\bV}{{\bf V}}
\newcommand{\bW}{{\bf W}}
\newcommand{\bS}{{\bf S}}
\newcommand{\bx}{{\bf x}}
\newcommand{\bX}{{\bf X}}
\newcommand{\bY}{{\bf Y}}
\newcommand{\balpha}{{\boldsymbol \alpha}}
\newcommand{\bbeta}{{\boldsymbol \beta}}
\newcommand{\btheta}{{\boldsymbol \theta}}
\newcommand{\bTheta}{{\boldsymbol \Theta}}
\newcommand{\diag}{\text{diag}}
\newcommand{\bbb}[1]{\left(#1\right)}
\newcommand{\ccc}[1]{\left[#1\right]}
\newcommand{\ddd}[1]{\left\{#1\right\}}
\newcommand{\A}{\mathcal{A}}
\newcommand{\C}{\mathcal{C}}
\newcommand{\cS}{\mathcal{S}}
\newcommand{\R}{\mathbb{R}}
\newcommand{\rank}{\mathrm{rank}}
\newcommand{\cov}{\mathrm{Cov}}
\newcommand{\conv}{\mathrm{Conv}}
\newcommand{\uE}{\mathrm{E}}
\newcommand{\uP}{\mathrm{P}}
\newcommand{\hv}{\hat{\bf V}\xspace}
\newcommand{\hV}{\hat{\bf V}\xspace}
\newcommand{\hE}{\hat{\bf E}\xspace}
\newcommand{\hx}{\boldsymbol X\xspace}
\newcommand{\hxp}{\hx_p}
\newcommand{\hvp}{\hv_p}
\newcommand{\bN}{{\bf N}}
\newcommand{\cSp}{\cS_p\xspace}
\newcommand{\bal}{\nu\xspace}
\newcommand{\bpi}{{\bf \Pi}}
\newcommand{\E}{\mathcal{E}}
\newcommand{\evz}{\E'}
\newcommand{\evt}{\E_t}
\newcommand{\evot}{\E'}
\newcommand{\evze}{\stackrel{\evz}{=}}
\newcommand{\evote}{\stackrel{\evot}{=}}
\newcommand{\OurAlgo}{{\textsf SPACL}\xspace}
\newcommand{\vh}{\hat{\bV}}
\newcommand{\oh}{\hat{\bO}}
\newcommand{\lh}{\hat{\lambda}}
\newcommand{\res}{\bG}
\newcommand{\RelativeErrorB}{\tilde{O}\bbb{\frac{\gammaP \kappa({\bTheta^T\bTheta})Kn^{1.5} }{\sqrt{\rho}\lambda^*(\bB)(\lambda_K(\bTheta^T\bTheta))^{2}}}}
\newcommand{\ErrorBDirichletBalanced}{ \tilde{O}\bbb{\frac{\min\{K,\kappa(\bB)\}^2 K^{3}}{\sqrt{\rho n}\lambda^*(\bB)}} }
\newcommand{\ThetaError}{ \tilde{O}\bbb{\frac{\gammaP  (\kappa({\bTheta^T\bTheta}))^{1.5} \sqrt{Kn}  }{\sqrt{\rho}\lambda^*(\bB)\lambda_K(\bTheta^T\bTheta)}} }
\newcommand{\ThetaErrorDirichletBalanced}{ \tilde{O}\bbb{\frac{\min\{K,\kappa(\bB)\}^2K^{1.5}}{\sqrt{\rho n}\lambda^*(\bB)}} }
\newcommand{\eigenspacerowwise}{ \tilde{O}\bbb{\frac{\gammaP \sqrt{Kn} }{\sqrt{\rho}\lambda^*(\bB)(\lambda_K(\bTheta^T\bTheta))^{1.5}}}}
\newcommand{\additionalone}{\ensuremath \lambda^*(\bB)=\tilde{\Omega}\bb{\dfrac{\gammaP ({\kappa(\bTheta^T\bTheta)})^{1.5}K\sqrt{n}}{\sqrt{\rho}\lambda_K(\bTheta^T\bTheta)}}\xspace}
\newcommand{\gammaP }{\psi(\bP)}
\begin{document}
\title{Estimating Mixed Memberships with Sharp Eigenvector Deviations
}
\author{Xueyu Mao\thanks{Department of Computer Science. Email: \href{mailto:xmao@cs.utexas.edu}{xmao@cs.utexas.edu}}, Purnamrita Sarkar\thanks{Department of Statistics and Data Sciences. Email: \href{mailto:purna.sarkar@austin.utexas.edu}{purna.sarkar@austin.utexas.edu}}, Deepayan Chakrabarti\thanks{Department of Information, Risk, and Operations Management. Email: \href{mailto:deepay@utexas.edu}{deepay@utexas.edu}}\\ The University of Texas at Austin}
\date{}

\maketitle
\begin{abstract}
We consider the problem of estimating community memberships of nodes in a network, where every node is associated with a vector determining its degree of membership in each community.
Existing provably consistent algorithms often require strong assumptions about the population, are computationally expensive, and only provide an overall error bound for the whole community membership matrix.
This paper provides uniform rates of convergence for the inferred community membership vector of {\em each} node in a network generated from the Mixed Membership Stochastic Blockmodel (MMSB); to our knowledge, this is the first work to establish per-node rates for overlapping community detection in networks.
We achieve this by establishing sharp row-wise eigenvector deviation bounds for MMSB. 
Based on the simplex structure inherent in the eigen-decomposition of the population matrix, we build on established corner-finding algorithms from the optimization community to infer the community membership vectors.
Our results hold over a broad parameter regime where the average degree only grows poly-logarithmically with the number of nodes. 
Using experiments with simulated and real datasets, we show that our method achieves better error with lower variability over competing methods, and processes real world networks of  up to 100,000 nodes within tens of seconds.
\end{abstract}

\section{Introduction}
\label{sec:intro}
In most real-world networks, a node belongs to multiple communities. In an university, professors have joint appointments to multiple departments;  a movie like ``Dirty Harry'' in the Netflix recommendation network belongs to action, thriller, and the drama genre according to Google; in a book recommendation network like goodreads.com, ``To Kill a Mockingbird'' can be classified as a classic, historical fiction, young-adult fiction, etc.  
The goal of community detection is to consistently infer each node's community memberships from just the network structure.

A well-studied variant of this problem assumes that each node belongs to a single community.
For instance, under the Stochastic Blockmodel (SBM)~\citep{holland_stochastic_1983}, the probability of a link between two nodes depends only on their respective communities. 
Thus, provably consistent inference  under the Stochastic Blockmodel involves finding the unknown cluster membership of each node (see~\citep{lei2015consistency,rohe2011spectral,mcsherry2001spectral}) and these are not immediately applicable for the general problem where a node may belong to multiple communities to different degrees.

In this paper, we work with the popular Mixed Membership Stochastic Blockmodel (MMSB)~\citep{airoldi2008mixed}. This generalizes the Stochastic Blockmodel by letting each node $i$ have different degrees of membership in all communities. In particular, each node $i$ is associated with a community membership vector $\btheta_i\in \R^K$ ($\btheta_i \geq 0, \|\btheta_i\|_1 = 1$), drawn from a Dirichlet prior. The model for generating the symmetric adjacency matrix is as follows:
~\footnote{{Note that self-loops are allowed here for simplicity of analysis. Without them, the analysis gets cumbersome, leading to a negligible error term added to all our bounds, and we skip it for ease of exposition.}}
\begin{equation}\label{eq:mmsb}
\begin{aligned}
\btheta_i&\sim \text{Dirichlet} (\balpha) & \balpha\in \mathbb{R}_+^K,\quad i\in [n]\\
\bP&:=\rho \bTheta \bB \bTheta^T & \bA_{ij}=\bA_{ji}\sim \text{Bernoulli}(\bP_{ij})\quad i,j\in[n]
\end{aligned}
\end{equation}
The matrix $\bTheta$ has $\btheta_i^T$ as its $i^{\mathrm{th}}$ row.
For identifiability we assume $\max_{ij} \bB_{ij}=1$.
When $\bB$ has higher values on its diagonal as compared to the off-diagonal, edges are likely between nodes that have a high membership in the same community.  These are called assortative communities.
In contrast, in {disassortative} settings, off-diagonal elements are larger than diagonal elements.
Bipartite graphs are an extreme case of this.
The smallest singular value of $\bB$, denoted by $\lambda^*(\bB)$, is a measure of the separation between communities. 
A larger $\lambda^*(\bB)$ corresponds to more well-separated communities.
The parameter $\rho$ controls the the expected average degree of nodes $O(n\rho)$.
We allow both $\rho$ and  $\lambda^*(\bB)$ to go to zero with increasing number of nodes $n$.
The quantity $\alpha_0 = \sum_{a=1}^{K} \alpha_a$ controls the level of overlap between members of different communities.
As $\alpha_0 \rightarrow 0$, the MMSB model degenerates to the Stochastic Blockmodel.
The goal of community detection under the MMSB model is to recover $\bTheta$ and $\bB$ from the observed adjacency matrix $\bA$.

Prior work on this problem include MCMC~\citep{airoldi2008mixed} and computationally efficient variational approximation methods~\citep{gopalan2013efficient} (SVI) which do not have any guarantees of consistency. Other interesting network models for overlapping communities and non-negative matrix factorization style inference methods which do not have theoretical guarantees include ~\citep{Ball2011Overlapping,wang2011community, wang2016supervised,BNMF2011}. A notable family of algorithms that has been shown to be theoretically consistent uses tensor-based methods~\citep{MMSBAnandkumar2014,hopkins2017bayesian}. However, these are typically hard to implement, and provide overall error bounds for the columns of the estimated $\bTheta$ matrix.

Recently~\citet{mao2017} have proposed a provably consistent geometric algorithm (GeoNMF) for MMSB with diagonal $\bB$ and $\balpha=\alpha_0\bone_K/K$. However the guarantees only work in the dense regime where average degree grows faster than $\sqrt{n}$. In contrast we consider the general model where the only condition on $\bB$ is full rank. We propose a different algorithm which works when the  degree grows faster than poly-logarithm of $n$.        

\citet{zhang2014detecting} propose a provably consistent spectral algorithm (OCCAM) for a related but different model with degree correction. 
Similar to non-negative matrix factorization methods~\citep{arora12computing,mao2017}, the authors assume that 
each community has some ``pure'' nodes (which only belong to that community).
The authors also assume that $\bB$ is positive semidefinite and full rank with equal diagonal entries. Other assumptions ensure that the $k$-medians loss function on $\btheta_i$ attains its minimum at the locations of the pure nodes and there is a curvature around this minimum. This condition is typically hard to check.

Concurrent work~\cite{jin2017estimating} studies the degree corrected MMSB model, which extends the MMSB model by allowing degree heterogeneity. 
The authors  show an interesting fact that the top eigenvectors, normalized appropriately, still form a simplex. 
However, their proposed algorithm requires a combinatorial search step (SVS)\footnote{In the latest version of~\citep{jin2017estimating}, the authors have added other methods, and proved node-wise error bounds. But they note that among these methods, SVS performs the best. We compare against the newer bounds later in our paper.}, and has a complexity $O(n^{KL}+K^3L^{K+1})$ for some tuning parameter $L\geq K$.
This can be prohibitive for large $K$.
SVS is analyzed under three separate settings, a) $\btheta_i$ are sampled from a distribution on the simplex such that every cluster has $\Theta(n)$ pure nodes, and the non pure nodes are sufficiently separated from the pure ones; b) the $\btheta_i$'s are fixed, but form a few clusters, or c) the $\btheta_i$'s are fixed, and most nodes are pure nodes.

 Other notable examples of related but different models include ~\citep{ray2014overlap,kaufmann2016spectral}. In~\citep{ray2014overlap}, the authors show consistency when the overlap between clusters is small, whereas in ~\citep{kaufmann2016spectral},  a combinatorial algorithm (SAAC) is proposed for detecting overlapping communities for a related model.

In this paper, our contributions are as follows.
 
\smallskip\noindent
{\bf Identifiability:} We present both necessary and sufficient conditions for identifiability of the MMSB model in Sec~\ref{sec:proposed}.  
To our knowledge, we are the first to report both necessary and sufficient conditions for identifiability under the MMSB model.

\smallskip\noindent
{\bf Recovery algorithm:} As shown by many authors~\citep{mao2017,jin2017estimating,panov2017consistent},  the population eigenvectors (i.e., eigenvectors of the matrix $\bP$) form a  rotated and scaled simplex.  
We present an algorithm called \OurAlgo, which re-purposes an existing algorithm~\citep{gillis2014fast} for detecting corners in a rotated and scaled simplex to find pure nodes, and then uses these to infer $\bTheta$ and $\bB$.  It also includes a novel preprocessing step that improves performance in sparse settings. The main compute-intensive parts of the algorithm are a)  top-$K$ eigen-decomposition of $\bA$, b) calculating $k$-nearest neighbors of a point for preprocessing. There are highly optimized algorithms and data structures for both of these steps~\citep{Bentley:1975:kd,press92numerical,Beygelzimer:2006:covertree}.

\begin{table}[!t]
	\caption{\label{table:notation}Table of notations. $K$ leading eigenvectors of a matrix correspond to $K$ largest eigenvalues in magnitude.}
	\centering
	\scalebox{0.76}{
		\begin{tabular}{ | l | l ||l|l|}
			\hline
			\rule{0pt}{3ex}    
			$n$& Number of nodes & $K$&Number of communities\\[.5em]
			$\rho\bB\in [0,1]^{K\times K}$&  Community link probabilities ($\bB=\bB^T$)&
			$\balpha\in \R_+^{K\times 1}$& Dirichlet prior parameters
			\\[.5em]
			$\bTheta\in \R_+^{n\times K}$& Fractional community memberships
			& $\alpha_0$&$\sum_i\alpha_i$\\[.5em]
			$\alpha_{\min}$ ($\alpha_{\max}$)& $\min_{i\in [K]} \alpha_i$ ($\max_{i\in [K]} \alpha_i$)&$\bal$&$\alpha_0/\alpha_{\min}$\\[.5em]
			$\bA$ & Adjacency matrix & \bP& $\rho\bTheta\bB\bTheta^T$\\[.5em]
			$\rho$& Upper bound on $\bP_{ij}$ &$\bI_m$& $m\times m$ identity matrix\\[.5em]
			$\bE$& Diagonal matrix of $K$ largest  & $\hE$ & Diagonal matrix of $K$ largest  \\[.5em]
			& eigenvalues in magnitude of $\bP$ & & eigenvalues in magnitude of $\bA$\\[.5em]
			$\bV\in \R^{n\times K}$& $K$ leading eigenvectors of $\bP$  & $\hV\in \R^{n\times K}$ & $K$ leading eigenvectors of $\bA$\\[.5em]
			
			$\bV_P\in \R^{K\times K}$ & True $K$ pure node index rows of $\bV$ & $\lambda_K(\bM)$ & $K^{th}$ largest eigenvalue of $\bM$\\[.5em]
			$\bV_p\in \R^{K\times K}$ & Estimated $K$ pure node index rows of $\bV$ &
			$\lambda^*(\bM)$ & $K^{th}$ largest singular value of $\bM$\\[.5em]
			{$\kappa(\bM)$} & Condition number of matrix $\bM$ & $\lambda_i$ & $i^{th}$ largest eigenvalue of $\bP$\\[.5em]
			$\bpi\in \{0,1\}^{K\times K}$&Permutation matrix & $\hat{\lambda}_i$ & $i^{th}$ largest eigenvalue of $\bA$\\[.5em]
			$\bone_m$&All ones vector of length $m$  & $\be_i$& $\be_i(j)=1(i=j)$ \\[.5em]
			\hline
		\end{tabular}
	}
\end{table}
\smallskip\noindent
{\bf Node-wise error bound:} 
Some of the existing works on MMSB type models show consistency in terms of the deviation or correlation of $\hat{\bTheta}$ as a whole with respect to the truth~\cite{zhang2014detecting,kaufmann2016spectral,hopkins2017bayesian,jin2017estimating}. Others establish consistency of the deviation of columns of $\hat{\bTheta}$~\cite{MMSBAnandkumar2014} (soft memberships of all nodes to a particular community) from their population counterpart. 
In contrast, we obtain a uniform rate of convergence of \textit{each} cluster membership vector $\hat{\btheta}_i, i\in[n]$  to $\btheta_i$. 
To our knowledge this is the first work to establish uniform node-wise error bounds for an estimation algorithm for overlapping network models. 

\smallskip\noindent
{\bf Empirical validation:} In Sec~\ref{sec:exp}, we compare \OurAlgo with OCCAM, variational methods, SAAC and existing non-negative matrix factorization algorithms (GeoNMF, BSNMF) on both simulated and large real world networks with up-to 100,000 nodes.  

\section{Notations, Identifiability and Algorithms}
\label{sec:proposed}
Before presenting our results on identifiability we introduce some notations and assumptions. 
Let $[n]:=\{1,2,\cdots,n\}$. 
For any matrix $\bM$, {we use $\bM(i,:)$/$\bM(:,i)$, $\bM(S,:)$/$\bM(:,S)$ to denote the $i^{th}$ row/column of matrix $\bM$ and the submatrix formed by rows/columns in set $S$ of matrix $\bM$ respectively, and $S=i:j$ denotes the set of indices from $i$ to $j$.}
We use $\|\bM\|$ and $\|\bM\|_F$ to respectively denote the operator and Frobenius norms of a matrix $\bM$, and $\|\bv\|$ to denote the Euclidean norm of a vector $\bv$. 
We denote $[\bX|\bY]$ as the concatenation of columns of matrices $\bX$ and $\bY$.
We use $\tilde{O}$ and $\tilde{\Omega}$ to denote upper and lower bounds up to poly-logarithmic factors.
Finally we present a consolidated list of notations in Table~\ref{table:notation}.

We shall now provide necessary and sufficient conditions for the identifiability of the MMSB model with respect to  $\bTheta$ and $\bB$.
\subsection{Identifiability}
In this section, we obtain necessary and sufficient conditions for the identifiability of MMSB. In contrast, prior work~\citep{zhang2014detecting,mao2017,kaufmann2016spectral,panov2017consistent} typically establishes sufficient conditions. 
We defer the proofs of the theorems in this section to the Appendix (Sec~\ref{sec:identifiability}). 

Define a pure node as a node which belongs to exactly one community. All nodes in a Stochastic Blockmodel are pure nodes, since every node belongs to exactly one community.
Define a ``completely mixed'' node as a node $m$ such that $\theta_{mj}>0$ for all $j\in [K]$.

\begin{thm}\label{mmsb_iden_nece_not}
	Suppose there are $K$ communities, with at least one pure node for each community.
  Then,
	\begin{enumerate}
		\vspace{-0.3em}
		\item [(a)] If $\rank(\bB) = K$, then the MMSB model is identifiable up to a permutation.
		\vspace{-0.6em}
		\item [(b)] If $\rank(\bB) = K-1$, and no row of $\bB$ is an affine combination of the other rows of $\bB$, then the MMSB model is identifiable up to a permutation.
		\vspace{-0.6em}
		\item [(c)] In any other case, if there exists a completely mixed node, then the model is not identifiable.
	\end{enumerate}
\end{thm}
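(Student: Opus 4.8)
The plan is to treat each part via the geometry of the rows of $\bTheta$ and the column space of $\bP$. The key observation is that $\bP = \rho\,\bTheta\bB\bTheta^T$, so $\bP_i = \rho\,\btheta_i^T \bB\bTheta^T$; thus row $i$ of $\bP$ is the convex combination $\sum_k \theta_{ik}(\rho\,\be_k^T\bB\bTheta^T)$ of the $K$ ``pure'' rows $\rho\,\be_k^T\bB\bTheta^T$, which are exactly the rows of $\bP$ indexed by the pure nodes (one per community, guaranteed by hypothesis). So the rows of $\bP$ lie in the convex hull of $K$ points (the pure-node rows), and each $\btheta_i$ is recoverable from $\bP$ as the unique barycentric-coordinate vector \emph{provided} those $K$ pure-node rows are affinely independent. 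Two models $(\bTheta,\bB,\rho)$ and $(\bTheta',\bB',\rho')$ give the same $\bP$; I want to show that under (a) or (b) they agree up to a common permutation of the $K$ communities.

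For part (a): if $\rank(\bB)=K$, then $\bB$ is invertible, and since $\bTheta$ contains the $K\times K$ identity as a submatrix (the pure-node rows, after reordering), $\rho\bB\bTheta^T$ has rank $K$; hence its $K$ pure-node columns/rows are linearly — in particular affinely — independent, so $\bTheta$ is the unique matrix of barycentric coordinates of the rows of $\bP$ w.r.t.\ those $K$ extreme points, up to relabeling the extreme points (a permutation). Once $\bTheta$ is pinned down up to permutation, $\rho\bB$ is recovered from $\bP = \rho\bTheta\bB\bTheta^T$ by reading off the $K\times K$ submatrix of $\bP$ on pure-node rows and columns, and then $\rho$ is fixed by the normalization $\max_{ij}\bB_{ij}=1$. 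For part (b): $\rank(\bB)=K-1$, so the rows of $\bP$ no longer span a $K$-dimensional space, but I claim the $K$ pure-node rows of $\bP$ are still \emph{affinely} independent iff no row of $\bB$ is an affine combination of the others. Indeed, affine dependence $\sum_k c_k (\rho\,\be_k^T\bB\bTheta^T) = 0$ with $\sum_k c_k = 0$ (and not all $c_k=0$) is equivalent, since $\bTheta^T$ has full row rank $K$, to $\sum_k c_k \be_k^T\bB = 0$ with $\sum c_k = 0$, i.e.\ a row of $\bB$ being an affine combination of the others. Ruling that out gives affine independence of the $K$ extreme points, and the same barycentric-coordinate argument as in (a) recovers $\bTheta$ up to permutation; $\rho\bB$ and then $\rho$ follow as before.

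For part (c): suppose we are in neither case (a) nor (b), so either $\rank(\bB)\le K-1$ \emph{and} some row of $\bB$ is an affine combination of the others (this includes $\rank(\bB)=K-1$ with an affine dependence, and all $\rank(\bB)\le K-2$, where such an affine dependence always exists — any $K$ points in a space of affine dimension $\le K-2$ are affinely dependent). Then the $K$ pure-node rows of $\bP$ are affinely \emph{dependent}, so some pure-node row lies in the affine hull of the others. Given a mixed node $m$ (with $\theta_{mj}>0$ for all $j$), I will construct a genuinely different membership matrix $\bTheta'$ yielding the same $\bP$: pick the affine relation $\sum_k c_k (\text{pure-row}_k) = 0$, $\sum c_k = 0$, $c\ne 0$, scale it small enough, and add $\epsilon\,c$ (suitably redistributed using node $m$'s strictly positive slack) to the barycentric coordinates of some nodes so that all rows stay in the simplex and row sums stay $1$, while the resulting rows of $\rho\bTheta'\bB\bTheta^T$ are unchanged because we moved only along a direction that the map kills. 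Concretely it is cleanest to perturb $\btheta_m$ itself plus compensate at a pure node, or to perturb two nodes' membership vectors along $\pm c$; the strict positivity at the mixed node gives the room to stay feasible, and this $\bTheta'$ is not a permutation of $\bTheta$, so identifiability fails.

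The main obstacle I anticipate is part (c): making the perturbation argument fully rigorous requires care that the modified $\bTheta'$ is still a valid membership matrix (nonnegative, rows summing to one, and — if we want to stay inside the model's support — with the same pure nodes so the hypotheses of the theorem are literally preserved), and that $\bTheta'$ is not obtainable from $\bTheta$ by any permutation; handling the general-rank case uniformly (rather than only $\rank=K-1$) and checking that an affine dependence among the pure rows of $\bP$ is truly equivalent to one among the rows of $\bB$ (using $\rank(\bTheta)=K$) are the delicate points. Parts (a) and (b) are comparatively routine linear algebra once the convex-geometry picture is set up.
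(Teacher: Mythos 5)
Your proposal is correct and rests on the same geometric picture as the paper's proof --- rows of a $\bTheta$-image lying in the convex hull of $K$ pure-node rows, unique barycentric coordinates when those vertices are affinely independent, and a null-direction perturbation of a mixed node for non-identifiability --- but the route differs in two respects. First, you work directly with the rows of $\bP=\rho\bTheta\bB\bTheta^T$, whereas the paper first passes to the top-$K$ eigenvectors via $\bV=\bTheta\bV_P$ (Lemma~\ref{lem:v-theta]}) and argues about $\conv(\bV_P)$; the two are equivalent since $\rho\bB\bTheta^T$ and $\bV_P$ are both full-row-rank images of the same simplex, and your version avoids the eigendecomposition entirely. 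Second, you handle (a) and (b) uniformly: affine independence of the pure-node rows of $\bP$ is equivalent (using $\rank(\bTheta)=K$ and symmetry of $\bB$) to the nonexistence of $c\neq 0$ with $\bB c=0$ and $\bone_K^Tc=0$, i.e.\ to no row of $\bB$ being an affine combination of the others; affine independence alone then gives unique barycentric coordinates. The paper instead proves (b) via a block decomposition of $\bB$ with a matrix $\bW$ and extracts $\bTheta'=\bTheta$ from the row-sum constraint, so your treatment is arguably cleaner and makes the dichotomy between (b) and (c) transparent. Two points you should tighten. (i) In (a)/(b) you must say explicitly that the vertex set is determined by $\bP$ alone --- both models' pure-node rows are the extreme points of the convex hull of the rows of $\bP$, which is the double-inclusion step $\conv(\bV_P)\subseteq\conv(\bV_{\mathcal{I}})\subseteq\conv(\bV_P)$ that the paper spells out --- before comparing barycentric coordinates across the two parametrizations. (ii) In (c) your sketch hedges about ``compensating at another node,'' but no compensation is needed: with $\bTheta'=\bTheta+\epsilon\,\be_m c^T$ where $\bB c=0$ and $\bone_K^Tc=0$, symmetry of $\bB$ kills both cross terms and the quadratic term, so $\bTheta'\bB\bTheta'^T=\bTheta\bB\bTheta^T$ exactly; strict positivity of $\btheta_m$ gives feasibility for small $\epsilon$, and $\bTheta'$ cannot be a column permutation of $\bTheta$ since the pure-node rows are untouched. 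This is precisely the paper's construction with $c$ playing the role of $\left[-\bW^T\mid\bI_\ell\right]^T\bbeta$.
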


\begin{thm}\label{mmsb_iden_nece}
	Suppose that $\rho \bB_{ij} \in (0,1)$ for all $i,j\in[K]$. MMSB is identifiable up to a permutation only if there is at least one pure node for each of the $K$ communities.
\end{thm}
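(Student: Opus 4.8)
The plan is to prove the contrapositive: if some community---say community $K$ after relabelling---has no pure node, then the model is not identifiable, i.e.\ one can exhibit a second valid MMSB parameter set $(\bTheta',\bB',\rho')$, not equal to $(\bTheta,\bB,\rho)$ up to a community permutation, with $\rho'\bTheta'\bB'\bTheta'^T=\bP$. Write $\bC:=\rho\bB$, so $\bP=\bTheta\bC\bTheta^T$; the normalization $\max_{ij}\bB_{ij}=1$ together with the hypothesis $\rho\bB_{ij}\in(0,1)$ says that $\bC$ is symmetric, has \emph{every} entry strictly inside $(0,1)$, and $\max_{ij}\bC_{ij}=\rho<1$. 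Since there are finitely many nodes and none is pure for community $K$, there is genuine slack: $\delta:=1-\max_i\theta_{iK}>0$, so no row of $\bTheta$ sits at the vertex $\be_K$ of the simplex. This is the only input of the ``no pure node for community $K$'' hypothesis.

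The construction perturbs the $K$-th vertex of the membership simplex. Fix a small $\epsilon>0$ and a vector $\bw$ with $\bone_K^T\bw=1$, and set $\bv_K:=(1-\epsilon)\be_K+\epsilon\bw$; this moves the vertex attached to community $K$ while leaving $\be_1,\dots,\be_{K-1}$ in place. Let $\bN\in\R^{K\times K}$ be the matrix whose first $K-1$ rows are $\be_1^T,\dots,\be_{K-1}^T$ and whose $K$-th row is $\bv_K^T$; then $\bN\bone_K=\bone_K$ and $\bN$ is invertible. Put $\bTheta':=\bTheta\bN^{-1}$, $\bC':=\bN\bC\bN^T$, $\rho':=\max_{ij}\bC'_{ij}$, $\bB':=\bC'/\rho'$. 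Then $\rho'\bTheta'\bB'\bTheta'^T=\bTheta\bN^{-1}\bC'\bN^{-T}\bTheta^T=\bTheta\bC\bTheta^T=\bP$, identically in $\epsilon$. It remains to check three points. (i) \emph{$\bTheta'$ is a valid membership matrix:} its $i$-th row $\btheta_i^T\bN^{-1}$ is the vector of barycentric coordinates of $\btheta_i$ with respect to $\be_1,\dots,\be_{K-1},\bv_K$; because $\bN\bone_K=\bone_K$ these coordinates sum to $1$ automatically, and they are nonnegative exactly when $\btheta_i\in\conv(\be_1,\dots,\be_{K-1},\bv_K)$. The slack $\delta>0$ makes this achievable for small $\epsilon$: one shrinks the vertex ``inward'' (take $\bw$ a probability vector on $\{1,\dots,K-1\}$) unless some node has zero mass in a community $j<K$ but positive mass in community $K$, in which case one instead nudges $\bv_K$ slightly \emph{past} $\be_K$ (a $\bw$ with one negative coordinate), which keeps every $\btheta_i$ inside the new simplex for all small $\epsilon$. (ii) \emph{$\bB'$ is a valid community matrix:} it is symmetric, and since $\bN\to\bI$ as $\epsilon\to0$ while every entry of $\bC$ lies strictly in $(0,1)$, for $\epsilon$ small every entry of $\bC'=\bN\bC\bN^T$ still lies in $(0,1)$; hence $\bB'=\bC'/\rho'$ is entrywise in $[0,1]$ with $\max_{ij}\bB'_{ij}=1$ and density parameter $\rho'=\max_{ij}\bC'_{ij}\in(0,1)$. (iii) \emph{The two parameterizations differ:} for $\epsilon\ne0$, $\bN^{-1}$ is not a permutation matrix, so in the non-degenerate case where $\bTheta$ has full column rank, $\bTheta'=\bTheta\bN^{-1}$ cannot be a column permutation of $\bTheta$; together with (i)--(ii) this produces the required second model, so the MMSB is not identifiable.

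The heart of the argument is step (ii): after the perturbation $\bB'$ must still satisfy \emph{all} MMSB constraints, and while symmetry survives for free, the entrywise bound $[0,1]$ and the normalization $\max_{ij}\bB'_{ij}=1$ do not; the strict inequalities $\rho\bB_{ij}\in(0,1)$---uniform over the finite index set---are precisely what provide an open neighbourhood of $\bI$ inside which every admissible $\bN$ keeps $\bN\bC\bN^T$ legal, and this is where that hypothesis does its work. The secondary delicate point is the inward/outward choice of $\bw$ in step (i), which the slack $\delta=1-\max_i\theta_{iK}>0$ is there to handle. Finally, the genuinely degenerate cases---$\bTheta$ or $\bB$ rank-deficient, or community $K$ carrying no mass at all---make the model non-identifiable for even simpler reasons (an underdetermined $\bB$, respectively spurious zero eigenvalues) and are disposed of separately.
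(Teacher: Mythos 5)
Your overall device --- replacing $(\bTheta,\bB)$ by $(\bTheta\bM,\,\bM^{-1}\bB\bM^{-T})$ for an invertible $\bM$ with $\bM\bone_K=\bone_K$ that tends to $\bI_K$, and using the open condition $\rho\bB_{ij}\in(0,1)$ to keep the transformed $\bB$ inside $(0,1)$ --- is exactly the paper's, and your points (ii) and (iii) match the paper's treatment. The gap is in step (i), i.e.\ in the choice of perturbation. Moving only the vertex of the deficient community while fixing $\be_1,\dots,\be_{K-1}$ cannot be made to work as described. With the inward choice ($\bw\geq 0$ supported on the first $K-1$ coordinates), the new coordinate of node $i$ in community $j<K$ is $\theta_{ij}-\epsilon w_j\theta_{iK}/(1-\epsilon)$, which is negative for any node with $\theta_{ij}=0<\theta_{iK}$ and $w_j>0$; the slack $\delta=1-\max_i\theta_{iK}$ gives no control over such boundary nodes, and your patch of making one coordinate of $\bw$ negative fails as soon as every $j<K$ admits such a node, since $\bone_K^T\bw=1$ then forces some $w_j>0$ and the corresponding coordinate goes negative.

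With the genuinely outward choice ($\bv_K=(1+\epsilon)\be_K-\epsilon\bu$ for a probability vector $\bu$ on the first $K-1$ coordinates), every row of $\bTheta\bN^{-1}$ is automatically a valid probability vector because $\bN^{-1}$ is then entrywise nonnegative with rows summing to one --- but for precisely that reason the construction never uses the hypothesis that community $K$ has no pure node: it would manufacture a second valid parameterization even when every community has a pure node and $\bB$ is full rank, in direct tension with Theorem~\ref{mmsb_iden_nece_not}(a). So the outward route proves something that cannot be the intended content of the theorem, while the inward route has a hole. The paper's $\bM$ avoids both problems by perturbing at two scales: it pushes the deficient vertex out of the simplex by $O(\epsilon^2)$, so that nonnegativity of $\bTheta\bM$ genuinely requires $\theta_{i1}\leq 1-\delta$ for every $i$ (this is where the no-pure-node hypothesis enters and where the construction would fail if a pure node existed), while simultaneously mixing the remaining $K-1$ vertices toward one another by $O(\epsilon)$ via the block $\epsilon\bone_{K-1}\bone_{K-1}^T+(1-(K-1)\epsilon)\bI_{K-1}$. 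That $O(\epsilon)$ mixing contributes $\epsilon\sum_{\ell\geq 2}\theta_{i\ell}\geq\epsilon\delta$ to every coordinate $j\geq 2$, which dominates the $O(\epsilon^2)$ negative term even for nodes with $\theta_{ij}=0$. Some such compensation mechanism for boundary nodes is what your single-vertex perturbation is missing.
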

The above theorems show that the existence of pure nodes is necessary in most practical scenarios.

\subsection{Algorithm}
We do inference for the MMSB model under the following assumption, which is sufficient for identifiability.
\begin{assumption}\label{as:ident}
$\bB\in\R^{K\times K}$ is full rank, and there is at least one pure node for each of the $K$ communities. 
\end{assumption}
{Since the Dirichlet distribution does not give rise to pure nodes, we assume that the set $\{\btheta_i,i\in[n]\}$ includes one pure nodes from each cluster in addition to $n-K$ vectors drawn from a Dirichlet. 
The addition of one pure node per cluster to the standard Dirichlet draws does not affect the analysis and we ignore this for ease of exposition. }

We will now discuss our inference algorithm, whose consistency results are presented in  Sec~\ref{sec:main}.
Let $\bP=\bV \bE \bV^T$ be the top-$K$ eigendecomposition of $\bP$. 
We proceed from a simple observation that the population eigenvectors lie on a rotated and scaled simplex, as shown next. The following lemma is the starting point of most existing analysis for Stochastic Blockmodels, and different variants of this have been observed independently by a number of other researchers~\citep{panov2017consistent,jin2017estimating,mao2017}.
 
 \begin{lem} \label{lem:v-theta}
 	Let $\bV$ be the top $K$ eigenvectors of $\bP$. Then, under Assumption~\ref{as:ident}, $\bV = \bTheta\bV_P$, where $\bV_P = \bV(\mathcal{I},:)$ {is full rank} and $\mathcal{I}$ is the indices of rows corresponding to $K$ pure nodes, one from each community.
 \end{lem}

\begin{proof}
	W.L.O.G., reorder the nodes so that $\bTheta(\mathcal{I},:) = \bI$.
	Then, $\bV_P \bE \bV_P^T = \bP(\mathcal{I}, \mathcal{I}) = \rho\bB$, so $\bV_P\in\R^{K\times K}$ is full rank.
	Now, observe that
	$\bV_P \bE \bV^T = \bP(\mathcal{I},:) = \rho \bTheta(\mathcal{I},:~) \bB \bTheta^T= \rho \bB \bTheta^T.$
	Hence,
	$\bV = \bP \bV \bE^{-1} = \rho \bTheta \bB \bTheta^T \bV \bE^{-1} = \bTheta \bV_P \bE \bV^T\bV \bE^{-1}=\bTheta \bV_P.$
\end{proof}
\begin{figure}[!t]
	\begin{minipage}[t]{0.47\linewidth}
		\vspace{-0.8em}
		\begin{algorithm}[H]
			\caption{\OurAlgo}
			\label{alg:nmf-mmsb-pure-res}
			\begin{algorithmic}[1]
				\REQUIRE Adjacency matrix $\bA$, number of clusters $K$
				\ENSURE  $\hat{\bTheta}$, $\hat{\bB}$, $\hat{\rho}$.
				\STATE Get the top-$K$ eigen-decomposition of
				$\bA$ as $\hat{\bV}\hat{\bE}\hat{\bV}^T$. 
				\STATE $S=\mbox{Prune}(\hv,10,.75,.95)$
				\STATE $\hx=\hv([n]\setminus S,:)$
				\STATE $\cSp=\mathrm{SPA}(\hx^T)$
				\STATE $\hxp = \hx(\cSp,:)$
				\STATE $\hat{\bTheta}=\hat{\bV}\hxp^{-1}$.
				\STATE $\hat{\bTheta} = \diag(\hat{\bTheta}_+ \bone_K)^{-1}\hat{\bTheta}_+$
				\STATE $\hat{\bB} = \hxp\hat{\bE}\hxp^T$
				\STATE $\hat{\rho} = \max_{i,j} \hat{\bB}_{ij}$. $\hat{\bB} = \hat{\bB} / \hat{\rho}$
			\end{algorithmic}
		\end{algorithm}
	\end{minipage}\hspace{1em}
\begin{minipage}[t]{0.51\linewidth}
	\vspace{-0.8em}
	\begin{algorithm}[H]
		\caption{Prune}
		\label{alg:prune}
		\begin{algorithmic}[1]
			\REQUIRE Empirical eigenvectors $\hat{\bV}\in \R^{n\times K}$, an integer $r$, and two numbers ${q,\varepsilon\in(0,1)}$. 
			\ENSURE Set $S$ of nodes to be pruned.
			\vspace{0.02in}
			\FOR{{$i\in n$}}
			\STATE $v_i=\|\be_i^T\hv\|$
			\ENDFOR
			\STATE $S_0=\{i:\|\be_i^T\hat{\bV}\|\geq \mathrm{quantile}(\bv,q)\}$
			\FOR{{$i\in S_0$}}
			\STATE $d_i:=$\{\mbox{Dist. to $r$  nearest neighbors}\}
			\STATE $x_i=\sum_j d_{ij}/r$
			\ENDFOR
			\STATE $S=\{i:x_i\geq \mathrm{quantile}(x,1-\varepsilon)\}$
			\vspace{0.005in} 
		\end{algorithmic}
	\end{algorithm} 
\end{minipage}
\end{figure}

Lemma~\ref{lem:v-theta} establishes that the corners of the simplex have the highest norm.
This allows us to find the pure nodes using existing corner-finding methods such as the successive projection algorithm (SPA)~\citep{gillis2014fast}.

Our algorithm, called ``Sequential Projection After CLeaning'' (\OurAlgo, Algorithm~\ref{alg:nmf-mmsb-pure-res}) applies SPA after a preprocessing step that prunes away noisy high-norm points.
SPA first finds the node with the maximum row norm of empirical eigenvector matrix $\hat{\bV}$. This node is added to the set of pure nodes. Then, all remaining rows of $\hat{\bV}$ are projected on to the subspace that is orthogonal to the span of the pure nodes. The process is repeated for $K$ iterations, and yields a set of $K$ pure nodes, one from each community. 
With the pure nodes in hand, \OurAlgo estimates $\bTheta$ and $\bB$ using Lemma~\ref{lem:v-theta}.
We will show that these estimates are consistent up to a permutation (Theorem~\ref{thm:theta-B}).

If we had access to the population eigenvectors $\bV$, SPA would return the true pure nodes. 
However, in reality we only observe the empirical eigenvectors, which are noisy versions of the population eigenvectors. 
So there can be spurious nodes with row norm larger than those of the ``pure'' nodes. 
As the graph gets sparser, the empirical points deviate more from the population simplex.
This motivates the pruning step of \OurAlgo.
The main idea of pruning is to identify and remove the nodes which are far away from the population simplex. 
Algorithm~\ref{alg:prune} finds these by first finding contenders of pure nodes, i.e., nodes $i$ whose eigenvector rows $\hat{\bV}_i:=\be_i^T\hat{\bV}$ have large norm. 
Among these, it prunes nodes which do not have too many nearest neighbors, or in other words, have larger average distance to their nearest neighbors in comparison to others.
The removal of these nodes improves the performance of SPA on sparse networks.

\begin{figure}[!t]
	\centering
	\begin{tabular}{@{\hspace{0em}}c@{\hspace{0em}}c@{\hspace{0em}}}
		\includegraphics[width=0.4\textwidth]{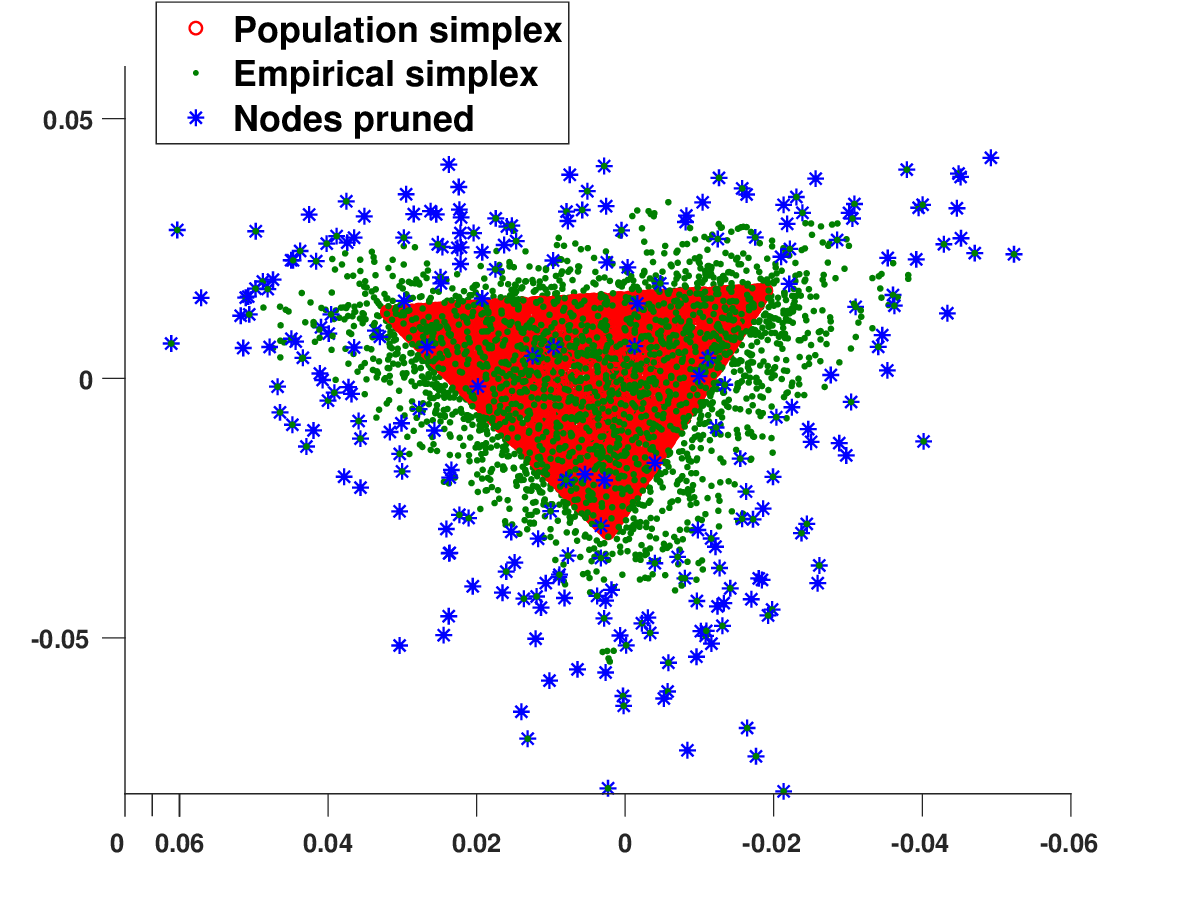}	\quad&
		\includegraphics[width=0.4\textwidth]{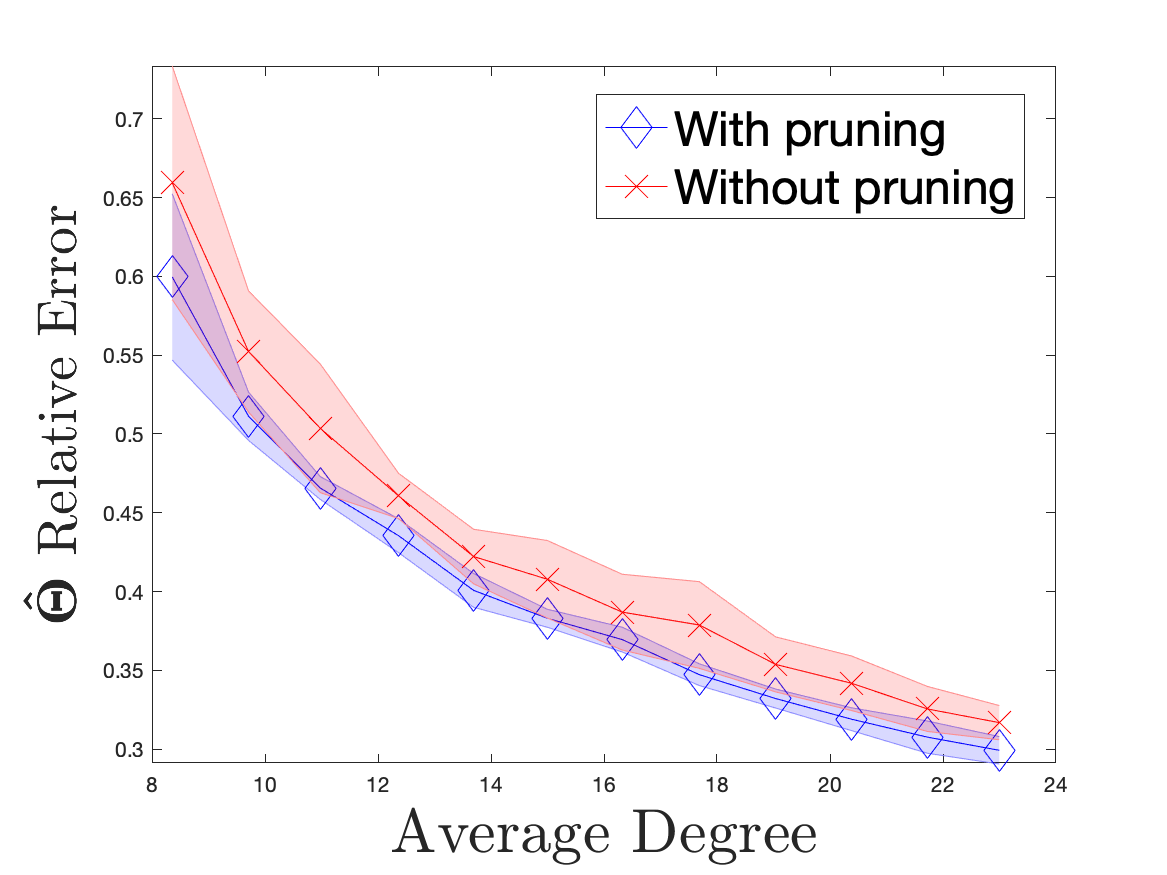}
		\\
		{\small(A)}&{\small(B)}
	\end{tabular}
	\caption{\label{fig:pruning}
		MMSB model with $n=5000$, $\balpha=(0.4,0.4,0.4)$, $\bB=(1-q)\bI_3+q\bone_3\bone_3^T$ with $q=0.001$. (A) Nodes picked out by Pruning with $\rho=0.007$. (B) Effect of pruning on estimating $\hat{\bTheta}$ {(relative error defined in Sec~\ref{exp:sim})}.  }
\end{figure}

{Fig~\ref{fig:pruning} (A) shows the benefits of pruning on a simulated network.
After pruning, the remaining nodes are closer to the population simplex.
This leads to better estimation.  
Fig~\ref{fig:pruning} (B) varies $\rho$ from 0.0050 to 0.0138 leading to average degrees increasing from 8 to 23, and shows the effect of pruning (blue $\Diamond$) over not pruning (red $\times$) on the relative estimation error of $\bTheta$. 
A more detailed discussion on pruning can be found in the Appendix (Sec~\ref{sec:prune_works}). 
}

\section{Main results}
\label{sec:main}
We want to prove that the sample-based estimates $\hat{\bTheta}$, $\hat{\bB}$ and $\hat{\rho}$ concentrate around their population counterparts, respectively, $\bTheta$, $\bB$, and $\rho$. 
By Lemma~\ref{lem:v-theta}, this requires concentration of the rows of the empirical eigenvector matrix $\hat{\bV}$ to the population counterpart $\bV$.
Existing techniques like the Davis-Kahan Theorem~\citep{yu2015useful} only provide convergence in the Frobenius norm $\|\bV-\hat{\bV}\bO\|_F$ (for some rotation matrix $\bO$) or the operator norm $\|\bV\bV^T-\hat{\bV}\hat{\bV}^T\|$.
These lead to loose bounds on the rows of $\hat{\bV}$. 
Other existing techniques~\citep{BalaSpec2011, Athreya2016,mao2017} can be applied to show that rows of $\hat{\bV}$ have $\tilde{O}_P(1/\sqrt{n\rho^2})$ relative error, but this is only meaningful when the degree grows faster than square root of $n$, i.e. the dense degree regime. 
We show that, under a broad parameter regime, the suitably defined relative deviation of any row of $\hat{\bV}$ from its population counterpart, converges to zero when average degree only grows faster than the poly-logarithm of $n$.

In Section~\ref{sec:main:evec}, we show the row-wise eigenspace error bound in terms of eigenvalues of $\bTheta^T\bTheta$.
In Section~\ref{sec:main:modelparams}, we translate the eigenspace bounds into error bounds on estimated $\hat{\bTheta}$ and $\hat{\bB}$ matrices.
Then, in Section~\ref{sec:dirichlet}, we provide detailed results when the rows of $\bTheta$ are drawn i.i.d from a Dirichlet distribution.
Throughout, we compare our bounds to other bounds in concurrent results.
We also discuss the implications of our results for specific models like the Stochastic Blockmodel.

\subsection{Row-wise eigenvector error bounds}
\label{sec:main:evec}
\begin{assumption}\label{as:theta_P}
	Assume $\rho n =\Omega(\log n)$, $\lambda_K(\bTheta^T\bTheta)\geq1/\rho$, and $\lambda^*(\bP)\geq 4\sqrt{n\rho}(\log n)^{\xi}$ for some constant $\xi>1$.
\end{assumption}

\begin{thm}[Row-wise eigenspace error] 
	\label{thm:entrywise}
	If Assumptions~\ref{as:ident} and~\ref{as:theta_P} are satisfied, 
	then 
	with probability at least $1-O(Kn^{-2})$, 
	\ba{\label{eq:eigen_row_thm}
		\max_{i\in[n]} \left\| \be_i^T( \hat{\bV}\hat{\bV}^T-\bV  \bV^T) \right\|&=\eigenspacerowwise,
	}
	where $\gammaP$  measures how well the eigenvalues of $\bP$ can be packed into bins. 
	The precise definition is deferred to Eq~\eqref{eq:gammap}, Sec~\ref{sec:analysis} for ease of exposition.
\end{thm}
Later, we will show that $\gammaP\leq 2\min\{K,\kappa(\bP)\}^2$ in the worst case.
But $\gammaP=O(1)$ if the eigenvalues of $\bP$ can be divided into a constant number of bins where each bin has eigenvalues of the same order.

\begin{rem}[Generalizing to low rank population matrices]
In the Appendix (Sec~\ref{sec:generalP}), we also establish a similar eigenvector deviation result for networks generated from general low rank population matrices. 
\end{rem}
\begin{rem}[Row-wise eigenvector error]
	Note that the above row-wise error immediately gives us an error bound on rows of $\hV$,
	\bas{
\|\be_i^T(\hV-\bV(\bV^T\hV))\|= \|\be_i^T(\hV\hv^T-\bV\bV^T)\hV\|\leq \|\be_i^T(\hV\hv^T-\bV\bV^T)\|.
}
The $K\times K$ matrix $\bV^T\hV$ takes out the projection of $\bV$ on $\hV$ from $\hV$. {Note that while we use $\bV^T\hV$ to align $\bV$ and $\hV$, most existing literature uses its matrix sign function~\citep{abbe2017entrywise,cape2019two}. An detailed example can be found in Lemma~\ref{lem:Abbe} in the Appendix.}
\end{rem}

The proof of Theorem~\ref{thm:entrywise} can be found in Sec~\ref{sec:analysis}. A key element in the proof is the delocalization of population eigenvectors.
\begin{lem}[Delocalization of population eigenvectors] \label{lem:v-row-norm} 
	We have that, $\max_i \| \be_i^T\bV \|^2\leq 1/\lambda_K(\bTheta
	^T\bTheta)$ and $\min_i\| \be_i^T\bV \|^2
	\geq 1/(K\lambda_1(\bTheta
	^T\bTheta))$.
\end{lem}

We defer the proof to Appendix (Sec~\ref{sec:auxiliary}).
Using this, we can prove the following.
\begin{cor}[Row-wise relative convergence] 
	\label{cor:entrywise}
	If Assumption~\ref{as:theta_P} is satisfied, and furthermore,
	$\lambda_K(\bTheta^T\bTheta)=\Omega(n/K)$, $K=\Theta(1)$ and $\lambda^*(\bB)=\Omega(1)$, then 
	\bas{
		\max_{i\in[n]} \frac{\left\| \be_i^T( \hat{\bV}\hat{\bV}^T-\bV  \bV^T) \right\|}{\|\be_i^T\bV\bV^T\|}&=\tilde{O}\bbb{\frac{1}{\sqrt{n\rho}}}
	}
	with probability at least $1-O(Kn^{-2})$.
\end{cor}

In concurrent work on MMSB models~\citep{panov2017consistent}, analysis of empirical eigenvectors  yields a suboptimal $\tilde{O}_P(1/\sqrt{n\rho^2})$ rate on the Frobenius norm of the overall deviation of the whole community membership matrix from its population counterpart, thereby proving consistency only in the regime where average degree grows faster than square root of $n$, not poly-logarithm of $n$. While concurrent developments on entry-wise eigenvector analysis~\citep{abbe2017entrywise,eldridge2017unperturbed,cape2018signal} obtain the better $\tilde{O}_P(1/\sqrt{n\rho})$ rate, they either have a relatively worse dependence on $\lambda^*(\bB)$ or implicitly assume that the population eigenvalues are of the same order. In~\citep{cape2018signal}, the authors assume that $K$ grows slower than poly-log of $n$.
We show that the row-wise eigenvector bounds in~\citep{abbe2017entrywise} yield a worse dependence of $\lambda^*(\bB)$ than ours in the Appendix  (Sec~\ref{sec:supp-abbe}).  
We achieve this better dependence on $\lambda^*(\bB)$ by a new construction in which we consider groups of population eigenvalues lying within specially constructed intervals, such that the ratio of the largest and smallest eigenvalues within any interval is controlled. 
Note that, if the population eigenvalues are of the same order, average expected degree in~\citep{abbe2017entrywise} can be a constant times $\log n$, whereas we require it to grow faster than $\log^2 n$. 

{We can show that our bound is tighter by an order of $1/\sqrt{n\rho}$ than a direct application of the concentration bounds for general singular subspaces established in~\citep{cape2019two} to the MMSB model. While it is possible to improve this bound by using our theoretical results and careful analysis similar to that of the $\rho$-correlated SBM graphs in~\citep{cape2019two}, even then, our row-wise eigenspace error bound is tighter by a factor of $\sqrt{\rho}$ under a broad parameter regime, a detailed discussion of which is deferred to Sec~\ref{sec:supp-cape} of the Appendix along with derivations.}

So far we have talked about row-wise bounds on empirical eigenspaces. But it seems cumbersome to apply our algorithms on the $n\times n$ $\vh\vh^T$
matrix. The following simple result shows that our algorithms return the same set of pure nodes using $\hv$ and $\hv\hv^T$ (proof in Sec~\ref{sec:inferred_proof} of the Appendix). Thus, for the algorithm we simply use $\hv$.
\begin{lem}\label{lem:vvt-v-equiv}
	The pruning algorithm (Algorithm~\ref{alg:prune}) and the SPA algorithm will return the same node indices on both $\hv$ ($\hv^T$ for SPA) and $\hv\hv^T$.
\end{lem}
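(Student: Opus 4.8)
The plan is to show that both algorithms depend on the input only through quantities that are identical whether we feed in $\hv$ or $\hv\hv^T$. The key observation is that the columns of $\hv$ are orthonormal, so $\hv^T\hv = \bI_K$, and hence for any node $i$ the corresponding row satisfies $\|\be_i^T(\hv\hv^T)\| = \|\be_i^T\hv\hv^T\| = \|\be_i^T\hv\| \cdot \|\hv^T\|$-type identity; more precisely $\|\be_i^T\hv\hv^T\|^2 = \be_i^T\hv\hv^T\hv\hv^T\be_i = \be_i^T\hv\hv^T\be_i = \|\be_i^T\hv\|^2$. Thus the per-node norms $v_i$ used in Algorithm~\ref{alg:prune} are the same in both cases, so the quantile threshold and the candidate set $S_0$ are identical. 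Similarly, for any two nodes $i,j$, the Euclidean distance between rows is preserved: $\|\be_i^T\hv\hv^T - \be_j^T\hv\hv^T\|^2 = (\be_i-\be_j)^T\hv\hv^T\hv\hv^T(\be_i-\be_j) = (\be_i-\be_j)^T\hv\hv^T(\be_i-\be_j) = \|\be_i^T\hv-\be_j^T\hv\|^2$. Hence the $r$ nearest neighbors of each node, and the averaged distances $x_i$, are unchanged, so the pruned set $S$ returned by Algorithm~\ref{alg:prune} is identical.

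For SPA, I would argue by induction on the $K$ iterations that the set of selected column indices is the same when the algorithm is run on $\hx^T$ (columns indexed by retained nodes) versus on $(\hx\hx^T)$-type input. Writing the retained block of eigenvectors as a matrix $\bX$ with orthonormal-in-the-appropriate-sense structure inherited from $\hv$, SPA at each step (i) picks the index of the column of largest Euclidean norm, and (ii) projects all columns onto the orthogonal complement of the span of the already-chosen columns. Step (i) is governed by column norms, which by the identity above coincide for the two inputs; step (ii) is governed by inner products among columns (the Gram matrix), which likewise coincide since $\hv\hv^T\hv\hv^T = \hv\hv^T$ means the Gram matrix of the rows of $\hv\hv^T$ equals that of the rows of $\hv$. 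Since both the selection rule and the projection are determined entirely by this Gram matrix, the sequence of indices produced is identical.

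The crux of the whole argument is the single algebraic fact $\hv^T\hv = \bI_K$ (orthonormality of empirical eigenvectors), which makes $\hv\hv^T$ an orthogonal projector and therefore makes the row Gram matrix idempotent and equal to itself composed once more; everything else is bookkeeping. I do not expect a genuine obstacle here — the only mild care needed is to verify that SPA's internal projections, when expressed in terms of the original ambient coordinates, really are functions of the Gram matrix alone (so that replacing $\bX$ by $\bX\bX^T$ changes nothing), and to note that ties in the $\arg\max$ are broken consistently. Both are immediate from the structure of the algorithm.
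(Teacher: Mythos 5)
Your proposal is correct and follows essentially the same route as the paper: both arguments rest on the single fact that $\hv^T\hv=\bI_K$ makes $\hv\hv^T$ an orthogonal projector, so all row norms, pairwise distances, and inner products coincide for the two inputs, and the SPA selection is then handled by induction over its $K$ steps. The only cosmetic difference is that you package the SPA induction as invariance of the Gram matrix under the update rule, whereas the paper tracks the orthogonal projections explicitly via an SVD of the already-selected columns; the two verifications are equivalent.
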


\subsection{Consistency of estimated quantities}
\label{sec:main:modelparams}
We now use our row-wise eigenspace error bounds to analyze Algorithm~\ref{alg:nmf-mmsb-pure-res}.
We do not analyze the pruning algorithm (Algorithm~\ref{alg:prune}), since that requires distributional assumptions on the row-wise eigenvector errors.
We need the following assumption.
\begin{assumption}\label{as:spa}
Assume
	 $\additionalone$. 
\end{assumption}

\begin{thm}
	\label{thm:theta-B}
	Let $\hat{\bTheta}$ be obtained from Step 6 of Algorithm~\ref{alg:nmf-mmsb-pure-res}. 
  We denote the row-wise eigenspace error from Theorem~\ref{thm:entrywise} as follows:
  \bas{
  	\epsilon=\eigenspacerowwise.
	}
  If Assumptions~\ref{as:ident},~\ref{as:theta_P}, and~\ref{as:spa} hold, there exists a permutation matrix $\bpi$ such that with probability at least $1-O(K/n^2)$, 
	\ba{
		\max_{i\in[n]}{\left\|\be_i^T\bbb{\hat{\bTheta}-\bTheta\bpi}\right\| }
    &= O\bbb{\sqrt{\lambda_1({\bTheta^T \bTheta})}\kappa({\bTheta^T\bTheta})\epsilon},
		\label{eq:theta-err}\\
		\frac{1}{\rho}{\|\hat{\rho}\hat{\bB}-\rho\bpi^T\bB\bpi\|_F}
    &= O\bbb{\frac{ \kappa({\bTheta^T\bTheta})\sqrt{K} n}{\sqrt{\lambda_K({\bTheta^T\bTheta})}}\epsilon}.
    \label{eq:b-err}
	}
  The proof can be found in the Appendix (Sec~\ref{sec:inferred_proof}).
\end{thm}

Under the conditions of Corollary~\ref{cor:entrywise}, our row-wise eigenvector bound leads to $\tilde{O}(1/\sqrt{n\rho})$ rates of convergence of $\hat{\btheta}_i$ to $\btheta_i$. 
To our knowledge, this is the first such result for detecting mixed memberships in networks.

\begin{rem}[Application to Stochastic Blockmodels]
Theorem~\ref{thm:entrywise} can be used to establish strong consistency for Spectral Clustering for Stochastic Blockmodels. 
{Here $\bTheta$ is a binary membership matrix with exactly one ``1'' on each row representing the cluster that node belongs to. 
So, $\bTheta^T\bTheta$ is a diagonal matrix whose diagonal elements (and eigenvalues) represent the sizes of the clusters. }
Consider the standard settings of $K=2$ equal-sized clusters: $\rho\bB=(p_n-q_n)\bI_2+q_n\bone_2\bone_2^T$,  {and $\lambda_{1}(\bTheta^T\bTheta)=\lambda_{K}(\bTheta^T\bTheta)=n/2$}. 
By definition, $\max_{ij}\bB_{ij}=1$, so $\rho=p_n$ and $\lambda^*(\bB)=(p_n-q_n)/p_n$.
Our results imply exact recovery with probability greater than $1-O(K/n^2)$, as long as  $(p_n-q_n)/\sqrt{p_n}=\tilde{\Omega}(1/\sqrt{n})$.
This matches the separation condition in existing literature \citep{mcsherry2001spectral,chen2014improved} up-to logarithmic factors. 
Note that, existing work on sharp threshold for exact recovery~\citep{abbe2015exact} assumes $p_n = a{\log n}/{n}$ and $q_n = b{\log n}/{n}$, where $a,b$ are some constants. 
This implies $\lambda^*(\bB)={(a-b)}/{a}$. 
But we also allow $\lambda^*(\bB)\ll 1$ in the regime that the average expected degree grows as poly-log of $n$.
\end{rem}

\begin{rem}[Comparison to~\citep{jin2017estimating}]
	{ In the latest version of~\citep{jin2017estimating} (updated Sep. 4th, 2019), the authors have added row-wise concentration results for eigenspaces. Their assumptions translate to $\kappa(\bTheta^T\bTheta)=\Theta(1)$. Furthermore, their assumption on the eigenvalues of $\bP$ translates to $\gammaP=O(1)$ in our terminology. Thus, in this regime, {our error bound on estimating $\btheta_i$ (converted to $\ell_1$ norm by multiplying $\sqrt{K}$) is $\sqrt{K}$ worse than theirs up-to logarithmic factors.} A detailed discussion can be found in Sec~\ref{sec:lem_for_jin} of the Appendix. 
	}
\end{rem}

\subsection{Application to Dirichlet Prior}
\label{sec:dirichlet}
Now we consider the case where the $\{\btheta_i\}$ vectors are drawn from a Dirichlet distribution.
We cannot directly use the bound in Theorem~\ref{thm:theta-B} since that bound depends on $\bTheta$.
However, we can probabilistically bound the relevant functions of $\bTheta$.

\begin{lem} \label{lem:theta_property} 
	If $\btheta_i\sim\mathrm{Dirichlet}(\balpha)$ with $\alpha_{\max}=\max_a \alpha_a$,  $\alpha_{\min}=\min_a\alpha_a$ and $\bal:=\alpha_0/\alpha_{\min}$, 
	\bas{
		\uP\left(\lambda_1({\bTheta^T\bTheta})\leq \frac{3n\bbb{\alpha_{\max}+\|\balpha\|^2}}{2\alpha_0(1+\alpha_0)}\right)&\geq 1-K\exp\bbb{-\frac{n}{36\nu^2(1+\alpha_0)^2}}\\
		\uP\left(\lambda_K({\bTheta^T\bTheta})\geq \frac{n}{2\nu(1+\alpha_0)}\right)&\geq 1-K\exp\bbb{-\frac{n}{36\nu^2(1+\alpha_0)^2}}\\
		\uP\left(\kappa({\bTheta^T\bTheta})  \leq {3\frac{\alpha_{\max}+\|\balpha\|^2}{\alpha_{\min}}}\right)&\geq 1-2K\exp\left(-\frac{n}{36\nu^2(1+\alpha_0)^2}\right)
	}
	where $\kappa(.)$ is the condition number of a matrix.
\end{lem}
\begin{assumption}[Parameters of Dirichlet]\label{as:sep}
	Assume for some constant $\xi>1$, we have,
	\bas{
		\bal:= \dfrac{\alpha_0}{\alpha_{\min}}\leq   \dfrac{\min( \sqrt{\frac{n}{27\log n}}, {n\rho})}{2(1+\alpha_0)}, \dfrac{\lambda^*(\bB)}{\nu}\geq \dfrac{8(1+\alpha_0)(\log n)^{\xi}}{\sqrt{n\rho}}.
	}
	\end{assumption}
	One can easily check that under Assumption~\ref{as:sep}, by Lemma~\ref{lem:theta_property}, 
	Assumption~\ref{as:theta_P} is satisfied with probability at least $1-O(Kn^{-3})$.
	When $\alpha_0$ is a constant, the condition on $\lambda^*(\bB)/\nu$ immediately implies $\rho n= \Omega((\log n)^{2\xi})$, since $\lambda^*(\bB)\leq \|\bB\|\leq K\leq \nu$. 
	{Since the expected average degree is $O(n\rho)$, these conditions mean that the average degree must grow faster than poly-log of $n$.
		This is the most common regime where most consistency results on network clustering are shown~\citep{MMSBAnandkumar2014,lei2015consistency,mcsherry2001spectral}. The magnitude of $\alpha_0$  limits the amount of overlap between communities. 
		As noted also by~\citep{MMSBAnandkumar2014}, in many real world applications nodes belong a few communities -- so a constant or slowly growing $\alpha_0$ is a reasonable assumption. For example, the conditions imposed by~\citep{jin2017estimating} on $\bTheta$ can be translated to $\alpha_0=O(1)$ in the context of MMSB models.
		{Note that, our results can handle large $\alpha_0$, but at the cost of a worse error bound.}
		
		Our conditions also allow $K$ to grow with $n$.
    If $\rho=O(1)$, $\alpha_0=O(1)$, and $\lambda^*(\bB)=\Theta(1)$, then $K$ can grow with $\sqrt{n}$, up to poly-log terms (using the fact that $\nu\geq K$).
		Now, consider the common case of a simple MMSB model with $K$ communities: $\rho\bB=(p_n-q_n)\bI_K+q_n\bone_K\bone_K^T$ and $\balpha=\alpha_0\bone_K/K$.
		Since  the largest element of $\bB$ is one by definition, we have $\rho=p_n$.
		This yields $\lambda^*(\bB)=(p_n-q_n)/p_n$.
		We also have $\nu=K$. 
		Hence the second condition can be interpreted as a lower bound on cluster separation: $({p_n-q_n})/{\sqrt{p_n}}=\tilde{\Omega} \bb{{K}/{\sqrt{n}}}$.
		This matches the separation condition in existing literature \citep{MMSBAnandkumar2014}. 
	}
\medskip

We now show error bounds on $\hat{\bTheta}$ and $\hat{\bB}$, when $\btheta_i$ is drawn from a Dirichlet distribution.
For ease of exposition, we focus on the case with similar $\alpha_i$ and $\alpha_0=O(1)$.
This corresponds to roughly-balanced communities with limited overlap.
\begin{cor}\label{cor:theta-B_dirichlet_balance}
	Let $\btheta_i\sim \mathrm{Dirichlet}(\alpha)$ with $\max_a \alpha_a\leq C \min_a \alpha_a$ for some constant $C\geq1$, $\alpha_0=O(1)$.
  If Assumptions~\ref{as:ident} and~\ref{as:sep} 
	hold, and $\lambda^*(\bB)=\tilde{\Omega}(\frac{\min\{K,\kappa(\bB)\}^2K^2}{\sqrt{n\rho}})$, there exists a permutation matrix $\bpi$ such that with probability at least $1-O(K/n^2)$, 
	\ba{
		\max_{i\in[n]}{\left\|\be_i^T\bbb{\hat{\bTheta}-\bTheta\bpi}\right\| }
		&= \ThetaErrorDirichletBalanced,
		\label{eq:theta-err_dirichlet_balance}\\
		\frac{1}{\rho}{\|\hat{\rho}\hat{\bB}-\rho\bpi^T\bB\bpi\|_F}
		&= \ErrorBDirichletBalanced.
		\label{eq:b-err_dirichlet_balance}
	}
\end{cor}

\begin{rem}[Error bound on $\hat{\bTheta}$ as a whole] 
	Note that we can get the Frobenius norm of the error for the whole matrix by directly accumulating the row-wise error bounds. 
	With all other hyperparameters and parameters like $\alpha_0$, $\nu$, $K$ and $\lambda^*(\bB)$ held constant, our Frobenius-norm bound on $\hat{\bTheta}$ is tighter by a factor of $\sqrt{\rho}$ than that in~\citep{mao2017,panov2017consistent}, which allows the analysis to work on networks with average degree $\tilde{\Omega}(\log n)$ rather than $\tilde{\Omega}(\sqrt{n})$.
  
  \citet{MMSBAnandkumar2014} have the same degree regime as ours, but their algorithm assumes prior knowledge of $\alpha_0$.
  Our bound has a worse dependence on $K$, $\alpha_0$ and $\nu$ compared to them. 
  To be concrete, when $\kappa(\bB)=\Theta(1)$ and the clusters are balanced with mild overlap, i.e. $\max_a \alpha_a/\min_a \alpha_a=\Theta(1)$ and $\alpha_{0}=O(1)$, we have an additional $\sqrt{K}$ factor (after converting our Frobenius norm bound to $\ell_1$ norm by multiplying $\sqrt{Kn}$ and theirs by $K$ to get the error of the whole $\hat{\bTheta}$ matrix).
  In the worst case, our bound has an additional $K^2\sqrt{\nu}(1+\alpha_{0})$ factor. 
  We provide more details in the Appendixary (Sec~\ref{sec:tensor}).
\end{rem}

\section{Experimental results}
\label{sec:exp}
We present both simulation results and real data experiments to compare \OurAlgo with existing algorithms for overlapping network models. We compare with the Stochastic Variational Inference algorithm (SVI)~\citep{gopalan2013efficient}, a geometric algorithm for non-negative matrix factorization for MMSB models with equal Dirichlet parameters (GeoNMF)~\citep{mao2017}, Bayesian SNMF (BSNMF)~\citep{BNMF2011},
the OCCAM algorithm~\citep{zhang2014detecting} for recovering mixed memberships, and the SAAC algorithm~\citep{kaufmann2016spectral}.\interfootnotelinepenalty=10000
\smallskip\noindent
\footnote{We were unable to run the GPU implementation of \citep{MMSBAnandkumar2014} since a required library CULA is no longer open source. We could not get good results  with the CPU implementation with default settings.}
For real data experiments we use two large datasets (with up to 100,000 nodes) from the DBLP corpus.
One of these is assortative ($\bB$ has positive eigenvalues) and one which is {disassortative} ($\bB$ has negative eigenvalues). We show that for the {disassortative} setting,  \OurAlgo significantly outperforms other methods.
\subsection{Simulations}\label{exp:sim}
In this section, we investigate the sensitivity of \OurAlgo and competing algorithms to the Dirichlet parameter $\balpha$, the number of communities $K$, the sparsity control parameter $\rho$, and to the eigenvalues of $\bB$.  
Our simulated graphs have $n=5000$, unless specified otherwise.
We show the relative error ($\|\hat{\bTheta} - \bTheta\|_F / \|\bTheta\|_F$) of different methods, averaged over $10$ random runs in a range of parameter settings.
The largest row-wise relative error has similar trends.  
Further results for varying $\bB$ are presented in the Appendix (Sec~\ref{sec:suppsimu}).

Some algorithms have an underlying model that is slightly different from MMSB.
We handle these as follows.
For OCCAM, we normalize each row of $\bTheta$ by its $\ell_2$ norm, thereby absorbing the $\ell_2$ norm in the degree parameter.
For SAAC, we threshold elements of $\bTheta$ by $1/K$ to get a binary matrix.
For BSNMF, no adjustment is necessary. 
However, note that BSNMF assumes $\bB$ is identity. 

\begin{figure}
	\begin{tabular}{@{\hspace{-2em}}c@{\hspace{-1em}}c@{\hspace{-1em}}c@{\hspace{-1em}}c}
		\includegraphics[width=.3\textwidth]{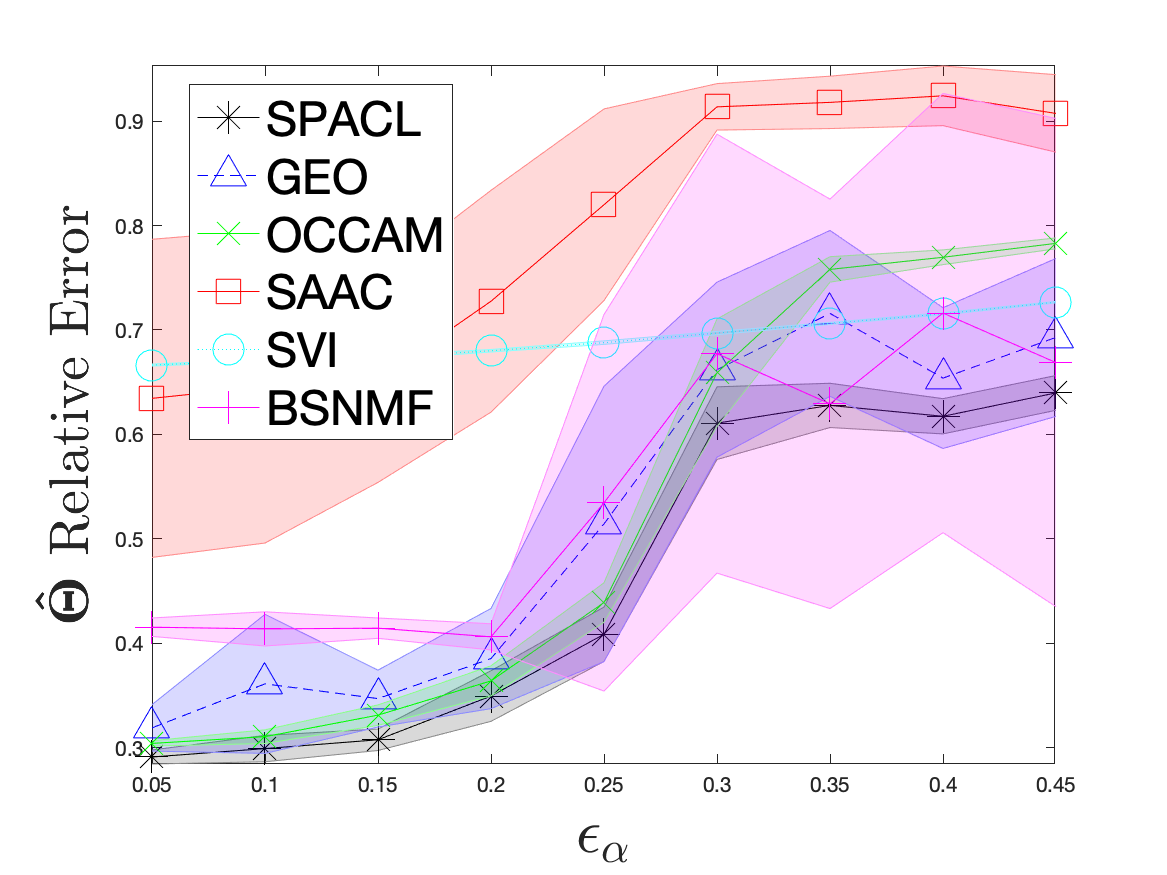}&
		\includegraphics[width=.3\textwidth]{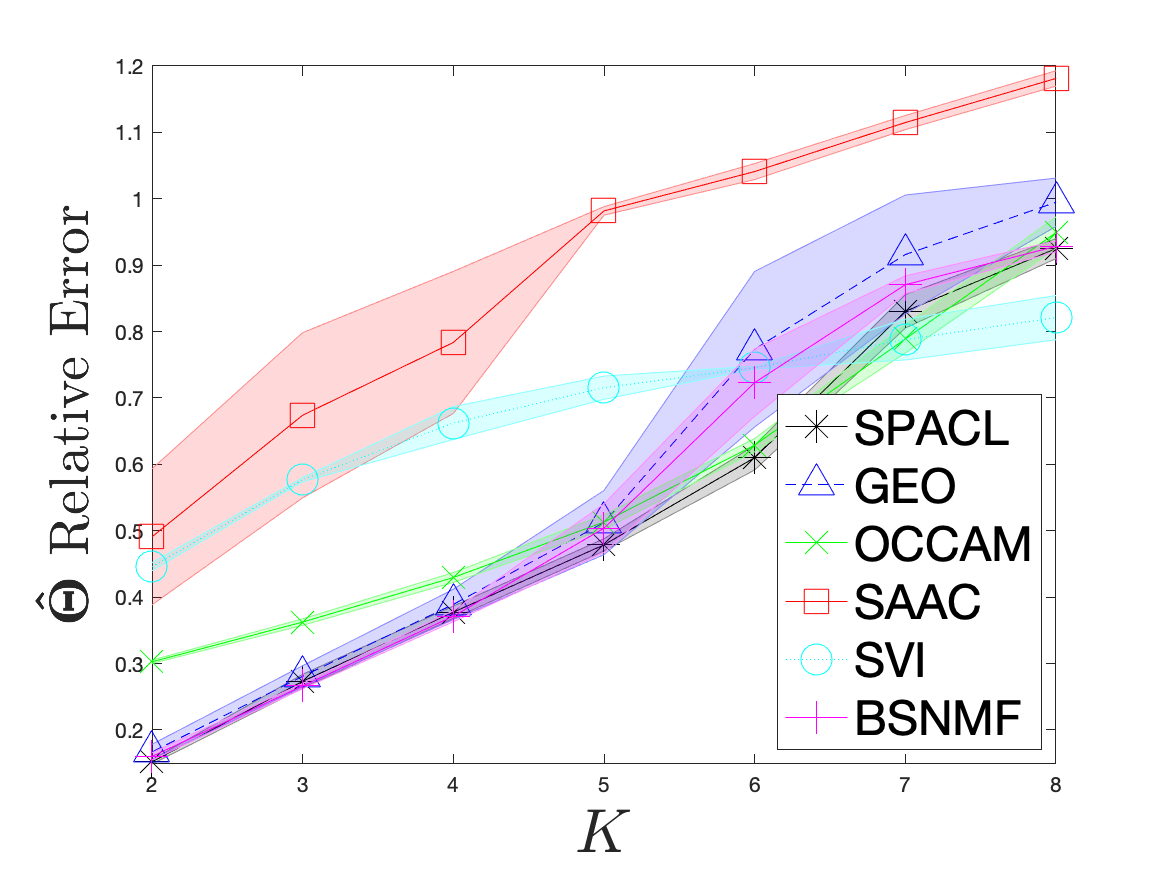}&
		\includegraphics[width=.3\textwidth]{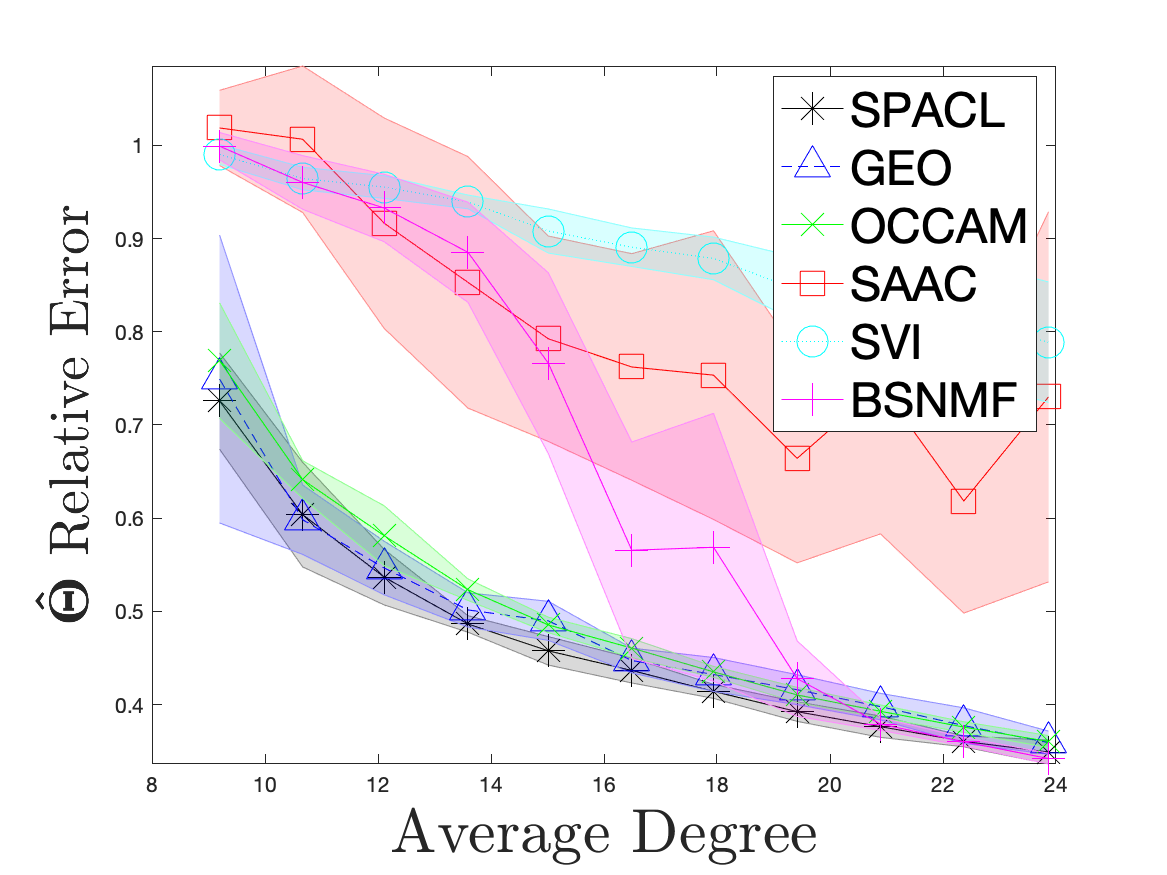}&
		\includegraphics[width=.3\textwidth]{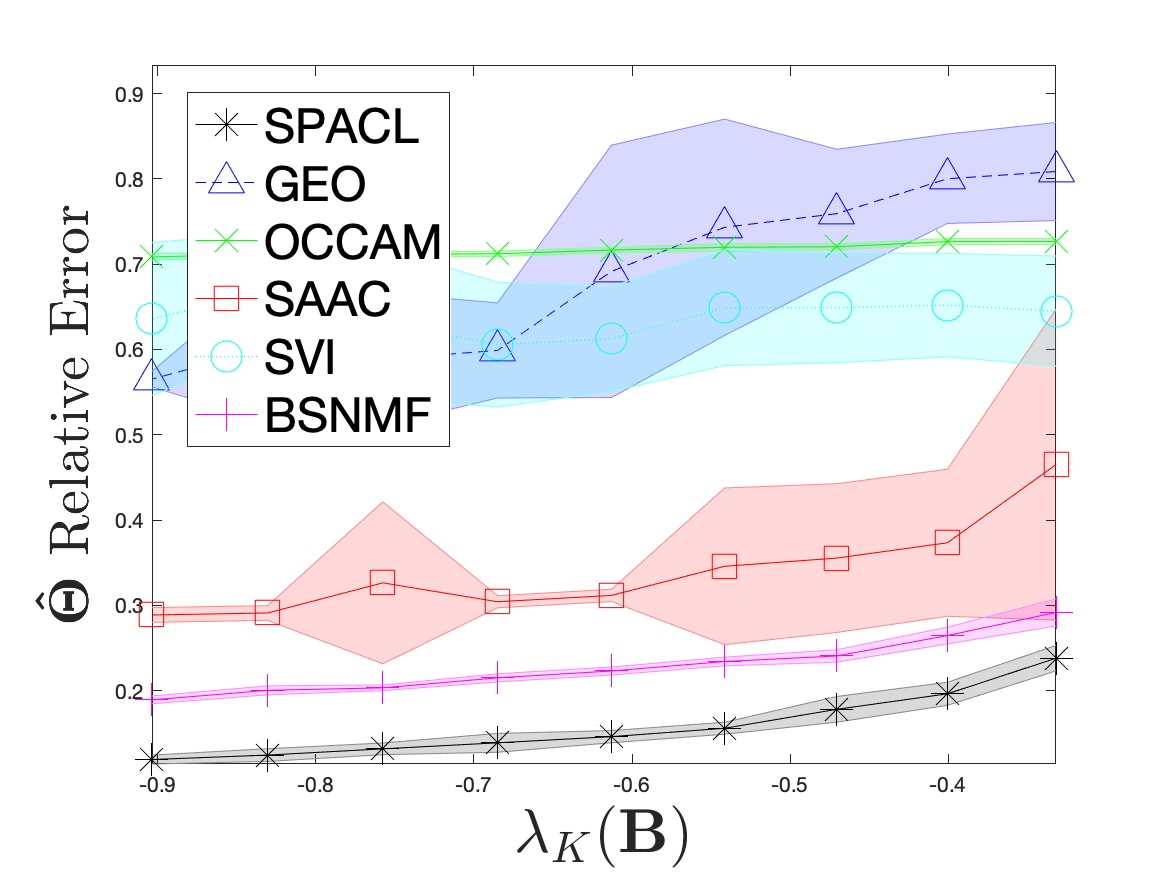}\\
		{\small(A)}&{\small(B)}&{\small(C)}&{\small(D)}
	\end{tabular}
	\caption{\label{fig:expsimone}(A) Error against $\epsilon_\alpha$: $\balpha = (0.5 - \epsilon_\alpha,0.5,0.5 + \epsilon_\alpha)$. (B) Error against increasing $K$. (C) Error against increasing $\rho$ (D) Error against $\lambda_K(\bB)$.}
\end{figure}

\smallskip\noindent
{\bf Changing $\balpha$:} In Fig~\ref{fig:expsimone} (A) we use $\balpha = (0.5 - \epsilon_\alpha,0.5,0.5 + \epsilon_\alpha)$ and plot the relative error against $\epsilon_\alpha$. We set  $K=3$, $\rho=0.15$,   $\bB_{ii}=1$, $i\in[K]$, $\bB_{ij}=0.5$ for $i\neq j$.  Recall that for skewed $\balpha$ we get unbalanced cluster sizes. 
\OurAlgo is better than SAAC, SVI, BSNMF and GeoNMF, and also more stable (small variance). For imbalanced clusters (large $\epsilon_\alpha$), \OurAlgo also outperforms OCCAM.

\smallskip\noindent
{\bf Changing $K$:} In Fig~\ref{fig:expsimone} (B) we plot relative error against increasing $K$. We use $\rho=0.1$, $\balpha_i=3/K=1$, $\bB_{ii}=1$, $i\in[K]$, $\bB_{ij}=0.2$ for $i\neq j$. 
We can see that \OurAlgo outperforms SAAC, and is more stable than BSNMF and GeoNMF. When $K$ is very large (>7), everyone performs poorly. When $K$ is small (<5), \OurAlgo works much better than OCCAM and SVI. However, when $K$ is moderately large, OCCAM is slightly better than \OurAlgo. 
This is because in those cases, the eigenspaces do not concentrate very well, and estimating $\hat{\bTheta}$ with cluster centroids (as in OCCAM) seems to reduce the noise. 

\smallskip\noindent
{\bf Changing sparsity}:  We set $\balpha=(0.4,0.4,0.4)$, $\bB_{ii}=1$, $i\in[K]$, $\bB_{ij}=0.05$ for $i\neq j$. We increase $\rho$ from 0.005 to 0.013,  Fig~\ref{fig:expsimone} (C) shows the result. We see that, the error of \OurAlgo is smaller than or similar to that of the best performing algorithm among the others.
In addition, it also has smaller variance.

\smallskip\noindent
{\bf Changing $\lambda_K(\bB)$:} We conclude the simulations with experiments on $\bB$ with negative eigenvalues. We generate $\bB$ so that the smallest eigenvalue $\lambda_K(\bB)$ of $\bB$ is negative. 
We set $\bB=
\begin{bmatrix}
1 & 0.2 & 0.1 \\
0.2 & 0.5 & 0.075\cdot i \\
0.1 & 0.075\cdot i & 0
\end{bmatrix}
$ and vary $i\in [15]$. As $i$ grows, $\lambda_K(\bB)$ becomes more negative. 
We set  $K=3$, $\rho=0.15$, $\balpha=(1/3,1/3,1/3)$.
In the plot of  relative error against $\lambda_K(\bB)$ (Fig~\ref{fig:expsimone}~(D)), 
we see that \OurAlgo is much better than others over the entire parameter range.

\begin{table}[!t]
	\caption{\label{table:net_stats} Statistics for author-author (Mono) and bipartite paper-author (Bi)  graphs.}
	\centering
	\scalebox{0.76}{
		\begin{tabular}{|c|cc|cc|cc|cc|cc|}
			\hline
			{ Dataset}  & \multicolumn{2}{c|}{ DBLP1}     & \multicolumn{2}{c|}{ DBLP2}     & \multicolumn{2}{c|}{ DBLP3}    & \multicolumn{2}{c|}{ DBLP4}    & \multicolumn{2}{c|}{ DBLP5}   \\
			\hline
			&Mono&Bi&Mono&Bi&Mono&Bi&Mono&Bi&Mono&Bi\\
			\hline\hline
			$\#$ nodes $n$        &  30,566&103,660    &  16,817&50,699    &  13,315&42,288    &  25,481&53,369    &   42,351&81,245       \\
			\hline
			$\#$ communities $K$     &  6 &12        &  3 & 6       &  3 & 6       &  3 & 6     &    4  & 8      \\
			\hline
			Average Degree                          &  8.9 & 3.4&   7.6 & 3.4 & 8.5 & 3.6 &   5.2 & 2.6  &     6.8   & 3.0    \\
			\hline
			Overlap $\%$       &  18.2&6.3 &   14.9&5.6 & 21.1&5.7 & 14.4&6.9 &   18.5 & 9.7   \\
			\hline
		\end{tabular} 
	}
\end{table}

\subsection{Real Data}\label{exp:real}
\newcommand{\tablefontsize}{\scriptsize}
We use the two types of DBLP networks  obtained from the DBLP dataset\footnote{\url{http://dblp.uni-trier.de/xml/}}, where each ground truth community is a group of conferences on one topic. The author-author networks were used in~\citep{mao2017}; in this paper we also conduct experiments on the bipartite networks by using both papers and authors as nodes. Each community is split into two, the paper community and the author community. The papers are pure nodes since they belong to one conference and hence one community, whereas the authors may belong to more than one community, since they often publish in many conferences. The details of the subfields can be found in~\citep{mao2017}. 
 We have two simple preprocessing steps for the adjacency matrix: 1) delete nodes that do not belong to any community; 2) delete nodes with zero degree.
The statistics of the network are in Table \ref{table:net_stats}, which show that despite being sparse, the networks have large overlaps between communities. The amount of overlap  is measured by the number of overlapping nodes divided by $n$.

\smallskip\noindent
{\bf  Implementation details:} 
 For real world networks, 
 {specially the bipartite networks, when average degree of graphs with 100,000 nodes is smaller than four},  
 some nodes may have extremely small values of $\hat{\bTheta}$ and the corresponding rows may in fact become zero after thresholding. 
 For those we essentially cannot make any prediction. This is why for Step 7 of Algorithm~\ref{alg:nmf-mmsb-pure-res}, we threshold all values smaller than $10^{-12}$ to zero and we do not normalize rows which are all zeros. This does not make any difference for simulations, but for the real world networks, this stabilizes the results. 
\begin{figure}[!t]
	\centering
	\begin{tabular}{@{\hspace{0em}}c@{\hspace{0em}}c}
		\includegraphics[width=.4\textwidth]{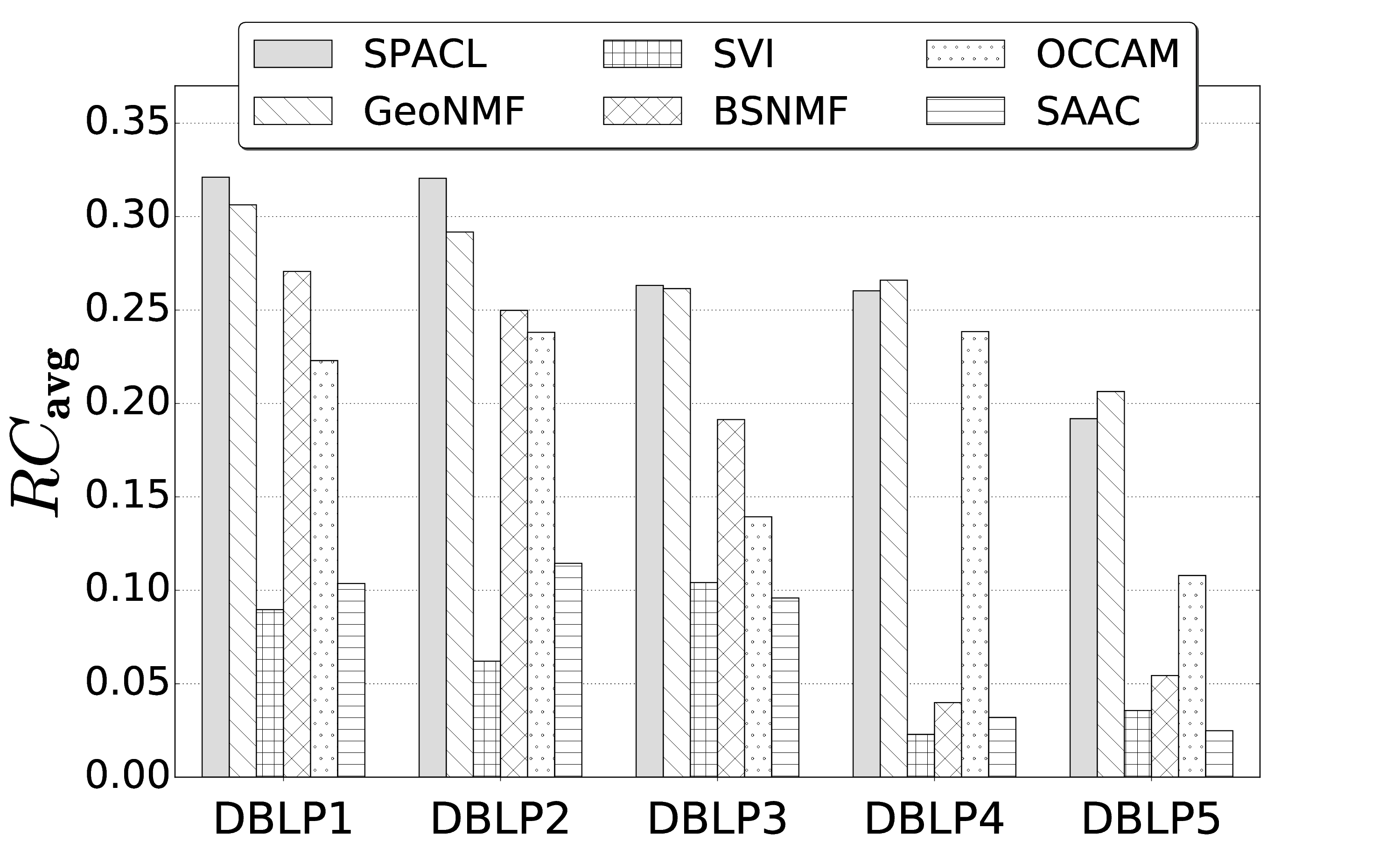}&
		\includegraphics[width=.4\textwidth]{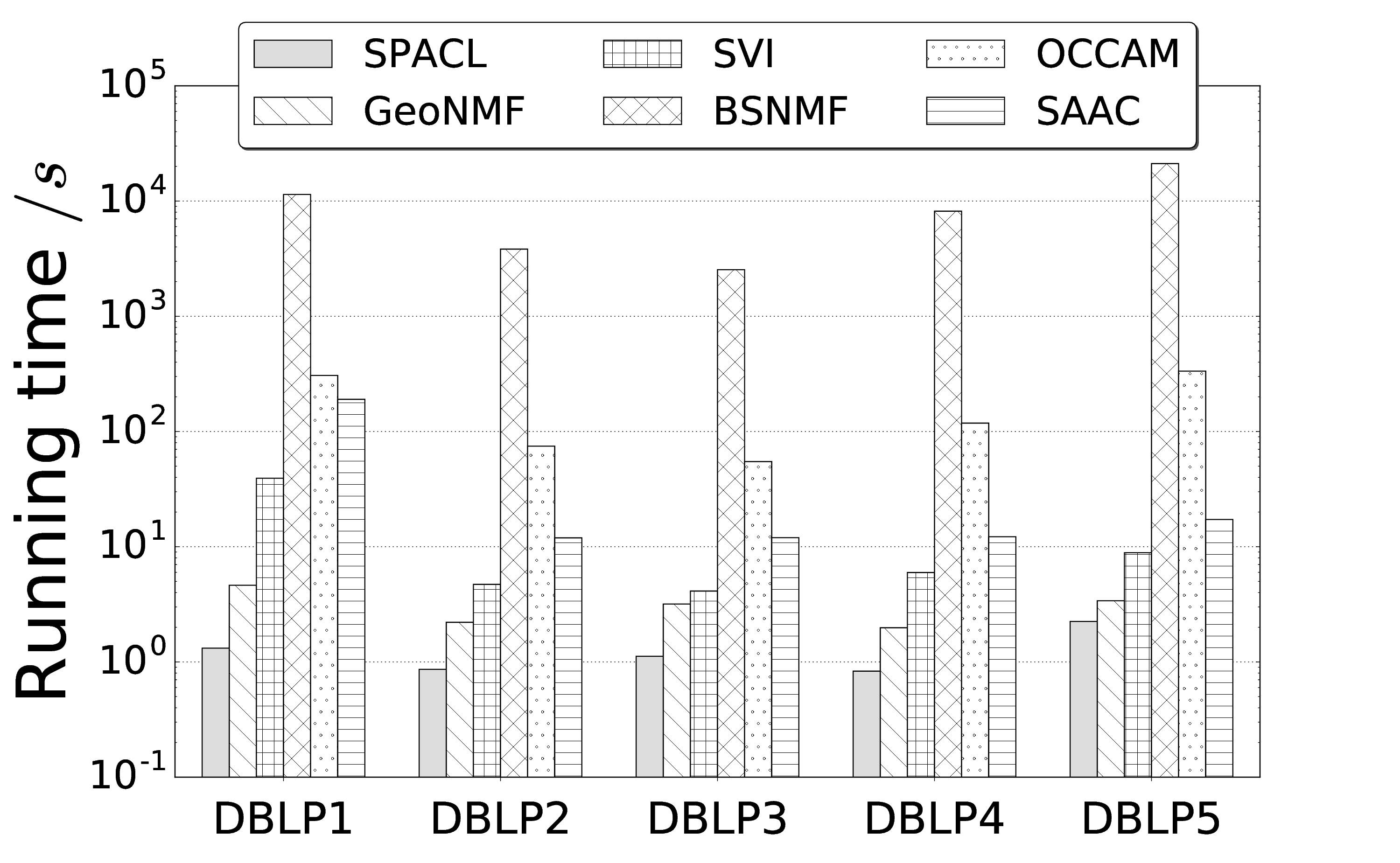}\\
		{\small (A) Rank correlation on DBLP.}&{\small(B) Running time (log scale) on DBLP.}\vspace{0.05in}\\
		\includegraphics[width=.4\textwidth]{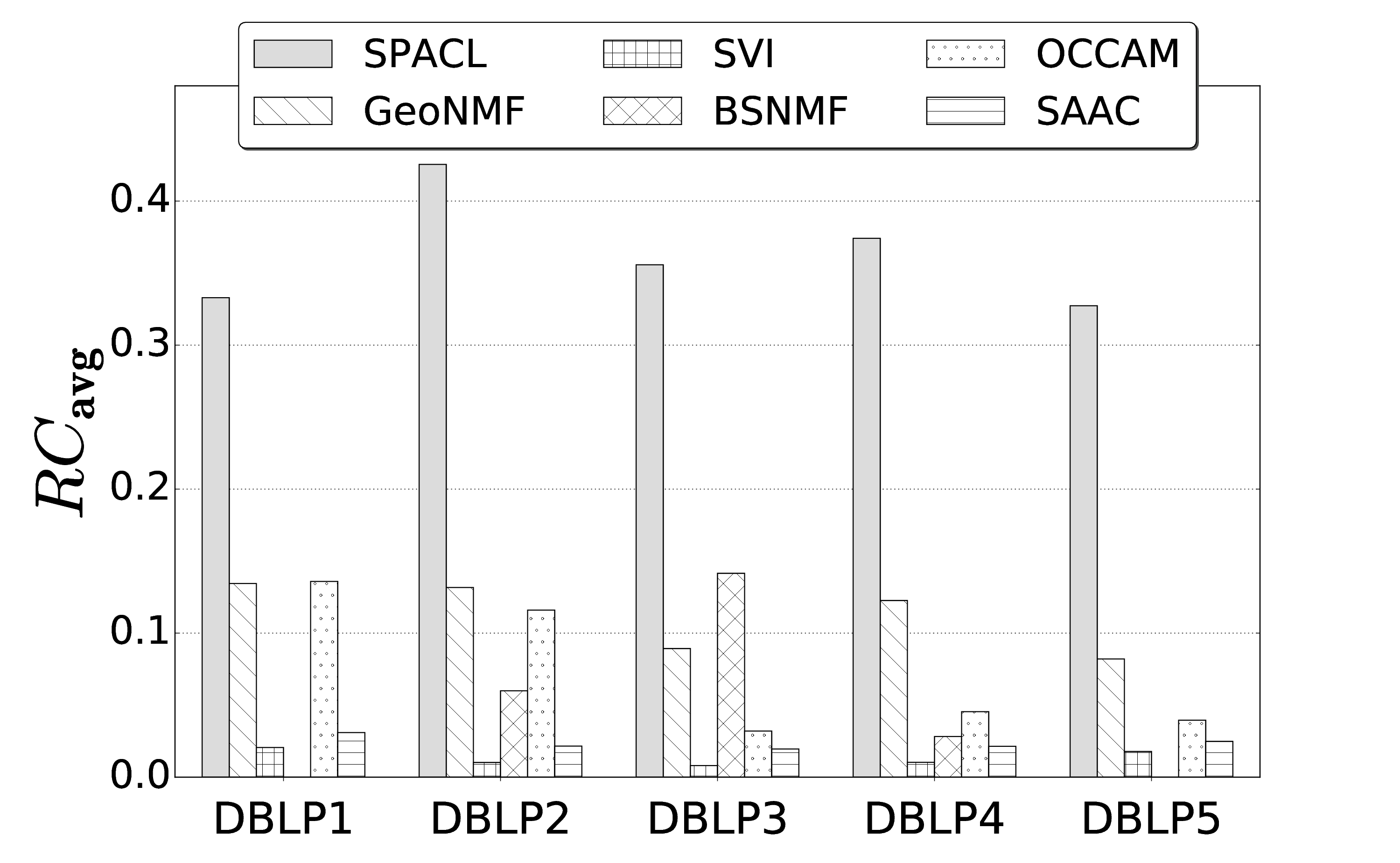}&
		\includegraphics[width=.4\textwidth]{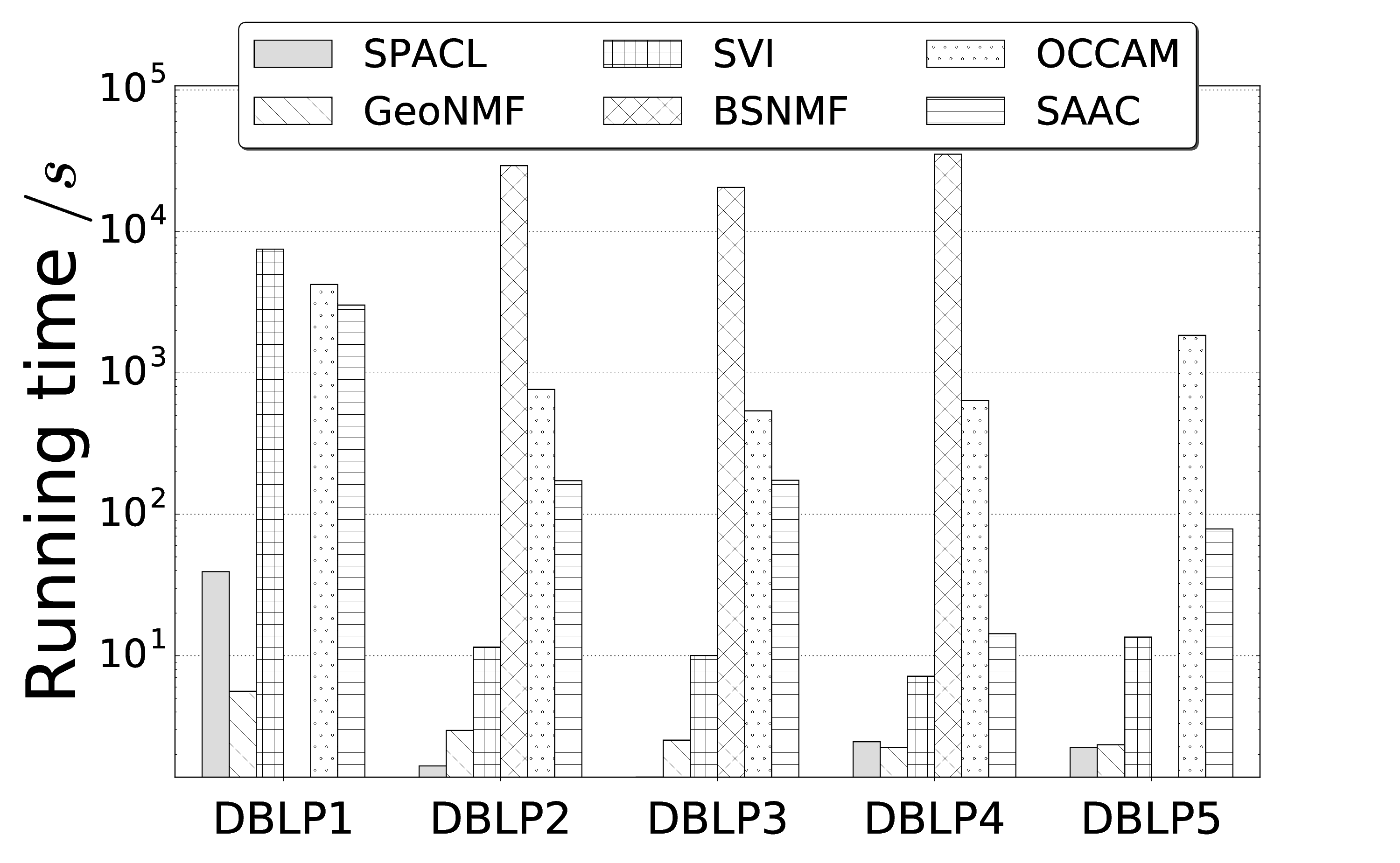}\\
		 {\small \tabular{@{}l@{}}{(C) }\\ {} \endtabular  \tabular{@{}l@{}}{\ Rank correlation on DBLP} \\ \ bipartite graphs.\endtabular}&{\small \tabular{@{}l@{}}{(D) }\\ {} \endtabular \tabular{@{}l@{}}{\ Running time (log scale) on DBLP} \\ \ bipartite graphs.\endtabular}
	\end{tabular}
	\caption{Results on DBLP networks. BSNMF was out of memory for bipartite versions of DBLP1 and DBLP5.}\label{real_data}
\end{figure}

\smallskip\noindent
{\bf  Evaluation Metric:}
For author nodes, we construct the corresponding row of $\bTheta$  by normalizing the number of papers an author has in different ground truth communities. 
We present the averaged Spearman rank correlation coefficients (RC) between $\bTheta(:,a)$, $a\in[K]$ and $\hat{\bTheta}(:,\sigma(a))$, where $\sigma$ is a permutation of $[K]$. The formal definition is:
\bas{
	\text{RC}_{\text{avg}}(\hat{\bTheta},\bTheta)&=\frac{1}{K}\max_{\sigma}\sum_{i=1}^{K} \text{RC}(\hat{\bTheta}(:,i),\bTheta(:,\sigma(i))).
}

Note that
$\text{RC}_{\text{avg}}(\hat{\bTheta},\bTheta)\in [-1,1]$, and higher is better. Since SAAC returns binary assignment, we compute its $\text{RC}_\text{avg}$ against the binary ground truth.

\smallskip\noindent
{\bf Performance:} 
We report the $\text{RC}_{\text{avg}}$ score in Fig~\ref{real_data}. The superior performance of  
\OurAlgo on the paper-author networks over the author-author networks can be explained by the fact that the bipartite  network retains information that is lost when the author-author networks are constructed.
Also \OurAlgo outperforms all other methods on bipartite networks, since these are disassortative and the corresponding $\bB$ will have negative eigenvalues. On co-authorship graphs, \OurAlgo performs comparably to GeoNMF, while the other methods are worse. Both \OurAlgo and GeoNMF are much faster than the competing algorithms.

\section{Analysis}
\label{sec:analysis} 
Here we present the main proof idea of Theorem~\ref{thm:entrywise}.
We equate the difference in empirical and population eigenspaces with the Cauchy integral of a matrix resolvent.
To bound the row-wise difference in eigenspaces, we have to specify the contours for the complex integration and then bound a matrix series expansion.
Our contours are carefully chosen by a discretization of the eigenvalues of $\bP$.
This yields an error bound with the proper dependence on $\lambda^*(\bB)$ and $\kappa(\bP)$.
The matrix series expansion is controlled by upper-bounding the first $\log n$ terms and the rest separately, where the partial sum for the first $\log n$ terms is controlled by applying the union bound. 
This is a common technique in perturbation analysis~\citep{erdos2013}. 
A similar strategy is also used in concurrent work~\citep{eldridge2017unperturbed}.
We defer proofs of some of the technical lemmas to the Appendix (Sec~\ref{sec:proof_sec_5}). 

\subsection{Eigenspace Row-wise Concentration}
Before presenting the analysis of the row-wise error-bounds of empirical eigenvectors, we present a discretization scheme of the population eigenvalues, which later helps in getting a better dependence of the overall row-wise error on the smallest singular value of $\bP$, which can also be thought of as the separation between blocks.

\begin{defn}[A discretization of eigenvalues]
	\label{def:int}
	\begin{figure}[!htbp]
    \centering
    {
    \begin{tikzpicture}
        \begin{axis}[
        xmin=-6, xmax=380, 
        ymin=0,
        height = 3.9cm,
        width=\textwidth,
        ticks=none,
        axis x line*=middle,% only show the bottom x axis line, without an arrow tip
        hide y axis,        % hide the y axis
        xticklabels={,,}
        ]
            \addplot[mark=*] coordinates {(0,0)} node[pin=90:{\footnotesize $0$}]{} ;
            
            \addplot[mark=square*] coordinates {(40,0)} node[pin=90:{\footnotesize $\lambda_{10}$}]{} ;
            \addplot[mark=square*] coordinates {(40,0)} node[pin=270:{\footnotesize $\lambda_{e_1}$}]{} ;
            
            \addplot[mark=*] coordinates {(55,0)} node[pin=90:{\footnotesize $\lambda_{9}$}]{} ;
            
            \addplot[mark=triangle*] coordinates {(90,0)} node[pin=90:{\footnotesize $\lambda_{8}$}]{} ;
            \addplot[mark=triangle*] coordinates {(90,0)} node[pin=270:{\footnotesize $\lambda_{s_1}$}]{} ;
            
            \addplot[mark=square*] coordinates {(145,0)} node[pin=90:{\footnotesize $\lambda_{7}$}]{} ;
            \addplot[mark=square*] coordinates {(145,0)} node[pin=270:{\footnotesize $\lambda_{e_2}$}]{} ;
            
            \addplot[mark=*] coordinates {(180,0)} node[pin=90:{\footnotesize $\lambda_{6}$}]{} ;
            \addplot[mark=*] coordinates {(220,0)} node[pin=90:{\footnotesize $\lambda_{5}$}]{} ;
            \addplot[mark=triangle*] coordinates {(240,0)} node[pin=90:{\footnotesize $\lambda_{4}$}]{} ;
            \addplot[mark=triangle*] coordinates {(240,0)} node[pin=270:{\footnotesize $\lambda_{s_2}$}]{} ;
            
            \addplot[mark=square*] coordinates {(320,0)} node[pin=90:{\footnotesize $\lambda_{3}$}]{} ;
            \addplot[mark=square*] coordinates {(320,0)} node[pin=270:{\footnotesize $\lambda_{e_3}$}]{} ;
            
            \addplot[mark=*] coordinates {(355,0)} node[pin=90:{\footnotesize $\lambda_{2}$}]{} ;
            \addplot[mark=triangle*] coordinates {(370,0)} node[pin=90:{\footnotesize $\lambda_{1}$}]{} ;
            \addplot[mark=triangle*] coordinates {(370,0)} node[pin=270:{\footnotesize $\lambda_{s_3}$}]{} ;
    
            \draw [thick,decoration={brace,mirror},decorate] (7,-3) -- (46,-3) node[midway,below] {\footnotesize $g_1$};
            \draw [thick,decoration={brace,mirror},decorate] (46,-3) -- (96,-3) node[midway,below] {\footnotesize $S_1^+$};
            
            \draw [thick,decoration={brace,mirror},decorate] (96,-3) -- (151,-3) node[midway,below] {\footnotesize $g_2$};
            \draw [thick,decoration={brace,mirror},decorate] (151,-3) -- (246,-3) node[midway,below] {\footnotesize $S_2^+$};
            
            \draw [thick,decoration={brace,mirror},decorate] (246,-3) -- (326,-3) node[midway,below] {\footnotesize $g_3$};
            \draw [thick,decoration={brace,mirror},decorate] (326,-3) -- (376,-3) node[midway,below] {\footnotesize $S_3^+$};
        \end{axis}
    \end{tikzpicture}
}
    \caption{An illustration of Definition~\ref{def:int}.} 
    \label{fig:Example_def_6.1}
\end{figure}

	Let us divide the eigenvalues of $\bP$ into the positive ones ($S^+$) and negative ones ($S^-$). 
	We start with the smallest eigenvalue in $S^+$.
	Denote this by $\lambda^*_+$.
	We set the gap $g_1=\lambda^*_+$ and keep moving through the eigenvalues in $S^+$ in increasing order until we find two consecutive eigenvalues which have gap $g_2>g_1$.
	We repeat this until all eigenvalues in $S^+$ are covered.
	Then every pair of consecutive eigenvalues in the $k^{th}$ interval is within gap $g_k$, and $g_k$ grows with $k$.
	We define $s_k$ and $e_k$ as the starting and ending index of eigenvalues of the $k^{th}$ interval.
	Formally, the $k^{th}$ interval of positive eigenvalues is the set
	\bas{
		S_k^+=\{\lambda_{s_k},\dots \lambda_{e_k}\in S^+: \lambda_{i}-\lambda_{i+1}\le g_k\mbox{ for } s_k\leq i\leq e_k \mbox{ , } \lambda_{e_{k+1}}-\lambda_{s_k} >  g_k \}.
	}
	Let $n_k := |S_k^+|$ be the number of eigenvalues in the $k^{th}$ interval.
	Fig~\ref{fig:Example_def_6.1} shows an example. 
	
	Let the number of intervals with positive eigenvalues be $I^+$.
	Note that $\lambda^*(\bP)\leq \lambda_+^*\leq g_1<g_2\dots<g_{I^{+}}$.
	By a similar splitting process for the negative eigenvalues in $S^-$, we can define $I^-$, $s_{-k}$, $e_{-k}$, and $g_{-k}$.
	{Let $\lambda_{s_0}=0$ and define 
		\ba{
			\label{eq:gammap}
			\gammaP:=\sum_{k=1}^{I^{+}}{\frac{\lambda_{s_k}(\lambda_{s_k}-\lambda_{s_{k-1}})}{g_k^2}}+\sum_{k=1}^{I^{-}}{\frac{\lambda_{s_{-k}}(\lambda_{s_{-k}}-\lambda_{s_{-k+1}})}{g_{-k}^2}}.
		}
		$\gammaP$ measures how tightly the eigenvalues of $\bP$ can be packed together.
	}
	
\end{defn}
The above discretization lets us control the ratio of the largest eigenvalue in each interval and the gap between an interval and the next.
This in turn helps bound $\gammaP$. 
\begin{lem}\label{lem:gammaP_bound}
	In general, $\gammaP\leq 2\min\{K,\kappa(\bP)\}^2$. If the eigenvalues of $\bP$ can be divided into a constant number of bins where eigenvalues in each bin are of the same order, 
	$\gammaP=O(1)$.
\end{lem}

For ease of exposition, we shall henceforth work with just the positive eigenvalues in our proofs, and use $I$ for the number of intervals.
The proofs go through for negative eigenvalues using a nearly identical argument.
We emphasize that the statement of Theorem~\ref{thm:entrywise} considers both positive and negative eigenvalues. 

In order to prove Theorem~\ref{thm:entrywise}, we will first introduce the notion of matrix resolvents and useful identities on resolvents.
\begin{defn}
	A resolvent of a matrix $\bM\in \R^{n\times n}$ is defined as $\res_\bM(z)=(\bM-z\bI)^{-1}$, where 
	$z\not\in \{\lambda_i(\bM)\}_{i=1}^n$. We can also write the resolvent as 
	$\sum_{i=1}^n\frac{\bv_i(\bM)\bv_i(\bM)^T}{\lambda_i(\bM)-z}$, {where $\bv_i(\bM)$ is the $i^{\mathrm{th}}$ eigenvector of $\bM$.}
\end{defn}
Let us define:
\ba{ \bE_z=\diag\bb{\ddd{\frac{\lambda_i}{z({\lambda_i-z})}}_{i=1}^K},\qquad\bM_z=
	\bV \bE_z\bV^T.
	\label{eq:mzdef}
} 
As we see below this matrix is an integral part of the resolvent of the expectation matrix $\bP$.
\ba{\label{eq:resp-decomp}
	\res_\bP(z)&=\sum_{i=1}^n\frac{\bv_i\bv_i^T}{\lambda_i-z}=\sum_{i=1}^K\bv_i\bv_i^T\left(\frac{1}{\lambda_i-z}+\frac{1}{z}\right)-\frac{\bI}{z}=\bM_z-\frac{\bI}{z}
}
We will use a standard technique to compute eigenspaces of matrices (also used in~\citep{oliveira2009concentration} Lemma A.2). Consider an interval $(a,b)$ such that no eigenvalue of a symmetric matrix $\bM$ equals $a$ or $b$. Now consider a {rectangular} contour $\C$ in the complex plane which passes through $a+\gamma\sqrt{-1},a-\gamma\sqrt{-1},b-\gamma\sqrt{-1},b+\gamma\sqrt{-1}$ in counter clockwise direction, where $\gamma>0$.
From the Cauchy integration formula, we know that 
\ba{\label{eq:cauchy}
	\frac{1}{2\pi \sqrt{-1}}\oint_{\C}\res_\bM(z)dz=-\sum_{i:\lambda_i(\bM)\in (a,b)}\bv_i(\bM)\bv_i(\bM)^T	 
}
\begin{defn}\label{def:contour}
	We consider a sequence of non-overlapping contours  $\C_k,k\in [I]$ ($I\leq K$) created using $a_k,b_k,\gamma_k$, where $\|\bA-\bP\|< a_k<b_k$, and none of the eigenvalues of $\bA$ or $\bP$ equal $a_k,b_k$ for $k\in [I]$. 
\end{defn}
Let $\bV_k$ denote the $n\times n_k$ matrix with the eigenvectors of $\bP$ corresponding to eigenvalues in $(a_k,b_k)$. Similarly let $\hV_k$ denote the eigenvectors of $\bA$ corresponding to eigenvalues in $(a_k,b_k)$. 
Hence, using the Cauchy integration formula~\eqref{eq:cauchy}, we have:
\ba{
	\label{eq:res-eigen}
	\bV_k\bV_k^T-\vh_k\vh_k^T&=\frac{1}{2\pi\sqrt{-1}}\oint_{\C_k}\left(\res_\bA(z)-\res_\bP(z)\right)dz
}

Furthermore, it is not hard to check that, $\forall x\in [n]$,
\ba{
	\label{eq:res-eigen-row}
	\be_{x}^T\left(\bV_k\bV_k^T-\vh_k\vh_k^T\right)&=\frac{1}{2\pi\sqrt{-1}}\oint_{\C_k}\be_x^T\left(\res_\bA(z)-\res_\bP(z)\right)dz
}
We bound the Frobenius norm of the above quantities using Lemma~\ref{lem:buildup} below.
\begin{lem}\label{lem:buildup}
	{For contours in Definition~\ref{def:contour}}, we have:
	\ba{
		&\left\|\be_x^T\sum_{k=1}^I(\bV_k\bV_k^T-\vh_k\vh_k^T)\right\|\leq 
		\sum_{k=1}^I \frac{b_k-a_k+2\gamma_k}{\pi}\max_{z\in \C_k}(P_{1}(z)+P_{2}(z)),	\label{eq:p1p2z}\\
		\mbox{where}\quad
		&P_{1}(z)=|z|\|\res_{\bA}(z)\|\|\bA-\bP\|\|\bE_z\|\|\be_x^T\res_{\bA-\bP}(z)\bV\|,\nonumber\\ 
		&P_{2}(z)= \|\be_x^T\res_{\bA-\bP}(z)(\bA-\bP)\bV\|_F \|\bE_z\|.\nonumber
	}
\end{lem}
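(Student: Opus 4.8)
The plan is to start from the contour-integral representation~\eqref{eq:res-eigen-row}, sum over $k\in[I]$, take norms, and bound the integrand pointwise on each contour $\C_k$. First I would write
\[
\left\|\be_x^T\sum_{k=1}^I(\bV_k\bV_k^T-\vh_k\vh_k^T)\right\|
\leq \sum_{k=1}^I\frac{1}{2\pi}\oint_{\C_k}\left\|\be_x^T\bbb{\res_\bA(z)-\res_\bP(z)}\right\||dz|,
\]
and then bound the contour length of $\C_k$ by $2(b_k-a_k)+4\gamma_k$ (four sides of a thin rectangle; for small $\gamma_k$ the top and bottom dominate, giving essentially $2(b_k-a_k)$, so $\tfrac{1}{2\pi}\cdot\text{length}\leq \tfrac{b_k-a_k+2\gamma_k}{\pi}$ up to the constant in the statement). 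This reduces everything to bounding $\max_{z\in\C_k}\|\be_x^T(\res_\bA(z)-\res_\bP(z))\|$ by $P_1(z)+P_2(z)$.

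The core algebraic step is the resolvent identity. Writing $\bA=\bP+(\bA-\bP)$ and using $\res_\bA(z)-\res_\bP(z)=-\res_\bA(z)(\bA-\bP)\res_\bP(z)$, I would substitute the decomposition~\eqref{eq:resp-decomp}, $\res_\bP(z)=\bM_z-\bI/z$ with $\bM_z=\bV\bE_z\bV^T$. This splits the difference into two pieces: one involving $\bM_z=\bV\bE_z\bV^T$ and one involving $-\bI/z$. For the $-\bI/z$ piece, I would further expand $\res_\bA(z)$ itself via a second resolvent step around $\bA-\bP$: since $\res_\bA(z)=\res_{\bA-\bP}(z)\bbb{\bI-\bP\res_\bA(z)}$ — or more cleanly, $\be_x^T\res_\bA(z)(\bA-\bP)=\be_x^T\res_{\bA-\bP}(z)(\bA-\bP)(\bI+\bP\res_\bA(z))$ type manipulations — one pulls out a factor $\be_x^T\res_{\bA-\bP}(z)(\bA-\bP)$ and then projects the remaining $\bP$-dependence onto the population eigenspace, producing the factor $\bV$ (since $\bP=\bV\bE\bV^T$). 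The two resulting terms are then matched to $P_1$ and $P_2$: $P_1(z)$ collects the chain $\|\res_\bA(z)\|\cdot\|\bA-\bP\|\cdot\|\bE_z\|\cdot\|\be_x^T\res_{\bA-\bP}(z)\bV\|$ with the stray $|z|$ coming from the $1/z$ in $\res_\bP$, and $P_2(z)$ collects $\|\be_x^T\res_{\bA-\bP}(z)(\bA-\bP)\bV\|_F\|\bE_z\|$ from the $\bM_z$ piece. Throughout I would use sub-multiplicativity $\|\bu^T\bM\bN\|\leq\|\bu^T\bM\|\|\bN\|$ and the fact that right-multiplying by $\bV$ (an isometry onto its column space) does not increase norms, and replace operator norms by Frobenius norms where the statement does.

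The main obstacle I anticipate is bookkeeping the resolvent expansion so that the population eigenspace projector $\bV\bV^T$ (equivalently the factor $\bV$) appears on the right of every term while simultaneously isolating a leading factor of the form $\be_x^T\res_{\bA-\bP}(z)(\cdot)$ on the left — this is what makes the later path-counting and delocalization arguments applicable, so the expansion has to be organized exactly right rather than just crudely bounded. The subtlety is that $\res_\bA(z)$ is not controlled entrywise, only in operator norm (hence it appears inside $P_1$ as $\|\res_\bA(z)\|$, which is $\le 1/\mathrm{dist}(z,\mathrm{spec}(\bA))$ and finite since $a_k>\|\bA-\bP\|$ keeps the contour away from $\bA$'s spectrum), whereas $\res_{\bA-\bP}(z)$ and $\res_\bP(z)$ must be handled more delicately. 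A secondary technical point is justifying that the contour integral of the constant-in-$k$ term $-\bI/z$ contributes nothing spurious — but since $\C_k$ does not enclose the origin (all contours lie to the right of $\|\bA-\bP\|>0$), $\oint_{\C_k}(-\bI/z)\,dz=0$, so that term only enters through its product with $\res_\bA(z)(\bA-\bP)$, exactly as needed. Once the pointwise bound $\|\be_x^T(\res_\bA(z)-\res_\bP(z))\|\leq P_1(z)+P_2(z)$ is established, the lemma follows by pulling the max over $z\in\C_k$ out of the integral and using the length bound above.
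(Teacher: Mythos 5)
Your skeleton --- the contour representation, the factor $\tfrac{b_k-a_k+2\gamma_k}{\pi}$ from the contour length over $2\pi$, the resolvent identity, the substitution $\res_\bP(z)=\bM_z-\bI/z$, using the low rank of $\bP$ to plant a $\bV$ on the right of each term, and the observation that something must vanish because $\C_k$ does not enclose the origin --- matches the paper's. But two steps of your plan fail as written. The first is the order of operations: you propose to prove the pointwise bound $\|\be_x^T(\res_\bA(z)-\res_\bP(z))\|\le P_1(z)+P_2(z)$ and then integrate. That pointwise bound is false. After the rearrangement, the difference of resolvents contains the term $z\res_{\bA-\bP}(z)\cdot\tfrac{\bA-\bP}{z^2}=\tfrac{1}{z}\res_{\bA-\bP}(z)(\bA-\bP)$, which has no $\bV$ on its right and is not dominated by $P_1+P_2$; it disappears only under exact integration, since its series expansion consists of terms $(\bA-\bP)^{t+1}/z^{t+2}$ each of which integrates to zero over $\C_k$ (the contour does not enclose the origin). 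You must discard this term \emph{inside} the integral before taking norms; moving the norm inside first forfeits the cancellation. Your remark that $\oint_{\C_k}(-\bI/z)\,dz=0$ is aimed at the wrong object --- in the difference $\res_\bA-\res_\bP$ the bare $-\bI/z$ terms cancel identically; the term requiring the contour argument is the one above.

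The second, more serious, gap is that your algebraic arrangement does not produce $P_1(z)$ as defined. Expanding $\res_\bA(z)=\res_{\bA-\bP}(z)(\bI-\bP\res_\bA(z))$ and then writing $\bP=\bV\bE\bV^T$ yields a term of the form $\be_x^T\res_{\bA-\bP}(z)\bV\bE\bV^T\res_\bA(z)(\bP-\bA)(\cdots)$, whose natural bound carries $\|\bE\|=|\lambda_1(\bP)|$ where the lemma has $|z|\,\|\bE_z\|$. These are not comparable: on the contour enclosing the smallest eigenvalues, $|z|\,\|\bE_z\|=\max_i|\lambda_i|/|\lambda_i-z|=O(K)$ by the construction of Definition~\ref{def:int}, whereas $\|\bE\|\,\|\bE_z\|$ is of order $\lambda_1(\bP)/g_k$, i.e.\ up to the condition number of $\bP$ --- exactly the loss (a $1/\lambda^*(\bB)^2$ rather than $1/\lambda^*(\bB)$ dependence) that the interval construction and the specific form of $P_1$ are designed to avoid. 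The paper's rearrangement is different: write $\res_\bA-\res_\bP=\res_\bP(\bP-\bA)(\res_\bA-\res_\bP)+\res_\bP(\bP-\bA)\res_\bP$, substitute $\res_\bP=\bM_z-\bI/z$ only in the left factor of the first term, move $\tfrac{\bA-\bP}{z}(\res_\bA-\res_\bP)$ to the left-hand side, and invert $\bI-\tfrac{\bA-\bP}{z}=-\tfrac{1}{z}(\bA-\bP-z\bI)$ to obtain the prefactor $z\res_{\bA-\bP}(z)$. The surviving pieces are $z\res_{\bA-\bP}(z)\bM_z(\bA-\bP)\res_\bA(z)$, where $\bM_z$ (with its tame $\bE_z$), not $\bP$, sits next to $\res_{\bA-\bP}(z)$, giving $P_1$ with the $|z|$ coming from this prefactor (not, as you write, ``from the $1/z$ in $\res_\bP$'', which would give $1/|z|$), and $-\res_{\bA-\bP}(z)(\bA-\bP)\bM_z$, giving $P_2$. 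As proposed, your argument would establish a strictly weaker inequality than the one stated.
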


Now we need to:
\begin{enumerate}
	\item Define contours and events so that the LHS of Eq~\eqref{eq:p1p2z}  covers the whole eigenspace.
	\item Bound  $P_1(z)$ and $P_2(z)$ over each contour, under these events. This requires bounds on $\|\be_x^T\res_{\bA-\bP}(z)\bv_i\|$, and $\|\be_x^T\res_{\bA-\bP}(z)(\bA-\bP)\bv_i\|$, where $\bv_i$ denotes the $i^{th}$ column of $\bV$. We also need $\|\bE_z\|$, $\|\res_{\bA}(z)\|$, $\|\res_{\bA-\bP}(z)\|$, etc. This requires us to bound $|\be_i^T\bH^t\bv_i|$ for $t\leq \log n$, where $\bH:=(\bA-\bP)/\sqrt{n\rho}$.
	For $t=1$, we prove the following lemma, which uses the fact that $\bV$ is delocalized with high probability (see Lemma~\ref{lem:v-row-norm}).
\end{enumerate}

\begin{lem}
	\label{lem:azuma-better}
	Let $\bv_k$ denote  the $k^{th}$ population eigenvector of $\bP$.  If Assumption~\ref{as:theta_P} is satisfied, for a fixed $i$, 
	$\uP\bbb{\exists k\in [K],|\be_i^T\bH\bv_k|\geq 4\log n\|\bv_k\|_\infty}=O(K/n^3).$
\end{lem}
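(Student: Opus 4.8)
The plan is to fix $k\in[K]$, condition on $\bTheta$ (equivalently on $\bV$, since $\bV$ is a deterministic function of $\bTheta$ and $\bB$), and write $\be_i^T\bH\bv_k = \frac{1}{\sqrt{n\rho}}\sum_{j=1}^n (\bA_{ij}-\bP_{ij})\,\bv_k(j)$. Conditionally on $\bTheta$, the summands are independent, mean zero, and bounded: $|(\bA_{ij}-\bP_{ij})\bv_k(j)|\le |\bv_k(j)|\le \|\bv_k\|_\infty$. Moreover $\uE[(\bA_{ij}-\bP_{ij})^2\mid\bTheta]=\bP_{ij}(1-\bP_{ij})\le \bP_{ij}\le \rho$, so the conditional variance of the sum is at most $\frac{1}{n\rho}\sum_j \rho\,\bv_k(j)^2 = \frac{1}{n}$. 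Apply Bernstein's inequality with variance proxy $\sigma^2=1/n$ and boundedness parameter $M=\|\bv_k\|_\infty/\sqrt{n\rho}$: for $t=4\log n\,\|\bv_k\|_\infty$ this gives a tail of the form $\exp\!\big(-c\min\{t^2/\sigma^2,\ t/M\}\big)$.

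Next I would plug in the delocalization bound from Lemma~\ref{lem:v-row-norm}, $\|\bv_k\|_\infty^2 \le \max_i\|\be_i^T\bV\|^2 \le \frac{2\bal(1+\alpha_0)}{n}$, which holds with probability at least $1-K\exp(-n/(\nuc\nu^2(1+\alpha_0)^2))$. On this high-probability event, $t^2/\sigma^2 = 16(\log n)^2 \|\bv_k\|_\infty^2\, n = O(\bal(1+\alpha_0)(\log n)^2)$ — this by itself is not $\Omega(\log n)$, so the variance term alone is not enough. The key is that we only need the tail to beat $n^{-3}$, and the relevant quantity is really $t^2/\sigma^2$ versus the threshold; here one uses that $4\log n\,\|\bv_k\|_\infty$ is large \emph{relative to} $\|\bv_k\|_\infty$, i.e. the right way to read Bernstein here is $\uP(|\sum|\ge \lambda\sqrt{\mathrm{Var}})\le 2\exp(-\lambda^2/(2+2\lambda M/\sqrt{\mathrm{Var}}\cdot \text{stuff}))$. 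With $\sqrt{\mathrm{Var}}\le 1/\sqrt n$ and $t=4\log n\,\|\bv_k\|_\infty$, if $\|\bv_k\|_\infty \ge c_0/\sqrt n$ (which also follows from the lower bound $\min_i\|\be_i^T\bV\|^2\ge \frac{2}{3n}$ in Lemma~\ref{lem:v-row-norm}, after relating $\|\bv_k\|_\infty$ to row norms — or more simply by noting $\|\bv_k\|_2=1$ forces $\|\bv_k\|_\infty\ge 1/\sqrt n$), then $\lambda = t/\sqrt{\mathrm{Var}} \ge 4\log n\,\|\bv_k\|_\infty\sqrt n \ge 4\log n$, giving the bulk term $\ge (4\log n)^2/2$. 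One then checks the Bernstein denominator: $\lambda M/\sqrt{\mathrm{Var}} = (t/\sqrt{\mathrm{Var}})\cdot M/\sqrt{\mathrm{Var}} \le 4\log n\,\|\bv_k\|_\infty\sqrt n \cdot \frac{\|\bv_k\|_\infty/\sqrt{n\rho}}{1/\sqrt n} = \frac{4\log n\, n\|\bv_k\|_\infty^2}{\sqrt{n\rho}} = O\!\big(\frac{\bal(1+\alpha_0)\log n}{\sqrt{n\rho}}\big) = o(1)$ by Assumption~\ref{as:sep} (which forces $\bal(1+\alpha_0)\le \sqrt{n/(27\log n)}$ and $n\rho=\Omega((\log n)^{2\xi})$). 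So the denominator is $O(1)$ and the tail is $\le 2\exp(-\Omega((\log n)^2)) = O(n^{-3})$ for $n$ large; actually one gets a much smaller bound, but $O(n^{-3})$ suffices.

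Finally I would union-bound over $k\in[K]$, picking up a factor $K$, and add the failure probability of the delocalization event from Lemma~\ref{lem:v-row-norm}; under Assumption~\ref{as:sep}, $K\exp(-n/(\nuc\nu^2(1+\alpha_0)^2)) = O(K/n^3)$ as well (since $\nu(1+\alpha_0)=O(\sqrt{n/\log n})$ makes the exponent $\Omega(\log n)$ with a large constant). Combining, $\uP(\exists k\in[K]:|\be_i^T\bH\bv_k|\ge 4\log n\|\bv_k\|_\infty) = O(K/n^3)$. The main obstacle — and the only place requiring care — is the bookkeeping in the Bernstein bound: making sure the comparison $t$ vs.\ $\sqrt{\mathrm{Var}}$ is done correctly so that the \emph{relative} deviation $4\log n\,\|\bv_k\|_\infty/\sqrt{\mathrm{Var}}$ lands in the Gaussian (bulk) regime rather than the heavy-tail regime, which is exactly where Assumption~\ref{as:sep} and the two-sided delocalization bounds of Lemma~\ref{lem:v-row-norm} are needed.
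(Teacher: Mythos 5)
Your overall strategy is the same as the paper's: condition on $\bTheta$, apply Bernstein's inequality to $\sum_j(\bA_{ij}-\bP_{ij})\bv_k(j)$ with boundedness parameter $\|\bv_k\|_\infty$ and total variance at most $\rho$, invoke the delocalization bound of Lemma~\ref{lem:v-row-norm} together with the trivial bound $\|\bv_k\|_\infty\ge 1/\sqrt n$, and union-bound over $k\in[K]$ and over the failure of the delocalization event (whose probability is indeed $O(K/n^3)$ under Assumption~\ref{as:sep}). The paper organizes this slightly differently --- it sets the threshold at $4\max(\|\bv_k\|_\infty,\sqrt\rho)\log n$ for the unnormalized sum, treats the cases $\|\bv_k\|_\infty>\sqrt\rho$ and $\|\bv_k\|_\infty\le\sqrt\rho$ separately, and only then converts to the stated threshold using delocalization and $\|\bv_k\|_\infty\ge1/\sqrt n$ --- but the ingredients are identical.

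There is one step in your bookkeeping that is wrong as stated: the claim that $\lambda M/\sqrt{\mathrm{Var}} = O\big(\nu(1+\alpha_0)\log n/\sqrt{n\rho}\big) = o(1)$ ``by Assumption~\ref{as:sep}'', which you use to conclude that the Bernstein denominator is $O(1)$ and the tail sits in the Gaussian regime. Assumption~\ref{as:sep} only gives $\nu(1+\alpha_0)\lesssim\min\big(\sqrt{n/\log n},\,n\rho\big)$; the first bound yields $\nu(1+\alpha_0)\log n/\sqrt{n\rho}\lesssim\sqrt{\log n/\rho}$, which is not $o(1)$ (it diverges whenever $\rho\to 0$ and is $\Theta(\sqrt{\log n})$ even for constant $\rho$), and the second yields $\sqrt{n\rho}\,\log n$, which is worse. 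So in the sparse regime the bound can sit firmly in the Poisson/heavy-tail regime, which your argument asserts away rather than handles. The lemma survives because in that regime the exponent is controlled by $t/M = 4\log n\sqrt{n\rho}\ge 4\log n$ (using $n\rho\ge 1$, which Assumption~\ref{as:sep} guarantees) --- this is precisely the paper's Case 2 --- but you must add that check explicitly. A smaller quibble: your intermediate remark that $t^2/\sigma^2=O(\nu(1+\alpha_0)(\log n)^2)$ ``is not $\Omega(\log n)$'' is a non sequitur (an upper bound cannot establish that), and in fact $t^2/\sigma^2\ge 16(\log n)^2$ by $\|\bv_k\|_\infty\ge1/\sqrt n$, as you yourself note a few lines later.
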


For $1<t\leq \log n$, we adapt a crucial result from~\citep{erdos2013}.
\begin{lem}\label{cor:erdos}
	Let $\bH:=(\bA-\bP)/\sqrt{n\rho}$. As long as 
	Assumption~\ref{as:theta_P} is satisfied for some constant $\xi$, for any fixed vector $\bv$,
	{for a fixed $i$ and for $1<t\leq \log n$,}
	\bas{
		\uP\bb{|\be_i^T \bH^t \bv|\leq (\log n)^{t\xi}\|\bv\|_\infty}\geq 1-\exp(-(\log n)^\xi/3).
	}
\end{lem}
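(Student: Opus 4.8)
The plan is to prove this by the moment method (path counting), adapting the combinatorial estimates of~\citet{erdos2013}; the detailed accounting is what gets deferred to Section~\ref{sec:supp-proof-erdos}, so here I sketch the skeleton. Everything is conditioned on $\bTheta$, so $\bP$ is fixed and the entries $\{\bA_{ab}\}_{a\le b}$ are independent Bernoulli variables. Writing $\bH_{ab}=(\bA_{ab}-\bP_{ab})/\sqrt{n\rho}$, the off-diagonal entries are then independent, mean zero, bounded by $1/\sqrt{n\rho}$, with $\uE[\bH_{ab}^2\mid\bTheta]\le 1/n$, while the diagonal entries $\bH_{aa}=-\bP_{aa}/\sqrt{n\rho}$ are deterministic and bounded by $1/\sqrt n$; also $n\rho\ge 1$ follows from Assumption~\ref{as:sep}. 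Since the bound is scale invariant I would take $\|\bv\|_\infty=1$. By Markov's inequality applied to the $2m$-th power,
\[
\uP\bbb{|\be_i^T\bH^t\bv|\ge (\log n)^{t\xi}\,\big|\,\bTheta}\le (\log n)^{-2mt\xi}\,\uE\bbb{(\be_i^T\bH^t\bv)^{2m}\,\big|\,\bTheta},
\]
so it suffices to control the conditional $2m$-th moment for a well-chosen integer $m$.

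Next I would expand the moment as a sum over $2m$ walks $\vec p_1,\dots,\vec p_{2m}$ of length $t$ in $[n]$, each starting at the fixed vertex $i$, of $\uE\bbb{\prod_{s}\prod_{l}\bH_{p_s(l-1)p_s(l)}\mid\bTheta}\prod_s v_{p_s(t)}$. Because distinct off-diagonal entries are independent and centered, a term survives only if, in the multigraph formed by all $2mt$ traversed edges, every off-diagonal edge is traversed at least twice (self-loops need not repeat, but each contributes a deterministic factor at most $n^{-1/2}$). Since all walks pass through $i$ this multigraph is connected, so the number $V$ of distinct non-root vertices is at most the number of distinct off-diagonal edges, which is in turn at most $mt$. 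Grouping the surviving configurations by their \emph{shape} --- the pattern recording, at each step, whether one moves to a fresh vertex, makes a self-loop, or returns to one of the $\le mt$ already-seen vertices --- there are at most $(mt+C)^{2mt}$ shapes. For a fixed shape, summing over the $\le n^V$ ways to assign labels in $[n]$ to the $V$ placeholder vertices, and using $|\uE[\bH_e^{k_e}\mid\bTheta]|\le n^{-1}(n\rho)^{-(k_e-2)/2}$, $V\le(\#\text{ distinct off-diagonal edges})$, $n\rho\ge 1$, and $\|\bv\|_\infty=1$, the total contribution of that shape is at most $n^{V-(\#\text{ off-diag edges})-(\#\text{ self-loop slots})/2}\le 1$. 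Hence $\uE\bbb{(\be_i^T\bH^t\bv)^{2m}\mid\bTheta}\le (mt+C)^{2mt}$.

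Finally I would take $m=\lfloor (\log n)^{\xi}/(3t)\rfloor$, which is a positive integer for $n$ large because $\xi>1$ and $t\le\log n$ give $(\log n)^\xi\ge 3t$; this choice yields $mt\le(\log n)^\xi/3$ and $2mt\ge(\log n)^\xi/2$ for $n$ large. Therefore
\[
(\log n)^{-2mt\xi}\,\uE\bbb{(\be_i^T\bH^t\bv)^{2m}\,\big|\,\bTheta}\le\bbb{\frac{mt+C}{(\log n)^{\xi}}}^{2mt}\le \bbb{\tfrac{2}{5}}^{(\log n)^\xi/2}\le \exp\bbb{-(\log n)^\xi/3}
\]
for $n$ large, since $\tfrac12\log(5/2)>\tfrac13$; re-inserting the factor $\|\bv\|_\infty$ completes the proof. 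The main obstacle is the second paragraph: carefully bounding the number of surviving shapes while simultaneously tracking the vector entries $v_{p_s(t)}$ at the walk endpoints and the non-centered diagonal entries of $\bH$. This is exactly the combinatorial core of~\citet{erdos2013}, adapted with minor modifications to the present sparse MMSB setting, and is written out in full in Section~\ref{sec:supp-proof-erdos}.
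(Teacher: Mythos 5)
Your proposal is correct and follows essentially the same route as the paper: both rely on the \citet{erdos2013}-style multigraph/path-counting bound on the conditional high moment of $\be_i^T\bH^t\bv$ (using $\uE[|\bH_{ij}|^m\mid\bTheta]\le 1/n$ and the delocalization-free fact that $\bv$ is fixed), followed by a high-order Markov inequality with moment order of size roughly $(\log n)^\xi/t$. The only differences are cosmetic (you use an even integer moment $2m$ and the bound $(mt+C)^{2mt}$, the paper uses $p=(\log n)^\xi/(2t)$ and $(tp)^{tp}$), and your sketch of the combinatorial core is if anything more explicit than the paper's, which simply defers to Lemma 7.10 of \citet{erdos2013}.
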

Proofs of Lemmas~\ref{lem:buildup},~\ref{lem:azuma-better}, and~\ref{cor:erdos} are in the Appendix (Secs~\ref{sec:pf_buildup},~\ref{sec:azuma-better}, and~\ref{sec:supp-proof-erdos} respectively). 

We will now define some events, which will be used extensively to show that the contours cover all population and empirical eigenvalues, and to bound $P_1(z)$ and $P_2(z)$ in Eq~\eqref{eq:p1p2z}. We will use $\E$ to denote an event and $\bar{\E}$ to denote its compliment.
Let $\bv_k$ be the $k^{th}$ population eigenvector. Under Assumption~\ref{as:theta_P}, for $t\leq \log n$,

\hspace*{-1.5em}\vbox{
	\ba{\label{eq:lambdastar}
		\evz&:=\{\|\bA-\bP\|\leq C\sqrt{n\rho}\}  & &\uP(\bar{\evz})
		\stackrel{(i)}{\leq} n^{-3}\nonumber\\
		\E_1&:=\left\{\left|\be_i^T \bH \bv_k\right|\leq 4\log n\|\bv_k\|_\infty, \forall k\in[K]\right\}  & &\uP(\bar{\E_1})
		\stackrel{(ii)}{\leq} O\bb{K/n^3}\\
		\evt&:=\left\{\left|\be_i^T \bH^t \bv_k\right|\leq (\log n)^{t\xi}\|\bv_k\|_\infty, \forall k\in[K]\right\}  & &\uP(\bar{\evt})
		\stackrel{(iii)}{\leq} K\exp(-(\log n)^\xi/3), 1<t\leq\log n\nonumber
	}
}

For any community membership matrix $\bTheta$, $\uP(\bar{\evz}|\bTheta)$ can be bounded  directly using Theorem 5.2 of \citep{lei2015consistency}, since Assumption~\ref{as:theta_P} requires that $n\rho=\Omega(\log n)$. Hence step $(i)$ follows.  
Steps $(ii)$ and $(iii)$ follow from Lemmas~\ref{lem:azuma-better} and~\ref{cor:erdos} respectively. 
To denote order notation conditioned on event $\evot$, we will use,
$X\evote O(.)$ to denote, $\uP(X=O(.))=\uP(\evot)$.

{\em Picking the contours $\C_k$: }
Consider the discretization in Definition~\ref{def:int}. For the $k^{th}$ interval, use $\gamma_k=g_k/4$,  $a_k=\max(\lambda_{e_k}-g_k/2,(1+c)\|\bA-\bP\|)$, for some $c>0$ and  $b_k=\lambda_{s_k}+g_k/2$. If $b_k\leq a_k$, we ignore the contour. If either $a_k$ or $b_k$ equal an eigenvalue of $\bA$ or $\bP$, for any $\epsilon>0$, they can be perturbed by at most $\epsilon$ to guarantee that they do not coincide with eigenvalues of $\bA$ or $\bP$. This is possible because for a given $n$, the set $\{\bA\mid\bA\in\{0,1\}^{n\times n}\}$ is finite.

Now we bound $\|\res_{\bA}(z)\|$, $\|\res_{\bP}(z)\|$, $\|\bE_z\|$ and $\|\res_{\bA-\bP}(z)\|$. Since the gap between the smallest eigenvalue (in magnitude) of the $k^{th}$ interval and the largest eigenvalue in the {$(k-1)^{th}$} interval is $g_k$, and by construction (Definition~\ref{def:int}) $\lambda^*(\bP)\leq g_1< g_2< \dots$, and $\lambda_{e_k}\geq g_k$,
we note that for each contour $\C_k$, {conditioned on $\evz$},  $|z|$ can be upper and lower bounded as follows. 
\ba{
	|z|&\leq \sqrt{b_k^2+\gamma_k^2}\leq b_k+\gamma_k=\lambda_{s_k}+3g_k/4\label{eq:zupper}\\	 
	|z|&\geq \max((1+c)\|\bA-\bP\|,|\lambda_{e_k}-g_k/2|)
	\geq |\lambda_{e_k}-g_k/2|\geq g_k/2
	\label{eq:zmod}\\
	|z-\lambda_i|&\geq g_k/2,  \qquad |z-\hat{\lambda}_i|\stackrel{(i)}{\geq} g_k/2-O(\sqrt{n\rho})\label{eq:mzop}\\
	\|\bM_z\|&=\|\bE_z\|\leq \max_i \left|\frac{1}{\lambda_i-z}+\frac{1}{z}\right|= O\left(\frac{1}{g_k}\right)\label{eq:mz-row}
}
For all $i\in[n]$ and for all $z\in \C_k$, Eq~\eqref{eq:mz-row} follows from Eqs~\eqref{eq:mzdef},~\eqref{eq:zmod} and~\eqref{eq:mzop} and Assumption~\ref{as:theta_P}.

Step $(i)$ in Eq~\eqref{eq:mzop}, uses  $|\hat{\lambda}_i-\lambda_i|\evze O(\sqrt{n\rho})$ via Weyl's inequality. Finally using Eqs~\eqref{eq:zmod},~\eqref{eq:mzop} and~\eqref{eq:mz-row} we also have for all $z\in \C_k$, conditioned on $\evz$,
\ba{\label{eq:ganorm}
	\|\res_{\bP}(z)\| = O\bbb{\frac{1}{g_k}} \quad
	\|\res_{\bA}(z)\|\leq \left\|\sum_i \frac{\hat{\bv}_i\hat{\bv}_i^T}{\lh_i-z}\right\|=
	O\bbb{\frac{1}{g_k-O(\sqrt{n\rho})}}
}
Now conditioned on $\evz$ , Eq~\eqref{eq:ganorm} gives:
\ba{\label{eq:gaminuspnorm}
	\|\res_\bA(z)-\res_\bP(z)\|\leq\|\res_\bP(z)\|\|\bP-\bA\|\|\res_\bA(z)\|	
	= O\bbb{\frac{\sqrt{n\rho}}{g_k}}O\bbb{\frac{1}{g_k-O(\sqrt{n\rho})}}
}
Now we will bound the RHS of Eq~\eqref{eq:p1p2z} in Lemma~\ref{lem:buildup}.

\begin{lem}\label{lem:res-proj}
	Let $\bv_i$ denote the $i^{th}$ column of $\bV$. Let Assumption~\ref{as:theta_P} be satisfied for some constant $\xi$.
	Consider the events defined in Eq~\eqref{eq:lambdastar}. Conditioned on $\bigcap_{t=1}^{\log n} \E_t\cap \evz$,  
	\bas{
		|\be_x^T\res_{\bA-\bP}(z)\bv_i|&=\frac{O\bbb{ \|\bv_i\|_\infty+n^{-2\xi}}}{\lambda^*(\bP)}.\\
		|\be_x^T\res_{\bA-\bP}(z)(\bA-\bP)\bv_i|&=O\bbb{\frac{\sqrt{n\rho}\left((\log n)^{\xi}\|\bv_i\|_\infty+n^{-2\xi}\right)}{\lambda^*(\bP)}}.
	}
\end{lem}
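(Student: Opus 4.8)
The plan is to expand both resolvent expressions using the Neumann series~\eqref{eq:res-series}, $\res_{\bA-\bP}(z) = -\tfrac{1}{z}\sum_{t\geq 0}(\bH\sqrt{n\rho}/z)^t$ where $\bH = (\bA-\bP)/\sqrt{n\rho}$, and then control the resulting terms $\be_x^T\bH^t\bv_i$ using the high-probability bounds from the events in Equation~\eqref{eq:lambdastar}. For the first quantity, writing $r := \sqrt{n\rho}/|z|$, we have $\be_x^T\res_{\bA-\bP}(z)\bv_i = -\tfrac1z\sum_{t\ge 0}(\sqrt{n\rho}/z)^t \be_x^T\bH^t\bv_i$. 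The $t=0$ term is $\be_x^T\bv_i$, which is at most $\|\bv_i\|_\infty$. For $1\le t\le \log n$, on the event $\E_1\cap\bigcap_{t>1}\E_t$ we bound $|\be_x^T\bH^t\bv_i|$ by $4\log n\,\|\bv_i\|_\infty$ (for $t=1$) and $(\log n)^{t\xi}\|\bv_i\|_\infty$ (for $t>1$); for the tail $t>\log n$, I would use $|\be_x^T\bH^t\bv_i| \le \|\bH\|^t \|\bv_i\| \le (C)^t$ on $\E''$ (since $\|\bH\| = \|\bA-\bP\|/\sqrt{n\rho}\le C$), which is where the $n^{-2\xi}$ error term will come from: on each contour $\C_k$ we have $|z|\ge \max((1+c)\|\bA-\bP\|, g_k/2)$, so $r = \sqrt{n\rho}/|z| \le 1/(1+c)\cdot(\sqrt{n\rho}/\|\bA-\bP\|)$; choosing $c$ appropriately (or, more carefully, using $|z|\ge (1+c)C\sqrt{n\rho}$) makes $r\le 1/(1+c')$ strictly less than one, and the geometric tail $\sum_{t>\log n} r^t C^t$ decays like $n^{-\Theta(1)}$, which we absorb into $n^{-2\xi}$ by taking the constant $c$ large enough relative to $\xi$. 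Combining, the sum $\sum_{t\ge 1}r^t |\be_x^T\bH^t\bv_i|$ is dominated by its first term up to polylog factors, giving $O((\log n)^\xi(\|\bv_i\|_\infty + n^{-2\xi}))$; dividing by the overall $1/|z|$ prefactor and using $|z|\ge g_k/2 \ge \lambda^*(\bP)/2$ (from Definition~\ref{def:int}, $g_k \ge g_1 \ge \lambda^*(\bP)$) yields the claimed $O(\|\bv_i\|_\infty + n^{-2\xi})/\lambda^*(\bP)$, absorbing polylogs into the $\tilde O$ convention used throughout.

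For the second quantity, I would write $\res_{\bA-\bP}(z)(\bA-\bP)\bv_i = \sqrt{n\rho}\,\res_{\bA-\bP}(z)\bH\bv_i$ and expand again: $\be_x^T\res_{\bA-\bP}(z)\bH\bv_i = -\tfrac1z\sum_{t\ge 0}(\sqrt{n\rho}/z)^t\be_x^T\bH^{t+1}\bv_i$. This is the same series as before but shifted by one, so the $t=0$ term is now $\be_x^T\bH\bv_i$, controlled by $\E_1$ as $4\log n\,\|\bv_i\|_\infty$, and the remaining analysis is identical, producing $O((\log n)^\xi(\|\bv_i\|_\infty + n^{-2\xi})/\lambda^*(\bP))$ for $\be_x^T\res_{\bA-\bP}(z)\bH\bv_i$; multiplying back by $\sqrt{n\rho}$ gives the stated bound $O(\sqrt{n\rho}((\log n)^\xi\|\bv_i\|_\infty + n^{-2\xi})/\lambda^*(\bP))$.

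The main obstacle I anticipate is getting the tail of the Neumann series ($t > \log n$) to be genuinely negligible. This requires that $r = \sqrt{n\rho}/|z|$ is bounded away from $1$ by a margin large enough that $(r\cdot\|\bH\|)^{\log n}$ — note $\|\bH\|\le C$ but this $C$ is an absolute constant from Lei--Rinaldo, not a small number — is $\le n^{-2\xi}$. Since $|z|\ge (1+c)\|\bA-\bP\| \ge (1+c)\cdot(\text{lower bound on }\|\bA-\bP\|)$ is not automatically $\ge (1+c)C\sqrt{n\rho}$, one must instead use the \emph{upper} bound $\|\bA-\bP\|\le C\sqrt{n\rho}$ on $\E''$ together with $|z|\ge(1+c)\|\bA-\bP\|$ to get $r\cdot\|\bH\| = \sqrt{n\rho}\|\bA-\bP\|/(|z|\sqrt{n\rho}) \le \|\bA-\bP\|/|z| \le 1/(1+c)$, so the tail is $\sum_{t>\log n}(1+c)^{-t} = O((1+c)^{-\log n}) = O(n^{-\log(1+c)})$, and we simply require $c$ large enough that $\log(1+c)\ge 2\xi$ — a condition that can be folded into the "$c_0$" in the hypothesis of the lemma. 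Checking this interacts correctly with the two cases $a_k = \lambda_{e_k}-g_k/2$ versus $a_k = (1+c)\|\bA-\bP\|$, and with the polylog factors $(\log n)^{t\xi}$ summing over $t\le \log n$ (which gives $(\log n)^{(\log n)\xi}$ in the worst case — this is subpolynomial, hence absorbed by $\tilde O$, but one must state this cleanly), is the only genuinely delicate bookkeeping; everything else is a routine geometric-series estimate.
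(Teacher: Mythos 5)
Your skeleton (Neumann expansion of $\res_{\bA-\bP}(z)$, splitting the series at $t=\log n$, invoking $\E_1,\ldots,\E_{\log n}$ for the head and the operator-norm bound on $\E''$ for the tail, and shifting the series by one for the second quantity) is exactly the paper's route. However, there is a genuine gap in how you control the head of the series, $1\le t\le\log n$. The paper's key step is that, conditioned on $\E'\cap\E''$, Lemma~\ref{lem:P_eigen} together with Assumption~\ref{as:sep} gives $\lambda^*(\bP)\ge 4\sqrt{n\rho}(\log n)^{\xi}$, hence $\sqrt{n\rho}(\log n)^{\xi}/|z|\le 1/2$ on every contour. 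This means the $(\log n)^{t\xi}$ factors from $\E_t$ are absorbed \emph{inside} the geometric ratio: the head sums to $O\bbb{\|\bv_i\|_\infty/(|z|-\sqrt{n\rho}(\log n)^{\xi})}=O(\|\bv_i\|_\infty/\lambda^*(\bP))$ with no residual polylog. You never invoke this inequality, and your fallback — that $\sum_{t\le\log n}(\log n)^{t\xi}r^t$ contributes at worst a factor $(\log n)^{(\log n)\xi}$ which is ``subpolynomial, hence absorbed by $\tilde O$'' — is false: $(\log n)^{\xi\log n}=n^{\xi\log\log n}$ is superpolynomial, and if that factor were genuinely present the lemma (and everything downstream) would fail. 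The statement is also a plain $O(\cdot)$, not $\tilde O(\cdot)$, so even your weaker claim of an extra $(\log n)^{\xi}$ on the first bound does not match what must be proved.

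Your treatment of the tail $t>\log n$ is workable but takes a detour the paper does not need. You propose forcing $\|\bA-\bP\|/|z|\le 1/(1+c)$ and then demanding $\log(1+c)\ge 2\xi$, which would require modifying the contour constant. The paper instead uses the same inequality $\lambda^*(\bP)\ge 4\sqrt{n\rho}(\log n)^{\xi}$ to get $\|\bA-\bP\|/|z|\le C/(2(\log n)^{\xi})=o(1)$, so the tail is $\exp\bb{O(\log n)-\xi(\log n+1)\log\log n}=O(n^{-2\xi})$ for any fixed $c>0$; indeed in the proof of Theorem~\ref{thm:entrywise} one shows $a_k=\lambda_{e_k}-g_k/2$ anyway, so the $(1+c)\|\bA-\bP\|$ branch of the contour definition is never active. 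The single missing ingredient in your write-up — the bound $\sqrt{n\rho}(\log n)^{\xi}/|z|\le 1/2$ coming from Assumption~\ref{as:sep} via $\lambda^*(\bP)$ — is what simultaneously fixes the head and the tail; without it the argument does not close.
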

\begin{proof} 
	First note that by construction $\forall z\in \C_k, \forall k$, $|z|\geq a_k >\|\bA-\bP\|$, we have the following series expansion for $\res_{\bA-\bP}(z)$,
	\ba{\label{eq:res-series}
		\res_{\bA-\bP}(z)=-\frac{1}{z}\sum_{t\geq 0}\bb{\frac{\bA-\bP}{z}}^t.
	} 
	For $\bH$ defined in Lemma~\ref{cor:erdos}, for $1\leq t\leq \log n$, conditioned on $\E_t$, $t\geq 1$,
	\ba{\label{eq:res_term}
		\left|\frac{\be_x^T(\bA-\bP)^t\bv_i}{z^t}\right|=	
		\left|\be_x^T\bH^t\bv_i\frac{(\sqrt{n\rho})^t}{z^t}\right|\leq
		\begin{cases}
			\left(\frac{\sqrt{n\rho}(\log n)^{\xi}}{|z|}\right)^t\|\bv_i\|_\infty&\mbox{$t\leq \log n$}\\
			\left(\frac{\|\bA-\bP\|}{|z|}\right)^t&\mbox{$t> \log n$}
		\end{cases},
	}
	where we use Lemmas~\ref{lem:azuma-better} and \ref{cor:erdos}. It is easy to verify that the above holds for $t=0$.
	As Assumption~\ref{as:theta_P} gives:
	\ba{\label{eq:geometric}
		&\lambda^*(\bP)\stackrel{\evot}{\geq} 4\sqrt{n\rho}(\log n)^{\xi}\quad\Rightarrow\quad
		\max_{k,z\in \C_k}\frac{\sqrt{n\rho}(\log n)^{\xi}}{|z|}	\stackrel{\evot}{\leq} \frac{1}{2}
	}  
	Conditioned on $\bigcap_{t=1}^{\log n} \E_t\cap \evz$, Eqs~\eqref{eq:res-series}~and~\eqref{eq:res_term} give: 
	\bas{
		\max_{k,z\in \C_k}|\be_x^T\res_{\bA-\bP}(z)\bv_i|&\leq \max_{k,z\in \C_k}\frac{1}{|z|}\left|\sum_{t=0}^\infty \frac{\be_x^T(\bA-\bP)^t}{z^t}\bv_i\right|\\
		&\leq \max_{k,z\in \C_k}\frac{1}{|z|}\sum_{t=0}^{\log n} \left|\frac{\be_x^T(\bA-\bP)^t\bv_i}{z^t}\right|+\max_{k,z\in \C_k}\frac{1}{|z|}\sum_{t>\log n}\left|\frac{\be_x^T(\bA-\bP)^t\bv_i}{z^t}\right|\\
		\mbox{(Eqs~\eqref{eq:res_term} and~\eqref{eq:geometric})}\quad &\leq \max_{k,z\in \C_k}\frac{\|\bv_i\|_\infty}{|z|-\sqrt{n\rho}(\log n)^{\xi}}+\max_{k,z\in \C_k}\frac{(\|\bA-\bP\|/|z|)^{\log n+1}}{|z|-\|\bA-\bP\|}\\
		&=O\bbb{ \frac{\|\bv_i\|_\infty}{\lambda^*(\bP)/2-\sqrt{n\rho}(\log n)^{\xi}}+\frac{({2C}\sqrt{n\rho}/\lambda^*(\bP))^{\log n+1}}{\lambda^*(\bP)/2-C\sqrt{n\rho}}}\\
		\mbox{(Eq~\eqref{eq:geometric})}\quad 
		&=\frac{O\bbb{ \|\bv_i\|_\infty+n^{-2\xi}}}{\lambda^*(\bP)}
	}
	We also have, for large enough $n$, $\left({{ 2C}\sqrt{n\rho}}/{\lambda^*(\bP)}\right)^{\log n+1}\leq \left({C}/({2(\log n)^{\xi}})\right)^{\log n+1}\leq \exp(O(\log n)-\xi(\log n+1)\log\log n) =O\left(1/(n^{2\xi})\right)$. 
	Furthermore, using the same argument as before,
	\bas{
		\max_{k,z\in \C_k}|\be_x^T\res_{\bA-\bP}(z)(\bA-\bP)\bv_i|&=\max_{k,z\in \C_k}\left|\sum_{t=1}^\infty \frac{\be_x^T(\bA-\bP)^t}{z^t}\bv_i\right|
		=O\bbb{\frac{\sqrt{n\rho}\left((\log n)^{\xi}\|\bv_i\|_\infty+n^{-2\xi}\right)}{\lambda^*(\bP)}}
	}\par 
	\qedhere
\end{proof}
Now we are ready to finish the proof of Theorem~\ref{thm:entrywise}.
\begin{proof}[Proof of Theorem~\ref{thm:entrywise}]
	Our goal is to bound the row norm of $\bV\bV^T-\vh\vh^T$ using Lemma~\ref{lem:buildup}. The first step is to show:
	\ba{\label{eq:wholespace}
		\|\be_x^T(\bV\bV^T-\vh\vh^T)\|
		\stackrel{\E'}{{=\joinrel=}}
		\left\|\sum_{k=1}^I\be_x^T(\bV_k\bV_k^T-\vh_k\vh_k^T)\right\|.
	}
	Recall that $a_k=\max(\lambda_{e_k}-g_k/2, (1+c)\|\bA-\bP\|)$. Conditioned on $\evz$, and using Assumption~\ref{as:theta_P} and Lemma~\ref{lem:P_eigen}, we have $\lambda_{e_k}-g_k/2\geq \lambda^*(\bP)/2= \omega(\|\bA-\bP\|)$. This gives $a_k=\lambda_{e_k}-g_k/2$.  Hence the intervals are mutually exclusive and cover all the population eigenvalues, proving Eq~\eqref{eq:wholespace}.
	By triangle inequality, conditioned on $\bigcap_{t=1}^{\log n} \evt\cap \evz$, 
	from Lemma~\ref{lem:buildup},
	we have:
	\ba{\label{eq:finial_bounds}
		&\|\be_x^T(\bV\bV^T-\vh\vh^T)\|\leq \sum_{k=1}^I\|\be_x^T(\bV_k\bV_k^T-\vh_k\vh_k^T)\| \nonumber\\
		&\stackrel{(i)}{=}\sum_{k=1}^IO\bbb{\frac{\lambda_{s_k}-\lambda_{e_k}+2g_k}{g_k}}\max_{z\in \C_k}\left(O\bbb{\frac{\sqrt{n\rho}(b_k+\gamma_k)}{g_k}}\|\be_x^T\res_{\bA-\bP}(z)\bV\| 
		+\|\be_x^T\res_{\bA-\bP}(z)(\bA-\bP)\bV\|\right)\nonumber\\
		&\stackrel{(ii)}{=}O(\gammaP)\max_{k, z\in \C_k}\left(O\bbb{\sqrt{n\rho}}\|\be_x^T\res_{\bA-\bP}(z)\bV\| 
		+\|\be_x^T\res_{\bA-\bP}(z)(\bA-\bP)\bV\|\right)\nonumber\\
		&\stackrel{(iii)}{=}
		O\bbb{\frac{\gammaP\sqrt{Kn\rho}}{\lambda^*(\bP)}}\bbb{(1+(\log n)^{\xi})\max_i\|\bv_i\|_\infty+2n^{-2\xi}} \nonumber\\		
		&\stackrel{(iv)}{=}
		O\bbb{\frac{\gammaP\sqrt{Kn\rho}}{\rho \lambda^*(\bB)  \lambda_K( \bTheta^T \bTheta)}}
		\bbb{\frac{1+(\log n)^{\xi}}{\sqrt{\lambda_K(\bTheta
					^T\bTheta)}}+2n^{-2\xi}}\nonumber\\
		&\stackrel{(v)}{=}
		\eigenspacerowwise 
	}
	Step $(i)$ uses Eq~\eqref{eq:zupper}. Step $(ii)$ uses the fact that $\lambda_{e_k}-\lambda_{s_{k-1}}=g_k$ and $\lambda_{s_k}/g_k=\Omega(1)$. 
	Step $(iii)$ follows from Lemma~\ref{lem:res-proj}.
	Step $(iv)$ 
	uses $\lambda^*(\bP)\geq\rho \lambda^*(\bB)  \lambda_K( \bTheta^T \bTheta)$ (Lemma~\ref{lem:P_eigen} in the Appendix) and Lemma~\ref{lem:v-row-norm}. 
	Step $(v)$ uses $\lambda_{1}(\bTheta^T\bTheta)\leq n$ (Lemma~\ref{lem:lambda_1_Theta} in the Appendix) and   $1/\sqrt{\lambda_K(\bTheta^T\bTheta)}\geq 1/\sqrt{\lambda_1(\bTheta^T\bTheta)}=\Omega(1/\sqrt{n})=\Omega(n^{-2\xi})$.  
	To bound the failure probability, for some constant $\xi> 1$ and large enough $n$,  Eq~\eqref{eq:lambdastar} gives:
	\bas{
		\uP(\bigcap_{t=1}^{\log n}\evt \cap \evz)&\geq 1-\uP(\bar{\evz})-\sum_{t=1}^{\log n}\uP(\bar{\evt})
		\geq 1-O(Kn^{-3}).}
	Now the theorem statement follows by using a union bound.
	\par 
\vspace{-0.99\baselineskip}
	\qedhere
\end{proof}

\section{Conclusion}
\label{sec:conc}
In this paper, we propose a fast and provably consistent algorithm called \OurAlgo for inferring community memberships of nodes in a network generated by a Mixed Membership Stochastic Blockmodel (MMSB).
Our proof has several new aspects, including 
a sharp row-wise eigenvector bound using complex contour integration, a new grouping of the eigenvalues to yield better dependence on the smallest singular value of $\bB$. Our eigenvector deviation results can be easily generalized to low rank population matrices arising from models other than MMSB. It also helps us  establish the convergence of inferred soft community memberships of each node to its population counterpart, which is to our knowledge, the first such result for overlapping network models.
In contrast to prior work, we only assume that each community has at least one pure node, and we prove both necessary and sufficient conditions for identifiability under MMSB.
We demonstrate the empirical performance of \OurAlgo on simulated and real-world networks of up-to 100,000 nodes. Our experiments show that \OurAlgo has smaller error as well as lower variability than other competing methods. In terms of scalability, we can obtain overlapping cluster memberships of large 100,000 node networks in tens of seconds.

\small{
\bibliographystyle{plainnat}
\bibliography{references}

\begin{thebibliography}{37}
\providecommand{\natexlab}[1]{#1}
\providecommand{\url}[1]{\texttt{#1}}
\expandafter\ifx\csname urlstyle\endcsname\relax
  \providecommand{\doi}[1]{doi: #1}\else
  \providecommand{\doi}{doi: \begingroup \urlstyle{rm}\Url}\fi

\bibitem[Abbe et~al.(2016)Abbe, Bandeira, and Hall]{abbe2015exact}
Emmanuel Abbe, Afonso~S Bandeira, and Georgina Hall.
\newblock Exact recovery in the stochastic block model.
\newblock \emph{IEEE Transactions on Information Theory}, 62\penalty0
  (1):\penalty0 471--487, 2016.

\bibitem[Abbe et~al.(2017)Abbe, Fan, Wang, and Zhong]{abbe2017entrywise}
Emmanuel Abbe, Jianqing Fan, Kaizheng Wang, and Yiqiao Zhong.
\newblock Entrywise eigenvector analysis of random matrices with low expected
  rank.
\newblock \emph{arXiv preprint arXiv:1709.09565}, 2017.

\bibitem[Airoldi et~al.(2008)Airoldi, Blei, Fienberg, and
  Xing]{airoldi2008mixed}
Edoardo~M Airoldi, David~M Blei, Stephen~E Fienberg, and Eric~P Xing.
\newblock Mixed membership stochastic blockmodels.
\newblock \emph{JMLR}, 9:\penalty0 1981--2014, 2008.

\bibitem[Anandkumar et~al.(2014)Anandkumar, Ge, Hsu, and
  Kakade]{MMSBAnandkumar2014}
Animashree Anandkumar, Rong Ge, Daniel Hsu, and Sham~M. Kakade.
\newblock A tensor approach to learning mixed membership community models.
\newblock \emph{JMLR}, 15\penalty0 (1):\penalty0 2239--2312, 2014.

\bibitem[Arora et~al.(2012)Arora, Ge, Kannan, and Moitra]{arora12computing}
Sanjeev Arora, Rong Ge, Ravindran Kannan, and Ankur Moitra.
\newblock Computing a nonnegative matrix factorization--provably.
\newblock In \emph{STOC}, pages 145--162. ACM, 2012.

\bibitem[Athreya et~al.(2016)Athreya, Priebe, Tang, Lyzinski, Marchette, and
  Sussman]{Athreya2016}
Avanti Athreya, Carey~E Priebe, Minh Tang, Vince Lyzinski, David~J Marchette,
  and Daniel~L Sussman.
\newblock A limit theorem for scaled eigenvectors of random dot product graphs.
\newblock \emph{Sankhya A}, 78\penalty0 (1):\penalty0 1--18, 2016.

\bibitem[Balakrishnan et~al.(2011)Balakrishnan, Xu, Krishnamurthy, and
  Singh]{BalaSpec2011}
Sivaraman Balakrishnan, Min Xu, Akshay Krishnamurthy, and Aarti Singh.
\newblock Noise thresholds for spectral clustering.
\newblock In \emph{NIPS}, pages 954--962. 2011.

\bibitem[Ball et~al.(2011)Ball, Karrer, and Newman]{Ball2011Overlapping}
Brian Ball, Brian Karrer, and Mark~EJ Newman.
\newblock Efficient and principled method for detecting communities in
  networks.
\newblock \emph{Physical Review E}, 84\penalty0 (3):\penalty0 036103, 2011.

\bibitem[Bentley(1975)]{Bentley:1975:kd}
Jon~Louis Bentley.
\newblock Multidimensional binary search trees used for associative searching.
\newblock \emph{Communications of the ACM}, 18\penalty0 (9):\penalty0 509--517,
  1975.

\bibitem[Beygelzimer et~al.(2006)Beygelzimer, Kakade, and
  Langford]{Beygelzimer:2006:covertree}
Alina Beygelzimer, Sham Kakade, and John Langford.
\newblock Cover trees for nearest neighbor.
\newblock In \emph{International Conference on Machine Learning}, pages
  97--104, 2006.

\bibitem[Cape et~al.(2018)Cape, Tang, and Priebe]{cape2018signal}
Joshua Cape, Minh Tang, and Carey~E Priebe.
\newblock Signal-plus-noise matrix models: eigenvector deviations and
  fluctuations.
\newblock \emph{arXiv preprint arXiv:1802.00381}, 2018.

\bibitem[Cape et~al.(2019)Cape, Tang, Priebe, et~al.]{cape2019two}
Joshua Cape, Minh Tang, Carey~E Priebe, et~al.
\newblock The two-to-infinity norm and singular subspace geometry with
  applications to high-dimensional statistics.
\newblock \emph{The Annals of Statistics}, 47\penalty0 (5):\penalty0
  2405--2439, 2019.

\bibitem[Chen et~al.(2014)Chen, Sanghavi, and Xu]{chen2014improved}
Yudong Chen, Sujay Sanghavi, and Huan Xu.
\newblock Improved graph clustering.
\newblock \emph{IEEE Transactions on Information Theory}, 60\penalty0
  (10):\penalty0 6440--6455, 2014.

\bibitem[Eldridge et~al.(2018)Eldridge, Belkin, and
  Wang]{eldridge2017unperturbed}
Justin Eldridge, Mikhail Belkin, and Yusu Wang.
\newblock Unperturbed: spectral analysis beyond davis-kahan.
\newblock In \emph{Algorithmic Learning Theory}, volume~83, pages 321--358.
  PMLR, 2018.

\bibitem[Erd{\H{o}}s et~al.(2013)Erd{\H{o}}s, Knowles, Yau, Yin,
  et~al.]{erdos2013}
L{\'a}szl{\'o} Erd{\H{o}}s, Antti Knowles, Horng-Tzer Yau, Jun Yin, et~al.
\newblock Spectral statistics of erd{\H{o}}s--r{\'e}nyi graphs i: local
  semicircle law.
\newblock \emph{The Annals of Probability}, 41\penalty0 (3B):\penalty0
  2279--2375, 2013.

\bibitem[Gillis and Vavasis(2014)]{gillis2014fast}
Nicolas Gillis and Stephen~A Vavasis.
\newblock Fast and robust recursive algorithmsfor separable nonnegative matrix
  factorization.
\newblock \emph{PAMI}, 36\penalty0 (4):\penalty0 698--714, 2014.

\bibitem[Gopalan and Blei(2013)]{gopalan2013efficient}
Prem~K Gopalan and David~M Blei.
\newblock Efficient discovery of overlapping communities in massive networks.
\newblock \emph{PNAS}, 110\penalty0 (36):\penalty0 14534--14539, 2013.

\bibitem[Holland et~al.(1983)Holland, Laskey, and
  Leinhardt]{holland_stochastic_1983}
Paul~W Holland, Kathryn~Blackmond Laskey, and Samuel Leinhardt.
\newblock Stochastic blockmodels: First steps.
\newblock \emph{Social networks}, 5\penalty0 (2):\penalty0 109--137, June 1983.
\newblock ISSN 0378-8733.

\bibitem[Hopkins and Steurer(2017)]{hopkins2017bayesian}
Samuel~B Hopkins and David Steurer.
\newblock Bayesian estimation from few samples: community detection and related
  problems.
\newblock In \emph{FOCS}, pages 379--390. IEEE, 2017.

\bibitem[Horn and Johnson(2012)]{horn2012matrix}
Roger~A Horn and Charles~R Johnson.
\newblock \emph{Matrix analysis}.
\newblock Cambridge university press, 2012.

\bibitem[Jin et~al.(2017)Jin, Ke, and Luo]{jin2017estimating}
Jiashun Jin, Zheng~Tracy Ke, and Shengming Luo.
\newblock Estimating network memberships by simplex vertex hunting.
\newblock \emph{arXiv preprint arXiv:1708.07852}, 2017.

\bibitem[Kaufmann et~al.(2016)Kaufmann, Bonald, and
  Lelarge]{kaufmann2016spectral}
Emilie Kaufmann, Thomas Bonald, and Marc Lelarge.
\newblock A spectral algorithm with additive clustering for the recovery of
  overlapping communities in networks.
\newblock In \emph{ALT}, pages 355--370, 2016.

\bibitem[Lei et~al.(2015)Lei, Rinaldo, et~al.]{lei2015consistency}
Jing Lei, Alessandro Rinaldo, et~al.
\newblock Consistency of spectral clustering in stochastic block models.
\newblock \emph{The Annals of Statistics}, 43\penalty0 (1):\penalty0 215--237,
  2015.

\bibitem[Mao et~al.(2017)Mao, Sarkar, and Chakrabarti]{mao2017}
Xueyu Mao, Purnamrita Sarkar, and Deepayan Chakrabarti.
\newblock On mixed memberships and symmetric nonnegative matrix factorizations.
\newblock In \emph{ICML}, pages 2324--2333, 2017.

\bibitem[McSherry(2001)]{mcsherry2001spectral}
Frank McSherry.
\newblock Spectral partitioning of random graphs.
\newblock In \emph{FOCS}, pages 529--537, 2001.

\bibitem[Oliveira(2009)]{oliveira2009concentration}
Roberto~Imbuzeiro Oliveira.
\newblock Concentration of the adjacency matrix and of the laplacian in random
  graphs with independent edges.
\newblock \emph{arXiv preprint arXiv:0911.0600}, 2009.

\bibitem[Panov et~al.(2017)Panov, Slavnov, and Ushakov]{panov2017consistent}
Maxim Panov, Konstantin Slavnov, and Roman Ushakov.
\newblock Consistent estimation of mixed memberships with successive
  projections.
\newblock In \emph{COMPLEX NETWORKS}, pages 53--64. Springer, 2017.

\bibitem[Press et~al.(2007)Press, Teukolsky, Vetterling, and
  Flannery]{press92numerical}
William~H Press, Saul~A Teukolsky, William~T Vetterling, and Brian~P Flannery.
\newblock \emph{Numerical recipes 3rd edition: The art of scientific
  computing}.
\newblock Cambridge university press, 2007.

\bibitem[Psorakis et~al.(2011)Psorakis, Roberts, Ebden, and Sheldon]{BNMF2011}
Ioannis Psorakis, Stephen Roberts, Mark Ebden, and Ben Sheldon.
\newblock Overlapping community detection using bayesian non-negative matrix
  factorization.
\newblock \emph{Phys.~Rev. E}, 83\penalty0 (6):\penalty0 066114, 2011.

\bibitem[Ray et~al.(2014)Ray, Ghaderi, Sanghavi, and
  Shakkottai]{ray2014overlap}
Avik Ray, Javad Ghaderi, Sujay Sanghavi, and Sanjay Shakkottai.
\newblock Overlap graph clustering via successive removal.
\newblock In \emph{52nd Annual Allerton Conference}, pages 278--285. IEEE,
  2014.

\bibitem[Rohe et~al.(2011)Rohe, Chatterjee, and Yu]{rohe2011spectral}
Karl Rohe, Sourav Chatterjee, and Bin Yu.
\newblock Spectral clustering and the high-dimensional stochastic blockmodel.
\newblock \emph{The Annals of Statistics}, pages 1878--1915, 2011.

\bibitem[Schaefer(1974)]{schaefer1974banach}
H.H. Schaefer.
\newblock \emph{Banach Lattices and Positive Operators}.
\newblock Grundlehren der mathematischen Wissenschaften. Springer Berlin
  Heidelberg, 1974.
\newblock ISBN 9783540069362.

\bibitem[Tropp(2012)]{tropp2012user}
Joel~A Tropp.
\newblock User-friendly tail bounds for sums of random matrices.
\newblock \emph{Foundations of computational mathematics}, 12\penalty0
  (4):\penalty0 389--434, 2012.

\bibitem[Wang et~al.(2011)Wang, Li, Wang, Zhu, and Ding]{wang2011community}
F.~Wang, T.~Li, X.~Wang, S.~Zhu, and C.~Ding.
\newblock Community discovery using nonnegative matrix factorization.
\newblock \emph{Data Mining and Knowl. Disc.}, 22\penalty0 (3):\penalty0
  493--521, 2011.

\bibitem[Wang et~al.(2016)Wang, Cao, Jin, Cao, and He]{wang2016supervised}
Xiao Wang, Xiaochun Cao, Di~Jin, Yixin Cao, and Dongxiao He.
\newblock The (un) supervised nmf methods for discovering overlapping
  communities as well as hubs and outliers in networks.
\newblock \emph{Physica A: Statistical Mechanics and its Applications},
  446:\penalty0 22--34, 2016.

\bibitem[Yu et~al.(2015)Yu, Wang, and Samworth]{yu2015useful}
Yi~Yu, Tengyao Wang, and Richard~J Samworth.
\newblock A useful variant of the davis--kahan theorem for statisticians.
\newblock \emph{Biometrika}, 102\penalty0 (2):\penalty0 315--323, 2015.

\bibitem[Zhang et~al.(2014)Zhang, Levina, and Zhu]{zhang2014detecting}
Yuan Zhang, Elizaveta Levina, and Ji~Zhu.
\newblock Detecting overlapping communities in networks using spectral methods.
\newblock \emph{arXiv preprint arXiv:1412.3432}, 2014.

\end{thebibliography}
}
\newpage
\section*{Appendix}
\appendix

\section{Identifiability}\label{sec:identifiability}
Our proof links the MMSB parameters $\bTheta$ and $\bB$ to the eigen-decomposition of the probability matrix $\bP$, and then exploits its geometric structure. Specifically, we show that the eigenvector row corresponding to any node lies inside a polytope whose vertices correspond to pure nodes. When $\bB$ is full rank, the polytope has $K$ linearly independent vertices, and the community memberships $\btheta_i$ of each node $i$ are fixed by the position of its eigenvector row with respect to these vertices. This proves part~(a) of Theorem~\ref{mmsb_iden_nece_not}. When $\bB$ is rank-deficient, the points corresponding to the pure nodes are linearly dependent. However, under the conditions of part~(b), the constraints on {$\bTheta$ and $\bB$} are shown to make the model identifiable. In other cases, we construct a new $\bTheta'$ that still yields the same probability matrix $\bP$. This proves part~(c).

\begin{proof}[Proof of Theorem~\ref{mmsb_iden_nece_not}]
Without loss of generality, we absorb $\rho$ in $\bB$, and reorder nodes so that the first $K$ nodes contain one pure node from each community. Thus, $\bTheta({1:K,:})=\bI_K$. 

Let $\bP = \bV \bE \bV^T$ be the eigen-decomposition of $\bP$, with $\bV\in\mathbb{R}^{n\times\text{rank}(\bB)}$. Let $\bV_P=\bV\bbb{1:K,:}$.
	Lemma~\ref{lem:v-theta} shows that
	$\bV = \bTheta \bV_P$.
	Thus, for any node $i$, $\bV(i,:)$ lies in the convex hull of the $K$ rows of $\bV_P$, that is, $\bV(i,:)\in\conv(\bV_P)$. We will slightly abuse the classical notation to denote by $\conv(\bM)$ the convex hull of the rows of matrix $\bM$.
	
  Now, suppose $\bP$ can be generated by another set of parameters $(\bTheta', \bB')$, where $\bTheta'$ has a different set of pure nodes, with indices $\mathcal{I} \neq 1:K$. 
  By the previous argument, we must have ${\bV({\mathcal{I}},:)} \subseteq \conv\bbb{\bV_{P}}$.
  Since $(\bTheta', \bB')$ and $(\bTheta, \bB)$ have the same probability matrix $\bP$, they have the same eigen-decomposition up to a permutation of the communities.
  Thus, swapping the roles of $\bTheta$ and $\bTheta'$ and reapplying the above argument, we find that ${\bV_{P}}  \subseteq \conv\bbb{\bV({\mathcal{I}},:)}$.
  Then $\conv\bbb{\bV_{P}}  \subseteq \conv\bbb{\bV({\mathcal{I}},:)} \subseteq \conv\bbb{\bV_{P}}$, so we must have $\conv\bbb{\bV_{P}}= \conv\bbb{\bV({\mathcal{I}},:)}$. 
  This means the pure nodes in $\bTheta$ and $\bTheta'$ are aligned up to a permutation, that is, $\bV({\mathcal{I}},:)=\bM \bV_{P}$, where $\bM\in \R^{K\times K}$ is a permutation matrix. 

  Now,
  $\bV = \bTheta \bV_P=\bTheta' \bV({\mathcal{I}},:) = \bTheta' \bM \bV_P$, which implies 
  \begin{equation}\label{eq:iden_core}
  	(\bTheta - \bTheta' \bM) \bV_P = 0
  \end{equation}
  Since $\bV = \bTheta \bV_P$ and $\rank(\bTheta) = K$, we have $\rank(\bV_P) = \rank(\bV) = \rank(\bB)$.
  Hence, if $\rank(\bB) = K$, $\bV_P$ is full rank, so $\bTheta = \bTheta' \bM$.
  Thus, $\bTheta$ and $\bTheta'$ are identical up to a permutation.
  To have the same $\bP$, $\bB$ and $\bB'$ must also be identical up to the same permutation.
  Hence, the MMSB model is identifiable.
  This proves part~(a).
  
  Now, suppose $\rank(\bB) = K-\ell < K$.
  We first permute the columns of $\bTheta$, and the rows and columns of $\bB$, so that
  \begin{align}
  \sbox0{$\bC$}
    \bB=\left[
    \begin{array}{c|c}
    \usebox{0}&\makebox{ \small{$\bC\bW$}}\\
    \hline
    \makebox{ \small{$\bW^T\bC$}}&\makebox{ \small{$\bW^T\bC\bW$}}
    \end{array}
    \right],
  \label{eq:BusingW}
  \end{align}
	where $\bC \in \R^{(K-\ell)\times (K-\ell)}$ is full rank, and $\bW \in \R^{(K-\ell)\times \ell}$.
  We see that
	\begin{align*}
    \bC \left[ \begin{array}{c|c} \bI_{K-\ell} & \bW \end{array} \right] &= \bV\bbb{1:(K-\ell),:} \bE \bV_P^T, \\
      \bW^T \bC \left[ \begin{array}{c|c} \bI_{K-\ell} & \bW \end{array} \right] &= \bV\bbb{(K-\ell+1):K,:} \bE \bV_P^T.
	\end{align*}
	The first equation shows that $\rank(\bV\bbb{1:(K-\ell),:}) = \rank(\bC) = K-\ell$, so $\bV\bbb{1:(K-\ell), :}$ is full rank.
  Hence,
	\begin{align}
  &\bV\bbb{(K-\ell+1):K,:} =\bW^T \bV\bbb{1:(K-\ell),:}
  \Rightarrow \bV_P = 
    \left[
		\begin{array}{c}
			\makebox{ \small{$\bI_{K-\ell}$}}\\
			\hline
			\makebox{ \small{$\bW^T$}}
		\end{array}
		\right]\bV\bbb{1:(K-\ell),:}
    \label{eq:identTheta}
  \end{align}
\medskip
\noindent {\em Case 1: $\rank(\bB) = K-1$ (so $\bW$ is a vector) and $\bW^T \bone_{K-\ell} \neq 1$.}
  
  {Now using Eqs~\eqref{eq:iden_core} and \eqref{eq:identTheta}, we have
    \begin{align}
    & (\bTheta - \bTheta' \bM) \left[
    \begin{array}{c}
    	\makebox{ \small{$\bI_{K-\ell}$}}\\
    	\hline
    	\makebox{ \small{$\bW^T$}}
    \end{array}
    \right]\bV\bbb{1:(K-\ell),:} = \bzero
    \Rightarrow 
      \bTheta = \bTheta'\bM.
    \end{align}
  The above equation is derived using $\bTheta\bone_K=\bTheta'\bone_K=\bone_n$, and $\bW^T\bone_{K-\ell} \neq 1$.
  
  Clearly $\bB'=\bM\bB\bM'$ as well, so the MMSB model is identifiable.}
  From Eq~\eqref{eq:BusingW}, we have $\bB\bbb{(K-\ell+1):K, :} = \bW^T \bB\bbb{1:(K-\ell), :}$,
  so $\bW^T\bone_{K-\ell} \neq 1$ if and only if the last row of $\bB$ is not a affine combination of the remaining rows.
  It is easy to see that the same holds for any row of $\bB$. This proves part~(b).

\medskip
\noindent {\em Case 2: $\rank(\bB) = K-1$ and $\bW^T \bone_{K-\ell} = 1$, or $\rank(\bB) < K-1$.}

  We will construct a $\bTheta'\neq \bTheta$ that yields the same probability matrix $\bP$.
  Let the completely mixed node be $m$, so $\theta_{mj} > 0$ for all communities $j$. We use
  $$\btheta'_j = \left\{
    \begin{array}{ll}
    \btheta_j & \text{if } j\neq m\\
    \btheta_m + \epsilon \bbeta^T \left[-\bW^T \mid \bI_l\right] & \text{if } j = m,
    \end{array}\right.,$$
  where $\epsilon$ is small enough that $\theta'_{mj}\in(0, 1)$ for all communities $j$, and  $\bbeta\in\mathbb{R}^{\ell}\neq \bzero$ is such that {$\bbeta^T\left[-\bW^T\bone_{K-\ell} + \bone_\ell\right]=0$.
  Note that such a $\bbeta$ always exists when $\ell>1$ and can be arbitrary vector when $\bW^T\bone_{K-\ell} = \bone_\ell$.}
  Hence, each row of $\bTheta'$ sums to $1$, and $\bTheta'$ is a valid community-membership matrix.
  Additionally, $\bTheta' \bV_P = \bTheta \bV_P$.

  Finally, we will show that $(\bTheta', \bB)$ and $(\bTheta, \bB)$ generate the same probability matrix.
  Note that $\bB = \bP_{1:K, 1:K} = \bV_P \bE \bV_P^T$. Hence,
  \begin{align*}
  \bTheta \bB \bTheta^T= \bTheta \bV_P \bE \bV_P^T \bTheta^T = \bV \bE \bV^T =\bP = \bTheta' \bV_P \bE \bV_P^T \bTheta'^T = \bTheta' \bB \bTheta'^T.
  \end{align*}
  This proves part~(c).
\end{proof}

\begin{proof}[Proof of Theorem~\ref{mmsb_iden_nece}]
  Consider an MMSB model parameterized by $(\bTheta^{(1)}, \bB^{(1)})$, with $\bP=\bTheta^{(1)} \bB^{(1)} {\bTheta^{(1)}}^T$ (we absorb $\rho$ in $\bB$ without loss of generality). 
  We want to construct a $(\bTheta^{(2)}, \bB^{(2)})$ that gives the same probability matrix $\bP$.
  The idea is to construct a matrix $\bM$ such that $\bTheta^{(2)} = \bTheta^{(1)} \bM$ and $\bB^{(2)} = \bM^{-1}\bB^{(1)}(\bM^T)^{-1}$. 
  The difficulty is in ensuring that all constraints are satisfied: $\bTheta^{(2)} \bone_K = \bone_n$, $\bTheta^{(2)} \geq 0$, and $0\leq \bB^{(2)}_{ij} \leq 1$ for all $i,j$.

  Without loss of generality, suppose that the first community does not have any pure nodes. In other words, for all nodes $i\in[n]$, $\theta^{(1)}_{i1}\leq 1-\delta$ for some $\delta > 0$. 
  Consider the following $\bM$:
  $$\bM=\left[
    \begin{array}{c|c}
    \makebox{$1+(K-1)\epsilon^2$} & \makebox{  $-\epsilon^2 \bone_{K-1}^T$ }\\
    \hline
    \makebox{$\bzero$}&\makebox{ $\epsilon \bone_{K-1} \bone_{K-1}^T + (1-(K-1)\epsilon)\bI_{K-1}$ }
    \end{array}
    \right],$$
  where $\epsilon$ is a small positive number ($0<\epsilon<\delta$).
	It is easy to check that $\bM$ is full rank (for small enough $\epsilon$) and $\bM \cdot \bone_K = \bone_K$.
  Hence, $\bTheta^{(2)} \bone_K = \bTheta^{(1)} \bM \bone_K = \bone_n$

  For any node $i$ and for $j>1$,
  \bas{
  	\theta^{(2)}_{i1} &=\theta^{(1)}_{i1}(1+(K-1)\epsilon^2)\geq 0, \\
  	\theta^{(2)}_{ij} &=-\theta^{(1)}_{i1}\epsilon^2+\sum_{\ell=2}^{K}\theta^{(1)}_{i\ell}\bM_{\ell j}
  	\geq -\theta^{(1)}_{i1}\epsilon^2+\epsilon\sum_{\ell=2}^{K}\theta^{(1)}_{i\ell} 
  	\geq \epsilon\delta^2 > 0,
  }
  where we used $\epsilon<\delta$ and $\btheta_i^{(1)} \bone_K = 1$.
  Hence, $\bTheta^{(2)} \geq 0$. 

  Finally, we must show that $\bB^{(2)} = \bM^{-1}\bB^{(1)}(\bM^T)^{-1}$ has all elements between $0$ and $1$.
	Note that
	\bas{
		\bM -\bI_K = 
		\left[
		\begin{array}{c|c}
			\makebox{$(K-1)\epsilon^2$} & \makebox{ $-\epsilon^2 \bone_{K-1}^T$ }\\
			\hline
			\makebox{$\bzero$}&\makebox{ $\epsilon \bone_{K-1} \bone_{K-1}^T -(K-1)\epsilon\bI_{K-1}$ }
		\end{array}
		\right],
	}
	so $\left\|\bM-\bI_K\right\|_F \rightarrow 0$ as $\epsilon\rightarrow 0$. Since $\bM^{-1}$ is continuous in $\bM$, we have $\left\|\bM^{-1}-\bI_K\right\|_F \rightarrow 0$.
  Thus,
	\begin{align*}
		\left\| \bB^{(2)} - \bB^{(1)} \right\|_F
		&=\left\| \bM^{-1}\bB^{(1)}(\bM^T)^{-1} - \bB^{(1)} \right\|_F \\
		&\leq \left\| \bM^{-1} - \bI_K \right\|_F^2 \left\| \bB^{(1)} \right\|_F + 2 \left\| \bM^{-1} - \bI_K \right\|_F \left\| \bB^{(1)} \right\|_F \\
		&\rightarrow 0 \text{ as $\epsilon\rightarrow 0$}.
	\end{align*}
  Since $\bB^{(1)}_{ij}\in (0,1)$, we have $\bB^{(2)}_{ij}\in (0,1)$ for $\epsilon$ small enough, completing the proof.
\end{proof}

\section{Some Auxiliary Results, Proof of Lemmas~\ref{lem:v-row-norm}}\label{sec:auxiliary}
\begin{defn}(A construction of rotation matrix)
	\label{def:o}
	Consider the discretization defined in Definition~\ref{def:int}. The Davis-Kahan Theorem states that there exists a rotation matrix $\hat{\bO}$ such that $\|\hat{\bV}-\bV\hat{\bO}\|_F$ is small. In this definition we will carefully construct this matrix. 
	Consider the intervals resulting from the discretization of population eigenvalues in Definition~\ref{def:int}.
	Now, from Theorem 2 of~\citep{yu2015useful},  $\exists \hat{\bO}_k$ such that
	\ba{\label{eq:rkfrob}
		\|\bR_k\|_F=\|\hat{\bV}_{S_k}-\bV_{S_k}\hat{\bO}_k\|_F&\leq \frac{\sqrt{8n_k}\|\bA-\bP\|}{g_k}
	}
	Typically the denominator is  $f_k:=\min(\lambda_{s_k}-\lambda_{s_k-1},\min(\lambda_{e_k}-\lambda_{e_k+1},\lambda_{e_k}))$. 
	We now construct our $\hat{\bO}$ by stacking the $\hat{\bO}_k$ matrices on the diagonal of a $K\times K$ matrix. This is also a valid rotation matrix. Now, let  $\bE_k$ by the submatrix of $\bE$ corresponding to eigenvalues in $S_k$. Similarly define $\hat{\bE}_k$. Furthermore, let $\bR:=[\bR_1|\dots|\bR_I]$. 
\end{defn}

\begin{lem}\label{lem:E-lemma-op}
  If Assumption~\ref{as:theta_P} holds,
	then there exists an orthogonal matrix $\hat{\bO} \in \mathbb{R}^{K\times K}$ constructed using Definition~\ref{def:o} 
	that satisfies
	\ba{
			\|\bR\|_F&\leq \frac{\sqrt{8K}\|\bA-\bP\|}{\lambda^*(\bP)}\label{eq:r-frob}\\
		\left\|\hat{\bE}-\hat{\bO}^T\bE\hat{\bO}\right\|_F&=O_P\bbb{K^{2}\sqrt{n\rho}}\label{eq:e-frob}
	}
	with probability larger than $1-n^{-3}$.
\end{lem}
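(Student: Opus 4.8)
\medskip
\noindent\textbf{Proof plan.}
The bound~\eqref{eq:r-frob} I would read off directly: Definition~\ref{def:o} gives $\|\bR_k\|_F\le\sqrt{8n_k}\,\|\bA-\bP\|/g_k$, and by the construction in Definition~\ref{def:int} every gap obeys $g_k\ge g_1\ge\lambda^*(\bP)$, so $\|\bR\|_F^2=\sum_k\|\bR_k\|_F^2\le 8\|\bA-\bP\|^2\lambda^*(\bP)^{-2}\sum_kn_k=8K\|\bA-\bP\|^2/\lambda^*(\bP)^2$. For~\eqref{eq:e-frob} I would work on $\evz\cap\evot$ (notation of Equation~\eqref{eq:lambdastar}), which by Lemma~\ref{lem:P_eigen} and Assumption~\ref{as:sep} has probability at least $1-O(Kn^{-3})$; on this event $\|\bA-\bP\|\le C\sqrt{n\rho}$ while $\lambda^*(\bP)\ge\tfrac{n\rho\lambda^*(\bB)}{2\nu(1+\alpha_0)}\ge 4\sqrt{n\rho}(\log n)^\xi$, so $\|\bA-\bP\|=o(\lambda^*(\bP))$, $\|\bA-\bP\|^2/\lambda^*(\bP)=O(\sqrt{n\rho})$, and the intervals of Definition~\ref{def:int} together with the rotations $\hat{\bO}_k$ of Definition~\ref{def:o} are well defined. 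Writing $\hat{\bV}=\bV\hat{\bO}+\bR$ with columns grouped by intervals (so the $k$-th column-block is $\bV_{S_k}\hat{\bO}_k+\bR_k$), and using $\hat{\bV}^T\bA\hat{\bV}=\hat{\bE}$, $\bV^T\bP\bV=\bE$ and $\bV^T\bA\bV=\bE+\bV^T(\bA-\bP)\bV$, I would expand
\bas{
\hat{\bE}-\hat{\bO}^T\bE\hat{\bO}=\hat{\bO}^T\bV^T(\bA-\bP)\bV\hat{\bO}+\hat{\bO}^T\bV^T\bA\bR+\bR^T\bA\bV\hat{\bO}+\bR^T\bA\bR
}
and bound the four terms separately.

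The first term has Frobenius norm $\|\bV^T(\bA-\bP)\bV\|_F\le\sqrt{K}\,\|\bA-\bP\|=O(\sqrt{Kn\rho})$. For the second (the third is its transpose) I would split $\bV^T\bA\bR=\bE\bV^T\bR+\bV^T(\bA-\bP)\bR$; the second summand is $\le\|\bA-\bP\|\,\|\bR\|_F=O\bbb{\sqrt{K}\,\|\bA-\bP\|^2/\lambda^*(\bP)}=O(\sqrt{K}\sqrt{n\rho})$ by~\eqref{eq:r-frob}, and $\bE\bV^T\bR$ I would treat block by block, its $(k,k')$-block being $\bE_k\bV_{S_k}^T\bR_{k'}$. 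Two facts drive the estimate: (i) because $\hat{\bO}_k$ is the orthogonal Procrustes factor, $\bV_{S_k}^T\hat{\bV}_{S_k}=\hat{\bO}_k\bS_k$ with $\bS_k\succeq 0$ and singular values $\sigma_i\in[0,1]$, so $\|\bV_{S_k}^T\bR_k\|_F^2=\sum_i(1-\sigma_i)^2\le\sum_i(1-\sigma_i)=\tfrac12\|\bR_k\|_F^2$, and, since the blocks of $\bV$ are orthogonal and $\|\bV_{S_k}^T\hat{\bV}\|_F^2\le n_k\le\|\bV_{S_k}^T\hat{\bV}_{S_k}\|_F^2+\|\bR_k\|_F^2$, also $\sum_{k'\ne k}\|\bV_{S_k}^T\bR_{k'}\|_F^2=\sum_{k'\ne k}\|\bV_{S_k}^T\hat{\bV}_{S_{k'}}\|_F^2\le\|\bR_k\|_F^2$; and (ii) by Lemma~\ref{lem:e-discr}, $\|\bE_k\|=|\lambda_{s_k}|\le Kg_k$. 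Combining these with $\|\bR_k\|_F^2\le 8n_k\|\bA-\bP\|^2/g_k^2$ and $g_k\ge\lambda^*(\bP)$ gives $\|\bE\bV^T\bR\|_F^2\le\sum_k\|\bE_k\|^2\cdot\tfrac32\|\bR_k\|_F^2=O\bbb{K^2\|\bA-\bP\|^2\sum_kn_k}=O(K^3n\rho)$, i.e. $\|\bE\bV^T\bR\|_F=O(K^{3/2}\sqrt{n\rho})$. For the fourth term I would write $\bR^T\bA\bR=\bR^T\bP\bR+\bR^T(\bA-\bP)\bR$; the latter is $\le\|\bA-\bP\|\,\|\bR\|_F^2=O(\sqrt{n\rho})$, and for the former, $\bR^T\bP\bR=\sum_{k'}(\bV_{S_{k'}}^T\bR)^T\bE_{k'}(\bV_{S_{k'}}^T\bR)$, so with $\|\bV_{S_{k'}}^T\bR\|_F^2\le\tfrac32\|\bR_{k'}\|_F^2$ and $\|\bE_{k'}\|\le Kg_{k'}$,
\bas{
\|\bR^T\bP\bR\|_F\le\sum_{k'}\|\bE_{k'}\|\,\|\bV_{S_{k'}}^T\bR\|_F^2\le\frac{12K\|\bA-\bP\|^2}{\lambda^*(\bP)}\sum_{k'}n_{k'}=\frac{12K^2\|\bA-\bP\|^2}{\lambda^*(\bP)}=O(K^2\sqrt{n\rho}).
}
Adding the four pieces yields $\|\hat{\bE}-\hat{\bO}^T\bE\hat{\bO}\|_F=O(K^2\sqrt{n\rho})$ on $\evz\cap\evot$, which is~\eqref{eq:e-frob}; the $K^2$ rate comes entirely from the $\bR^T\bP\bR$ term.

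The step I expect to be the crux is controlling $\bE\bV^T\bR$ and $\bR^T\bP\bR$. The naive estimate $\|\bE\bV^T\bR\|_F\le\|\bE\|\,\|\bR\|_F\le\|\bP\|\cdot\sqrt{K}\,\|\bA-\bP\|/\lambda^*(\bP)$ is useless, since $\|\bP\|/\lambda^*(\bP)$ is of order $1/\lambda^*(\bB)$ and would reinstate exactly the $1/\lambda^*(\bB)$ loss that the discretization is designed to remove. What saves the argument is Lemma~\ref{lem:e-discr}: the largest eigenvalue $|\lambda_{s_k}|\le Kg_k$ appearing in block $k$ is comparable to the very gap $g_k$ sitting in the denominator of the Davis--Kahan bound for $\bR_k$, so the two cancel and all that survives is $\|\bA-\bP\|^2/\lambda^*(\bP)=O(\sqrt{n\rho})$ times benign powers of $K$. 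The only other delicate point is to sum the off-diagonal ``leakages'' $\sum_{k'\ne k}\|\bV_{S_k}^T\hat{\bV}_{S_{k'}}\|_F^2$ through the orthogonality identity $\|\bV_{S_k}^T\hat{\bV}\|_F^2\le n_k$ rather than interval by interval, which is what keeps the total power of $K$ from blowing up.
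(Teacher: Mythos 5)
Your proposal is correct, and it reaches the stated bounds by a decomposition that is genuinely different from the paper's. The paper conjugates by $\hat{\bV}$, writing $\|\hat{\bE}-\hat{\bO}^T\bE\hat{\bO}\|_F=\|\bA_K-(\bV+\bR\hat{\bO}^T)\bE(\bV+\bR\hat{\bO}^T)^T\|_F$ (with $\bA_K=\hat{\bV}\hat{\bE}\hat{\bV}^T$ the rank-$K$ truncation of $\bA$), and then bounds $\|\bA_K-\bP\|_F+2\|\bR\hat{\bO}^T\bE\bV^T\|_F+\|\bR\hat{\bO}^T\bE\hat{\bO}\bR^T\|_F$. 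The payoff of that grouping is that the cross term $\bR\hat{\bO}^T\bE$ is automatically block-column-structured --- its $k$-th block is exactly $\bR_k\hat{\bO}_k^T\bE_k$, because $\hat{\bO}$ is block-diagonal and $\bE$ is diagonal --- so the crucial cancellation $\|\bE_k\|\,\|\bR_k\|_F\leq K g_k\cdot\sqrt{8n_k}\|\bA-\bP\|/g_k$ via Lemma~\ref{lem:e-discr} is immediate, with no cross-interval terms to worry about; the quadratic term is then dispatched crudely as $\|\bR\hat{\bO}^T\bE\|_F\|\bR\|_F=O_P(K^2 n\rho/\lambda^*(\bP))$. Your expansion of $\hat{\bE}=\hat{\bV}^T\bA\hat{\bV}$ instead produces $\bE\bV^T\bR$, whose $(k,k')$-blocks $\bE_k\bV_{S_k}^T\bR_{k'}$ with $k'\neq k$ do not vanish, which is why you need the extra Procrustes identity $\|\bV_{S_k}^T\bR_k\|_F^2\leq\tfrac12\|\bR_k\|_F^2$ and the leakage bound $\sum_{k'\neq k}\|\bV_{S_k}^T\hat{\bV}_{S_{k'}}\|_F^2\leq\|\bR_k\|_F^2$ obtained from $\|\bV_{S_k}^T\hat{\bV}\|_F^2\leq n_k$. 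Both of those facts are correct (the first uses that $\hat{\bO}_k$ from Theorem~2 of Yu et al.\ is the orthogonal Procrustes factor, so $\bV_{S_k}^T\hat{\bV}_{S_k}=\hat{\bO}_k\bS_k$ with $0\preceq\bS_k\preceq\bI$ and $\|\bR_k\|_F^2=2\,\mathrm{tr}(\bI-\bS_k)$), and your diagnosis of where the argument would otherwise collapse is exactly right: the naive bound $\|\bE\|\,\|\bR\|_F$ reintroduces the $1/\lambda^*(\bB)$ loss that the interval discretization exists to remove, so pairing $\|\bE_k\|\leq Kg_k$ with the $g_k^{-2}$ in $\|\bR_k\|_F^2$ \emph{for the same $k$} is the whole point. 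Your route costs a little more bookkeeping on the linear term but handles $\bR^T\bA\bR$ more carefully than the paper needs to; either way the dominant contribution is $O(K^2\sqrt{n\rho})$. Two cosmetic remarks: the bound $\|\bR^T(\bA-\bP)\bR\|_F\leq\|\bA-\bP\|\,\|\bR\|_F^2$ is $O(K\sqrt{n\rho}/(\log n)^{2\xi})$ rather than $O(\sqrt{n\rho})$ as written, which is still negligible; and the $\tfrac12$ savings from the Procrustes identity is not actually needed, since $\|\bV_{S_k}^T\bR_k\|_F\leq\|\bR_k\|_F$ already suffices for the diagonal blocks --- it is only the off-diagonal leakage estimate that genuinely requires the orthogonality accounting.
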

\begin{proof}
	Consider the rotation matrix $\oh$, the residual matrix $\bR$ constructed as in Definition~\ref{def:o}.
		 This gives us the Frobenius norm of $\bR$ as follows, since by construction $g_k\geq \lambda^*(\bP)$.
		\bas{
		\|\bR\|_F	\leq \sqrt{\sum_k \|\bR_k\|^2_F}\leq \frac{\sqrt{8K}\|\bA-\bP\|}{\lambda^*(\bP)} 
}
Finally note that, using Lemma~\ref{lem:e-discr}, since $\lambda_{s_k}\leq \sum_{i=1}^k n_i g_k\leq Kg_k$,
	\begin{equation}\label{eq:rkek}
\|\bR_k\oh_k^T\bE_k\|_F\leq \|\bR_k\|_F\|\bE_k\| \leq
\dfrac{\sqrt{8n_k}\|\bA-\bP\|\lambda_{s_k}}{g_k}=O_P(K\sqrt{n_k}\sqrt{n\rho})
\end{equation}
	
Now we use these intervals as follows.
\bas{
	\left\|\hat{\bE}-\hat{\bO}^T\bE\hat{\bO}\right\|_F
	&=\left\|\hat{\bV}\hat{\bE}\hat{\bV}^T-\hat{\bV}\hat{\bO}^T\bE\hat{\bO}\hat{\bV}^T\right\|_F\nonumber\\
	&=\|\bA_K-(\bV+\bR\oh^T)\bE(\bV+\bR\oh^T)^T\|_F\nonumber\\
	&\leq  \|\bA_K-\bP\|_F+2\underbrace{\|\bR\oh^T\bE\bV^T\|_F}_{P_1}+\underbrace{\|\bR\oh^T\bE\oh\bR^T\|_F}_{P_2}\nonumber\\
	&=O_P(\sqrt{Kn\rho})+P_1+P_2
}
	
The last step is true because $\|\bA_K-\bP\|_F\leq \sqrt{K}\|\bA_K-\bP\|\leq \sqrt{K}(\|\bA-\bA_K\|+\|\bA-\bP\|)\leq 2\sqrt{Kn\rho}$ with probability at least $1-n^{-r}$ using Weyl's inequality and Theorem~5.2 of \cite{lei2015consistency}. As for $P_1$, note that: $P_1\leq \|\bR\oh^T\bE\|_F\leq \sqrt{\sum_k\|\bR_k\oh_k^T\bE_k\|_F^2}=:O_P(K^{3/2}\sqrt{n\rho})$. 
	
As for $P_2$, we have:
\bas{
	P_2\leq \|\bR\oh^T\bE\|_F\|\bR\|_F=
	O_P\left(\frac{K^{2}n\rho}{\lambda^*(\bP)}\right)
}
Thus the final bound is $O_P(K^{2}\sqrt{n\rho}(\max(1/K^{3/2},1/\sqrt{K},\sqrt{n\rho}/\lambda^*(\bP)))=O_P(K^{2}\sqrt{n\rho})$ using Assumption~\ref{as:theta_P}. 
The failure probability comes from the failure of event $\|\bA-\bP\|=O_P(\sqrt{n\rho})$. Taking $r=3$ we get the required bound. 
\end{proof}

\begin{lem}\label{lem:lambda_1_Theta}
	For $\bTheta\in \R^{n\times K}$, with $\|\btheta_i\|_1=1$ and $\theta_{ij}\geq 0$, $\forall i, j \in [n]$, $\lambda_{1}(\bTheta^T\bTheta)\leq \max_j \bone_n^T\bTheta\be_j\leq n$ and $\lambda_{K}(\bTheta^T\bTheta)\leq \min_j \bone_n^T\bTheta\be_j$.
	\begin{proof}
		By Proposition~2.4 of \citep{schaefer1974banach}, as $\bTheta^T\bTheta$ is a nonnegative matrix, $\lambda_{1}(\bTheta^T\bTheta)$ is upper bounded by its largest row sum and $\lambda_{K}(\bTheta^T\bTheta)$ is lower bounded by its smallest row sum. For the $i$-th row of $\bTheta^T\bTheta$, its row sum is:
		\bas{
			\be_i^T\bTheta^T\bTheta\bone_K
			=\be_i^T\bTheta^T\bone_n
			=\bone_n^T\bTheta\be_i
			\leq n.
		}
		Thus the result follows.
	\end{proof}
\end{lem}

\begin{lem} \label{lem:theta_condition_num}
	Under Assumption~\ref{as:ident}, we have $\bTheta^T \bTheta=\bbb{\bV_P \bV_P^T}^{-1}$, which implies $\lambda_1({\bV_P \bV_P^T}) = 1/\lambda_K({\bTheta^T \bTheta})$, $\lambda_K({\bV_P \bV_P^T}) = 1/\lambda_1({\bTheta^T \bTheta})$ and $\kappa(\bV_P \bV_P^T)=\kappa\bbb{\bTheta^T \bTheta}$.
	\begin{proof}
		From Lemma~\ref{lem:v-theta}, $\bV = \bTheta\bV_P$, so
		\bas{
			\bI = \bV^T\bV=\bV_P^T\bTheta^T \bTheta\bV_P.
		}
		As $\bV_P$ is full rank, we have $\bTheta^T \bTheta=\bbb{\bV_P \bV_P^T}^{-1}$, which gives
		\bas{
			\lambda_1({\bV_P \bV_P^T})=\frac{1}{ \lambda_K\bbb{\bTheta^T \bTheta}} \quad \text{ and } \quad \lambda_K({\bV_P \bV_P^T})=\frac{1}{ \lambda_1\bbb{\bTheta^T \bTheta}}, 
		}
		so  $\kappa(\bV_P \bV_P^T)=\kappa\bbb{\bTheta^T \bTheta}$.
	\end{proof}
\end{lem}
\begin{proof}[Proof of Lemma~\ref{lem:theta_property}]
	 If $\btheta_i\sim\mathrm{Dirichlet}(\balpha)$, let us consider $\btheta_i$ as a random variable. Denote 
	\bas{
		\hat{\bM}= \bTheta^T\bTheta
		=\sum_{i=1}^{n}  \btheta_i\btheta_i^T.
	}
	Note that  $\hat{\bM}-\uE[\hat{\bM}]=\sum_i \bX_i$ where $\bX_i$ are independent mean zero symmetric $K\times K$ random matrices.  We have 
	\bas{
		\uE[\btheta_i\btheta_i^T]=\frac{\diag(\balpha)+\balpha\balpha^T}{\alpha_0(1+\alpha_0)}\qquad \cov(\btheta_i)=\frac{\alpha_0\diag(\balpha)-\balpha\balpha^T}{\alpha_0^2(1+\alpha_0)}. 
	}
	
	Furthermore, since $\|\btheta_i\|_1=1$, and $\alpha_0=\sum_i\alpha_i$, we have $\|\bX_i\|\leq \btheta_i^T\btheta_i+\|\uE[\btheta_i\btheta_i^T]\|\leq 1+ \frac{\alpha_{\max}+\|\balpha\|^2}{\alpha_0(1+\alpha_0)}\leq 2$.
	Finally, since the operator norm is convex, Jensen's inequality gives: $\|\uE[\bX_i^2]\|\leq \uE[\|\bX_i^2\|]\leq \uE[\|\bX_i\|^2] \leq 4$. 
	Using standard Matrix Bernstein type concentration results (Theorem 1.4 of~\citep{tropp2012user}), for large $n$ we get:
	\bas{
		\uP(\|\hat{\bM}-\uE[\hat{\bM}]\|\geq t)&\leq K\exp\bbb{-\frac{t^2}{8n+4t/3}}=:\delta_t
	}
	Now Weyl's inequality gives, with probability at least $1-\delta_t$, 
	\bas{
		|\lambda_1(\hat{\bM})-\lambda_1(\uE[\hat{\bM}])|\leq t\qquad |\lambda_K(\hat{\bM})-\lambda_K(\uE[\hat{\bM}])|\leq t
	}
	
	For the population quantities, 
	\bas{
		{\lambda_1(\uE[\hat{\bM}])}\leq \frac{\alpha_{\max}+\|\balpha\|^2}{\alpha_0(1+\alpha_0)}n,\qquad
		{\lambda_K(\uE[\hat{\bM}])}\geq
		\frac{\alpha_{\min}}{\alpha_0(1+\alpha_0)}n
	}
	
	For $\lh_1(\hat{\bM})$ we take $t=\frac{n}{2}\frac{\alpha_{\max}+\|\balpha\|^2}{\alpha_0(1+\alpha_0)}\in[\frac{n}{2\nu(1+\alpha_0)},  \frac{n}{2}]$ and hence $\delta_t\leq K\exp\left(-\frac{n}{36\nu^2(1+\alpha_0)^2}\right)$.
	For $\lh_K(\hat{\bM})$, we take $t=\frac{n}{2}\frac{\alpha_{\min}}{\alpha_0(1+\alpha_0)}\in [\frac{n}{2\nu(1+\alpha_0)},\frac{n}{2}]$. Hence $\delta_t\leq K\exp(-\frac{n}{36\nu^2(1+\alpha_0)^2})$.
	
	Hence the condition number of $\hat{\bM}$ can also be bounded as:
	\begin{align*}
	\kappa(\bTheta^T\bTheta) &= \frac{\lambda_1(\bTheta^T\bTheta)}{\lambda_K(\bTheta^T\bTheta)}
	\leq
	\frac{{\frac{3n}{2}\frac{\alpha_{\max}+\|\balpha\|^2}{\alpha_0(\alpha_0+1)}} }{\frac{n}{2}\frac{\alpha_{\min}}{\alpha_0(\alpha_0+1)}}
	=3{ \frac{\alpha_{\max}+\|\balpha\|^2}{\alpha_{\min}}} 
	\end{align*}
\end{proof}

\begin{lem}
	\label{lem:P_eigen}
	Let $\lambda^*(\bP)$ denote the $K^{th}$ largest singular value of $\bP$. We have $\lambda^*(\bP)\geq\rho \lambda^*(\bB)  \lambda_K( \bTheta^T \bTheta)  $.
\end{lem}
\begin{proof}
First note that, by Theorem~1.3.22 of~\citep{horn2012matrix}, we have $(\bB\bTheta^T)\bTheta$ and $\bTheta(\bB\bTheta^T)$ have the same $K$ largest eigenvalues in magnitude, then
	\ba{
		\lambda^*(\bP)
		=\lambda^*(\rho \bTheta \bB \bTheta^T)
		=\lambda^*(\rho \bB \bTheta^T\bTheta )
		\geq  \rho \lambda^*(\bB)  \lambda_K( \bTheta^T \bTheta).\label{eq:lambda_P}
	}
	The inequality holds because for all full rank positive definite matrix $\bM_1$, $\bM_2\in\R^{K\times K}$, $\|{(\bM_1\bM_2)^{-1}}\|\leq\|{\bM_1^{-1}}\|\|{\bM_2^{-1}}\|$, and as $\sigma_K(\bM_1)=1/\|\bM_1^{-1}\|$ (same for $\bM_1$ and $\bM_1\bM_2$), where $\sigma_K(.)$ denotes the $K^{th}$ largest singular value of a matrix. Then  
	we have:
	\bas{
		\sigma_K{(\bM_1\bM_2)}\geq  \sigma_K\bbb{\bM_1} \sigma_K\bbb{\bM_2}.
	}
\end{proof}

\begin{proof}[Proof of Lemma~\ref{lem:v-row-norm}]
	Note that $\bTheta^T \bTheta=\bbb{\bV_P \bV_P^T}^{-1}$ by Lemma~\ref{lem:theta_condition_num}, thus for pure nodes,
	\bas{
		\max_i\left\| \be_i^T\bV_P \right\|^2
		=  \max_i\be_i^T\bV_P \bV_P^T \be_i 
		\leq \max_{\|\bx\|=1} \bx^T\bV_P \bV_P^T\bx 
		= \lambda_1\bbb{\bV_P \bV_P^T} 
		= \frac{1}{\lambda_K\bbb{\bTheta^T \bTheta}}
	} 
	As for other nodes, their rows are convex combinations of the rows of pure nodes and would be smaller than or equal to the norm of the pure nodes. Thus the result follows.
	
	Note that by Lemma \ref{lem:v-theta}, $\be_i^T\bV=\btheta_i^T\bV_P$, then
	\bas{
		\min_i\left\| \be_i^T\bV \right\|^2
		&=  \min_i\btheta_i^T\bV_P \bV_P^T \btheta_i^T 
		= \min_i \|\btheta_i\|^2 \frac{\btheta_i^T}{\|\btheta_i\|}\bV_P \bV_P^T\frac{\btheta_i}{\|\btheta_i\|}
		\geq \min_i \|\btheta_i\|^2 \min_{\|\bx\|=1} \bx^T\bV_P \bV_P^T\bx \\
		&= \min_i \|\btheta_i\|^2 \lambda_K\bbb{\bV_P \bV_P^T} 
		= \frac{\min_i \|\btheta_i\|^2}{\lambda_1\bbb{\bTheta^T \bTheta}}
		\geq \frac{1}{K\lambda_1\bbb{\bTheta^T \bTheta}}
	}
	where for the last inequality we use for any $i\in[n]$, $\|\btheta_i\| \geq \|\btheta_i\|_1/\sqrt{K}=1/\sqrt{K}$.
	Thus the result follows.
\end{proof}

\section{Proofs for Section~\ref{sec:analysis}}\label{sec:proof_sec_5}
\subsection{Proofs of Lemma~\ref{lem:gammaP_bound}} \label{sec:proof_gammaP}
\begin{lem}
	\label{lem:e-discr}
	Consider the intervals defined in Definition~\ref{def:int}. We have for positive eigenvalues: $\lambda_{s_k}\leq \sum_{i=1}^k n_i g_k$.
	\begin{proof}
		We prove this by induction. First, note that the smallest positive eigenvalue is larger than $\lambda^*(\bP)$ by definition. 
		For $k=1$, $\lambda_{s_1}-\lambda_{e_1}\leq (n_1-1)\lambda_{e_1}$, and hence $\lambda_{s_1}\leq n_1\lambda_{e_1}=n_1 g_1$. Now assume that $\lambda_{s_k}\leq \sum_{i=1}^k n_i g_k$. Hence, 
		\bas{
			\lambda_{s_{k+1}}\leq (n_{k+1}-1)g_{k+1}+(\lambda_{e_{k+1}}-\lambda_{s_k})+\lambda_{s_k}=n_{k+1}g_{k+1}+\sum_{i=1}^kn_{i}g_{k}\leq \sum_{i=1}^{k+1}n_ig_{k+1}.
		}
		The last step holds since $g_k<g_{k+1}$.
	\end{proof}
\end{lem}

\begin{proof}[Proofs of Lemma~\ref{lem:gammaP_bound}]
	First consider positive eigenvalues. 
	By Lemma~\ref{lem:e-discr},	$\lambda_{s_k}\leq g_k\sum_{i=1}^kn_i\leq g_k\sum_{i=1}^{I^{+}}n_i\leq g_k K$ and by definition $\lambda_{s_k}/g_k\leq\kappa(\bP)$, so $\lambda_{s_k}/g_k\leq \min\{K,\kappa(\bP)\}$. 
	From construction of the eigenvalue intervals, we have:
	$\lambda_{s_k}-\lambda_{s_{k-1}}\leq n_kg_k$. 
	%		, so $\sum_{k=1}^I{\bbb{\lambda_{s_k}-\lambda_{s_{k-1}}}/{g_k}}\leq K$.
	Also note that $\sum_{k=1}^{I^{+}}{\bbb{\lambda_{s_k}-\lambda_{s_{k-1}}}/{g_k}}={\sum_{k=1}^{I^{+}}\bbb{\lambda_{s_k}-\lambda_{s_{k-1}}}/{\lambda^*(\bP)}}={{\lambda_{s_{I^{+}}}}/{\lambda^*(\bP)}}\leq \sigma_1(\bP)/\lambda^*(\bP)=\kappa(\bP)$, where $\sigma_1(\bP)$ is the largest singular value of $\bP$, we have $\sum_{k=1}^{I^{+}}\bbb{\bbb{\lambda_{s_k}-\lambda_{s_{k-1}}}/{g_k}}\leq \min\{K,\kappa(\bP)\}$. A similar argument can be made for negative eigenvalues. So $\gammaP\leq 2\min\{K,\kappa(\bP)\}^2$.
	
	If eigenvalues of $\bP$ can be divided by a constant number of bins where in each bin the eigenvalues are of the same order, for each bin, there will be at most a constant of intervals defined in Definition~\ref{def:int}, or the eigenvalues can not be of the same order. In that case, $\lambda_{s_k}$, $\lambda_{e_k}$ and $g_k$ are of the same order and $I^{+}+I^{-}$ is a constant, so $\gammaP=O(1)$.
\end{proof}
\subsection{Proof of Lemma~\ref{lem:buildup}}\label{sec:pf_buildup}
\begin{proof}
	Since $\res_\bA(z)-\res_\bP(z)=\res_\bP(z)(\bP-\bA)\res_\bA(z)$,
	\bas{
		\res_\bA(z)-\res_\bP(z)
		&=\left(\bM_z-\frac{\bI}{z}\right)(\bP-\bA)(\res_\bA(z)-\res_\bP(z))+\res_\bP(z)(\bP-\bA)\res_\bP(z)\nonumber
	}
	Bringing $\frac{\bA-\bP}{z}(\res_{\bA}(z)-\res_{\bP}(z))$ to the LHS, and using the definition of the resolvent of $\bA-\bP$ we get: 
	\ba{\label{eq:grecursive}
		\res_\bA(z)-\res_\bP(z)&=z\res_{\bA-\bP}(z)\left(\underbrace{\bM_z(\bA-\bP)(\res_\bA(z)-\res_\bP(z))}_{R_0}+\underbrace{\res_\bP(z)(\bA-\bP)\res_\bP(z)}_{R}\right)
	}
	
	As it turns out, each of the rows of $zR$ either have small Frobenius norm or they disappear when combined with $z\res_{\bA-\bP}(z)$ post integration. We will show this step by step.  Note that, using Eq~\eqref{eq:resp-decomp}, $R$ in the above equation can be decomposed as:
	\bas{
		&\res_\bP(z)(\bA-\bP)\res_\bP(z)
		=\underbrace{\frac{\bA-\bP}{z^2}}_{R_1}+\underbrace{\bM_z(\bA-\bP)\res_{\bP}(z)}_{R_2}-\underbrace{\frac{\bA-\bP}{z}\bM_z}_{R_3}
	}

	Next, we show that  $z\res_{\bA-\bP}(z)R_1$ disappears upon integration. 
	Since by construction $\forall z\in \C_k, \forall k$, $|z|\geq a_k >\|\bA-\bP\|$, none of the contours contain zero, Eq~\eqref{eq:res-series} immediately gives:
	\ba{\label{eq:r1}
		\frac{1}{2\pi\sqrt{-1}}\oint_{\C_k}z\res_{\bA-\bP}(z)R_1dz
		&=-\sum_{t\geq 1}\oint_{\C_k}{ \frac{1}{z}}\bb{\frac{\bA-\bP}{z}}^tdz=0
	}
	
	Thus Eqs~\eqref{eq:res-eigen-row},~\eqref{eq:grecursive} and~\eqref{eq:r1} give us,
	\bas{
		\be_x^T(\bV_k\bV_k^T-\vh_k\vh_k^T)&=-\frac{1}{2\pi\sqrt{-1}} \oint_{\C_k}\be_x^T\res_{\bA-\bP}(z)z(R_0+R_2-R_3)dz\nonumber\\
		\|\be_x^T(\bV_k\bV_k^T-\vh_k\vh_k^T)\|&\leq
		\frac{b_k-a_k+2\gamma_k}{\pi}\max_{z\in \C_k}\|\be_x^T\res_{\bA-\bP}(z)z(R_0+R_2-R_3)\|
	}
	Now we bound each part individually.
	\bas{
		\|\be_x^T\res_{\bA-\bP}(z)z(R_0+R_2)\|&=\|\be_x^T\res_{\bA-\bP}(z)z\bM_z(\bA-\bP)\res_{\bA}(z)\|\\
		&\leq \|\be_x^T\res_{\bA-\bP}(z)z\bM_z\|_F\|\bA-\bP\|\|\res_{\bA}(z)\|\\
		&\stackrel{(i)}{\leq}|z|\|\res_{\bA}(z)\|\|\bA-\bP\|\|\bE_z\|\|\be_x^T\res_{\bA-\bP}(z)\bV\|=:P_{1}(z) 
	}
	Step (i) uses Eq~\eqref{eq:mzdef}.
	Finally we also have: 
	\bas{
		\|\be_x^Tz\res_{\bA-\bP}(z)R_3\|&\leq \|\be_x^T\res_{\bA-\bP}(z)(\bA-\bP)\bV\| \|\bE_z\|=:P_{2}(z)
	}
	
	Thus, the statement of the lemma follows.
\end{proof}
\subsection{Proof of Lemma~\ref{lem:azuma-better}}\label{sec:azuma-better}
\begin{proof}
For ease of notation we will {first} prove this for one population eigenvector $\bv$. Recall that $\bH:=(\bA-\bP)/\sqrt{n\rho}$.	Let  
	$X_j=(\bA_{ij}-\bP_{ij})v_j$, where $v_j$ is the $j^{th}$ component of $\bv$.
	We have $|X_j|\leq \|\bv\|_\infty=: M$.  
	Since $\bTheta$ is assumed to be fixed for this lemma, $\bP$ is fixed and 
	$X_j$ are mean zero independent random variables.
	Also note that, since $\|\bv\|=1$ and $\bP_{ij}\leq \rho$, 
		\bas{
		\sum_j\var(X_j)=
		\sum_j \uE\ccc{(\bA_{ij}-\bP_{ij})^2v_j^2}
		=
		\sum_j \bP_{ij}(1-\bP_{ij})v_j^2
		\leq \rho
	}
An application of Bernstein's inequality gives us:
    \bas{
    	&\uP\bbb{|\sum_j X_j|\geq t}
    	\leq 2 \exp\left(-\frac{t^2}{2(\sum_j\var(X_j)+tM/3)}\right) =: 2 \exp(-A)
    }
    First note that the RHS of the above equation is a decreasing function of $t$. We set $t=4\max(M,\sqrt{\rho})\log n$. Consider the following two cases:
    
    \medskip
    {\em Case 1: $M>\sqrt{\rho}$:}
    We have $t= 4M\log n$. Hence,
    \bas{
		\exp(-A)\leq \exp\bbb{-\frac{16M^2\log^2 n}{2\rho + 8/3M^2\log n}}    \leq \exp\bbb{-\frac{16M^2\log^2 n}{2M^2 + 8/3M^2\log n}} \leq \frac{1}{n^3}
}

 \medskip
{\em Case 2: $M\leq \sqrt{\rho}$:}
         We have $t= 4\sqrt{\rho}\log n$. Hence,
    \bas{
    	\exp(-A)\leq \exp\bbb{-\frac{16\rho\log^2 n}{2\rho + 8/3M\sqrt{\rho}\log n}}    \leq \exp\bbb{-\frac{16\rho\log^2 n}{2\rho + 8/3\rho\log n}} \leq \frac{1}{n^3}
    }
    
    Applying this to all $K$ population eigenvectors we have:
    \ba{
    	\label{eq:bernstein}
    	\uP\bbb{\exists k\in[K],|\be_i^T(\bA-\bP)\bv_k|\geq 4\max(\|\bv_k\|_\infty,\sqrt{\rho})\log n}\leq \frac{2K}{n^3}
}
Recall from Lemma~\ref{lem:v-row-norm} that, $\forall k\in[K],\|\bv_k\|_\infty\leq \max_i\|\be_i^T\bV\|\leq \sqrt{\frac{1}{\lambda_K(\bTheta^T\bTheta)}}$. 
{Then if Assumption~\ref{as:theta_P} is satisfied}, we have, $\|\bv_k\|_\infty\leq \sqrt{\rho}$, $\forall k\in [K]$. 
So from Eq.~\eqref{eq:bernstein},
\bas{
	\uP\bbb{\exists k\in[K],|\be_i^T(\bA-\bP)\bv_k|\geq 4\sqrt{\rho\log^2 n}}\leq \frac{2K}{n^3}.
}
Note that $\forall k\in [K]$, $\|\bv_k\|_\infty\geq 1/\sqrt{n}$, then,
\bas{
	\uP\bbb{\exists k\in [K],|\be_i^T\bH\bv_k|\geq 4\log n\|\bv_k\|_\infty}&\leq \uP\bbb{\exists k\in [K],|\be_i^T\bH\bv_k|\geq 4\sqrt{\frac{\log^2n}{n}}}
	=O\bb{\frac{K}{n^3}}.
}
\par \vspace{-0.5\baselineskip}
\qedhere
\end{proof}

\subsection{Proof of Lemma~\ref{cor:erdos}}
\label{sec:supp-proof-erdos}
\begin{proof}[Proof of Lemma~\ref{cor:erdos}]
	For $t\geq 2$, we claim that this result follows via straightforward modifications of the proof of Lemma 7.10 in~\citep{erdos2013}, where the
	main two elements are:
	\begin{enumerate}
		\item  $\uE[|\bH_{ij}|^{m}]\leq \frac{1}{n}$ for $m\geq 2$. Note that for our setting, Assumption~\ref{as:theta_P} implies that $n\rho\geq 1$. Hence $|\bH_{ij}|\leq 1$, and hence 
		\bas{
			\uE[|\bH_{ij}|^m]\leq\uE[|\bH_{ij}|^2]\leq\frac{\bP_{ij}}{n\rho}\leq \frac{1}{n}
		}
		\item The authors use a higher order Markov inequality. This inequality upper bounds the number of terms that are non-zero in the summand via a multigraph construction for path counting. Then these non-zero elements are bounded by their absolute value and hence, even though $\bv$ does not equal $\be$, just the fact that it is fixed and hence independent of $\bH_{ij}$, is enough to apply the proof directly to get the required result.
	\end{enumerate}
	
	Using an almost identical argument as~\citep{erdos2013}, we have:
	\bas{
		\uE[|\be_i^T \bH^t \bv|^p]\leq (tp)^{tp}\|\bv\|_\infty^p
	}
	
	Now a higher order Markov inequality, with $p=(\log n)^\xi/2t$ gives:
	\bas{
		\uP\bbb{|\be_i^T \bH^t \bv|\geq (\log n)^{t\xi}\|\bv\|_\infty}&\leq \dfrac{(tp)^{tp}\|\bv\|_\infty^p}{ (\log n)^{tp\xi}\|\bv\|_\infty^p}=\frac{1}{\sqrt{2}^{(\log n)^{\xi}}}\\
		&=\exp(-(\log n)^\xi\log \sqrt{2})\leq \exp(-(\log n)^\xi/3)
	}
\qedhere
\end{proof}

\section{Comparison with~\citep{abbe2017entrywise} on row-wise deviation of eigenspace}
\label{sec:supp-abbe}
{
Here we give a row-wise error bound for eigenspace using \citet{abbe2017entrywise}'s result. 
\begin{lem} \label{lem:Abbe}
	For $\bP=\bV\bE\bV^T$ and $\bA=\hv\hat{\bE}\hv^T$ as $\bP$ and $\bA$'s top-$K$ eigen-decomposition respectively, we have
	\bas{
		\|\hat{\bV}sgn(\hv^T\bV)-\bA\bV\bE^{-1}\|_{2\rightarrow \infty}
		&
		=O_P\left( \frac{(\kappa(\bTheta^T\bTheta))^2{K}\sqrt{n}}{\sqrt{\rho}(\lambda^*(\bB))^3(\lambda_K(\bTheta^T\bTheta))^{1.5}} \right)\\
		\|\hat{\bV}sgn(\hv^T\bV)-\bV\|_{2\rightarrow \infty}
		&=O_P\left({\max}\left( \frac{(\kappa(\bTheta^T\bTheta))^2{K}\sqrt{n}}{\sqrt{\rho}(\lambda^*(\bB))^3(\lambda_K(\bTheta^T\bTheta))^{1.5}} ,{\frac{1}{\sqrt{\lambda_K(\bTheta^T\bTheta)}}}\right)\right),
	}
	where $\|\bU\|_{2\rightarrow \infty}=\max_i \|\be_i^T\bU\|$ denotes the maximum row norm of a matrix $\bU$, and $sgn(\hv^T\bV)$ is the matrix sign function
	\bas{
		sgn(\hv^T\bV)=\bU_1\bU_2^T, \quad \mbox{SVD of $\hv^T\bV$ is $\hv^T\bV=\bU_1\bf{\Sigma}\bU_2^T$}.
	}
\begin{proof}
	First from Assumption A3 of \citep{abbe2017entrywise} we have $c\sqrt{\rho n} \leq \gamma \Delta^*$ for some constant $c$. $\Delta^*$ is the eigen-gap, which is $\lambda^*(\bP)\geq \rho \lambda^*(\bB)\lambda_K(\bTheta^T\bTheta)$ from Lemma~\ref{lem:P_eigen}. This requires
	\begin{equation}
	\gamma\geq\frac{c\sqrt{n}}{\sqrt{\rho}\lambda^*(\bB)\lambda_K(\bTheta^T\bTheta)}.
	\end{equation}
	Using Eqs (13) of Corollary 2.1 in \citep{abbe2017entrywise}, we have,
	\bas{
		\|\hat{\bV}sgn(\hv^T\bV)-\bA\bV\bE^{-1}\|_{2\rightarrow \infty}&\leq \kappa(\kappa+\varphi(1))(\gamma+\varphi(\gamma))\|\bV\|_{2\rightarrow \infty}\\
		\mbox{(Lemma~\ref{lem:v-row-norm})}\quad&\leq \kappa(\kappa+\varphi(1))(\gamma+\varphi(\gamma)){\frac{1}{\sqrt{\lambda_K(\bTheta^T\bTheta)}}}.
	}
	$\kappa$ is the condition number of $\bP$ which is upper bounded by:
	\bas{
		\kappa\leq \kappa(\bTheta^T\bTheta) \kappa(\bB)\leq \kappa(\bTheta^T\bTheta) \frac{{\|\bB\|}}{\lambda^*(\bB)}\leq \kappa(\bTheta^T\bTheta) \frac{\sqrt{K}}{\lambda^*(\bB)}.
	}
	Since $\varphi(x)$ is typically bounded by a constant and $\varphi(x)/x$ non-increasing, we have,
	\ba{
		\|\hat{\bV}sgn(\hv^T\bV)-\bA\bV\bE^{-1}\|_{2\rightarrow \infty}
		&=O_P\left( \kappa^2\gamma {\frac{1}{\sqrt{\lambda_K(\bTheta^T\bTheta)}}}\right)\nonumber\\
		&=O_P\left(  \frac{(\kappa(\bP))^2\sqrt{n}}{\sqrt{\rho}\lambda^*(\bB)(\lambda_K(\bTheta^T\bTheta))^{1.5}} \right)\nonumber\\
		&=O_P\left(  \frac{(\kappa(\bTheta^T\bTheta))^2{K}}{(\lambda^*(\bB))^2} \frac{\sqrt{n}}{\sqrt{\rho}\lambda^*(\bB)(\lambda_K(\bTheta^T\bTheta))^{1.5}} \right)\nonumber\\
		&=O_P\left( \frac{(\kappa(\bTheta^T\bTheta))^2{K}\sqrt{n}}{\sqrt{\rho}(\lambda^*(\bB))^3(\lambda_K(\bTheta^T\bTheta))^{1.5}} \right).\label{eq:v_ae_abbe}
	}
	Furthermore, using Eqs (14) of Corollary 2.1 in \citep{abbe2017entrywise}, we have,
	\ba{
		\|\hat{\bV}sgn(\hv^T\bV)-\bV\|_{2\rightarrow \infty}
		&\leq \|\hat{\bV}sgn(\hv^T\bV)-\bA\bV\bE^{-1}\|_{2\rightarrow \infty} +\varphi(1)\|{\bV}\|_{2\rightarrow \infty}\nonumber\\
		&=O_P\left({\max}\left( \frac{(\kappa(\bP))^2\sqrt{n}}{\sqrt{\rho}\lambda^*(\bB)(\lambda_K(\bTheta^T\bTheta))^{1.5}} ,{\frac{1}{\sqrt{\lambda_K(\bTheta^T\bTheta)}}}\right)\right)\nonumber\\
		&=O_P\left({\max}\left( \frac{(\kappa(\bTheta^T\bTheta))^2{K}\sqrt{n}}{\sqrt{\rho}(\lambda^*(\bB))^3(\lambda_K(\bTheta^T\bTheta))^{1.5}} ,{\frac{1}{\sqrt{\lambda_K(\bTheta^T\bTheta)}}}\right)\right)
	\label{eq:vv_abbe}
	}
\end{proof}
\end{lem}	
}

\begin{rem}
	Our bound in Theorem~\ref{thm:entrywise} has better dependency on $\lambda^*(\bB)$ comparing to Eq~\eqref{eq:v_ae_abbe} when $\lambda^*(\bB)$ goes to 0 ($\kappa(\bP)$ goes to infinity). If $\kappa(\bP)=\Theta(1)$ or $\lambda^*(\bB)=\Theta(1)$ or $K=\Theta(1)$, the bound in Theorem~\ref{thm:entrywise} is comparable or better than that in Eq~\eqref{eq:v_ae_abbe}. However, in comparison to their eigenvector deviation bound from Eq~\eqref{eq:vv_abbe}, we have a tighter bound by an order of $1/\sqrt{n\rho}$ when $K=\Theta(1)$, $\lambda^*(\bB)=\Theta(1)$, $\kappa(\bP)=\Theta(1)$ and $\lambda_K(\bTheta^T\bTheta)=\Omega(n)$. As a matter of fact, when $\btheta_i\sim \mathrm{Dirichlet}(\balpha)$ and $\alpha_{\max}\leq C\alpha_{\min}$ for some constant $C\geq1$, we have $\nu=\alpha_0/\alpha_{\min}=\Theta(K)$ and by Lemma~\ref{lem:theta_property}, with high probability $\lambda_K(\bTheta^T\bTheta)=\Omega(n/\nu)=\Omega(n)$ when $K=\Theta(1)$.
\end{rem}

\section{Comparison with~\citep{cape2019two} on row-wise deviation of eigenspace}
\label{sec:supp-cape}
{
Here we give a row-wise error bound for eigenspace applying \citet{cape2019two}'s result.

\begin{lem} \label{lem:cape}
	For $\bP=\bV\bE\bV^T$ and $\bA=\hv\hat{\bE}\hv^T$ as $\bP$ and $\bA$'s top-$K$ eigen-decomposition respectively, if $\lambda^*(\bP)\geq \|\bA-\bP\|_\infty$, then there exists an orthognal matrix $\bW_\bV \in \R^{K \times K}$ such that
	\ba{\label{eq:v_ae_cape}
		\|\hV-\bV\bW_\bV\|_{2\rightarrow \infty} 
		=O_P\left(\frac{ n }{\lambda^*(\bB)(\lambda_K(\bTheta^T\bTheta))^{1.5}}\right).
	}
	\begin{proof}
		From Lemma~\ref{lem:v-row-norm}, $\|\bV\|_{2\rightarrow \infty} \leq {\frac{1}{\sqrt{\lambda_K(\bTheta^T\bTheta)}}}$. By applying Theorem~4.2 of~\citep{cape2019two}, if $\lambda^*(\bP)\geq \|\bA-\bP\|_\infty$,
		\bas{
			\|\hV-\bV\bW_\bV\|_{2\rightarrow \infty} \leq 14 \left(\frac{\|\bA-\bP\|_\infty}{\lambda^*(\bP)}\right)\|\bV\|_{2\rightarrow \infty},
		}
		where $\|\bA-\bP\|_{\infty}= \max_i \sum_j |\bA_{ij}-\bP_{ij}|$.
		Note that as $\uE[\sum_j |\bA_{ij}-\bP_{ij}|]=\sum_j \uE[|\bA_{ij}-\bP_{ij}|]=\sum_j [\bP_{ij}(1-\bP_{ij})+(1-\bP_{ij})\bP_{ij}]=O(\rho n)$. By applying Chernoff bound, it can be shown that for all $i\in[n]$, $\sum_j |\bA_{ij}-\bP_{ij}|=O_P(\rho n)$. Then $\|\bA-\bP\|_{\infty}= \max_i \sum_j |\bA_{ij}-\bP_{ij}|= O(\rho n)$ with high probability. 
		From Lemma~\ref{lem:P_eigen}, we have 
		$\lambda^*(\bP)\geq \rho \lambda^*(\bB)\lambda_K(\bTheta^T\bTheta)$. 
		Then we have,
		\bas{
			\|\hV-\bV\bW_\bV\|_{2\rightarrow \infty} 
			=O_P\left(\frac{ n }{\lambda^*(\bB)(\lambda_K(\bTheta^T\bTheta))^{1.5}}\right).
		}
	\end{proof}
\end{lem}

\begin{rem}
	Our bound in Theorem~\ref{thm:entrywise} is tighter by an order of $1/\sqrt{n\rho}$ comparing to Eq~\eqref{eq:v_ae_cape} when $K=\Theta(1)$.
\end{rem}

Note that Lemma~\ref{lem:cape} is a direct application of the perturbation bound in \citep{cape2019two} to the MMSB model. If we use the more careful analysis of the authors for the $\rho$-correlated SBM graphs, together with our Lemma~\ref{lem:azuma-better}, we can get a better bound as in the following Lemma.

\begin{lem} \label{lem:cape_tighter}
	For $\bP=\bV\bE\bV^T$ and $\bA=\hv\hat{\bE}\hv^T$ as $\bP$ and $\bA$'s top-$K$ eigen-decomposition respectively, and $\bW_\bV =sgn(\bV^T\hv) \in \R^{K \times K}$, we have
	\ba{\label{eq:v_ae_cape_tighter}
		\|\hV-\bV\bW_\bV\|_{2\rightarrow \infty} 
		=O_P\bbb{ \frac{{ n}}{{\rho} (\lambda^*(\bB))^2(\lambda_K(\bTheta^T\bTheta))^2} },
	}
when $\lambda_K(\bTheta^T\bTheta)=\Omega(K)$ and $\rho (\lambda^*(\bB))^2 \leq  \frac{n}{ K\lambda_K(\bTheta^T\bTheta)}$
	\begin{proof}
		Using Corallary~3.3 and Proposition 6.5 of \citep{cape2019two}, noting that $(\bI-\bV \bV^T)\bP=\bzero$, we have the following decomposition and bound:
		\bas{
			\|\hat{\bV}-\bV\bW_\bV\|_{2\rightarrow \infty}  
			\leq &\|(\bI-\bV \bV^T)(\bA-\bP) \bV \bW_\bV \hat{\bE}^{-1}\|_{2\rightarrow \infty} \\
			&+\|(\bI-\bV \bV^T)(\bA-\bP) (\hv-\bV \bW_\bV) \hat{\bE}^{-1}\|_{2\rightarrow \infty} \\
			&+\|\bV(\bV^T\hv-\bW_\bV)\|_{2\rightarrow \infty} \\
			\leq &\|(\bA-\bP) \bV\|_{2\rightarrow \infty}\| \hat{\bE}^{-1}\|
			+\|\bV\|_{2\rightarrow \infty} \|\bV^T\|\|(\bA-\bP) \bV\| \|\hat{\bE}^{-1}\|\ \\
			&+\|\bI-\bV \bV^T\|_{2\rightarrow \infty}\|\bA-\bP\| \|\hv-\bV \bW_\bV\|\|\hat{\bE}^{-1}\| \\
			&+\|\bV\|_{2\rightarrow \infty}\|\bV^T\hv-\bW_\bV\|.
		}
		By Lemma~\ref{lem:azuma-better}, we have $\|(\bA-\bP) \bV\|_{2\rightarrow \infty}=O_P(\sqrt{K n\rho}) \|\bV\|_{2\rightarrow \infty}$, so $$\|(\bA-\bP) \bV\|\leq \sqrt{K}\|(\bA-\bP) \bV\|_{2\rightarrow \infty}=O_P(K\sqrt{ n\rho}) \|\bV\|_{2\rightarrow \infty}.$$
		We also have $\|\bA-\bP\|=O_P(\sqrt{n\rho})$, $\| \hat{\bE}^{-1}\|=O_P\bbb{\frac{1}{\rho \lambda^*(\bB)\lambda_K(\bTheta^T\bTheta)}}$ from results in Sec~\ref{sec:auxiliary}. 
		By Lemmas~6.7 and 6.8 of \citep{cape2019two} and using $\lambda^*(\bP)
		\geq \rho \lambda^*(\bB)\lambda_K(\bTheta^T\bTheta)$ from Lemma~\ref{lem:P_eigen}, we have $\|\hv-\bV \bW_\bV\|=O_P\bbb{\frac{\sqrt{ n}}{\sqrt{\rho} \lambda^*(\bB)\lambda_K(\bTheta^T\bTheta)}}$ and $\|\bV^T\hv-\bW_\bV\|=O_P\bbb{\frac{{ n}}{{\rho} (\lambda^*(\bB))^2(\lambda_K(\bTheta^T\bTheta))^2}}$.
		From Lemma~\ref{lem:v-row-norm}, $\|\bV\|_{2\rightarrow \infty} \leq 
		{\frac{1}{\sqrt{\lambda_K(\bTheta^T\bTheta)}}}$. 
		It is also easy to get $\|\bI-\bV \bV^T\|_{2\rightarrow \infty}\leq 2$. Putting these bounds together, we have:
		\bas{
			\|\hat{\bV}-\bV\bW_\bV\|_{2\rightarrow \infty}
			&=O_P\bbb{\max\bbb{\frac{{ n}}{{\rho} (\lambda^*(\bB))^2(\lambda_K(\bTheta^T\bTheta))^2},
					\frac{\sqrt{K n}\bbb{1+\sqrt{K/\lambda_K(\bTheta^T\bTheta)}}}{\sqrt{\rho} \lambda^*(\bB)(\lambda_K(\bTheta^T\bTheta))^{1.5}}
			}}\\
			&=O_P\bbb{ \frac{{ n}}{{\rho} (\lambda^*(\bB))^2(\lambda_K(\bTheta^T\bTheta))^2} },
		}
		when $\lambda_K(\bTheta^T\bTheta)=\Omega(K)$ and $\rho (\lambda^*(\bB))^2 \leq  \frac{n}{ K\lambda_K(\bTheta^T\bTheta)}$.
	\end{proof}

\begin{rem}
	Our bound in Theorem~\ref{thm:entrywise} has better dependency on $\lambda^*(\bB)$ comparing to Eq~\eqref{eq:v_ae_cape_tighter} when $\lambda^*(\bB)$ goes to 0 ($\kappa(\bP)$ goes to infinity). When  $\lambda^*(\bB)=\Theta(1)$, $\kappa(\bP)=\Theta(1)$, and $\lambda_K(\bTheta^T\bTheta)=\Omega(n/K)$, 
	our bound in Theorem~\ref{thm:entrywise} is tighter by a factor of $\sqrt{\rho}$ comparing to Eq~\eqref{eq:v_ae_cape_tighter}. As discussed in Sec~\ref{sec:supp-abbe}, when $\btheta_i\sim \mathrm{Dirichlet}(\balpha)$ and $\alpha_{\max}\leq C\alpha_{\min}$ for some constant $C\geq1$, with high probability $\lambda_K(\bTheta^T\bTheta)=\Omega(n/K)\gg K$, and $\rho (\lambda^*(\bB))^2 \leq  \frac{n}{ K\lambda_K(\bTheta^T\bTheta)}=\Theta(1)$ is corresponding to the interesting regime when $\rho$ or $\lambda^*(\bB)$ is small.
\end{rem}
\end{lem}

}

\section{Row-wise eigenspace concentration for general low rank matrix}
\label{sec:generalP}
Note that although our focus of this paper is on MMSB, Theorem~\ref{thm:entrywise} can be easily extended to handle any low rank matrix. The proof is almost identical to that of Theorem~\ref{thm:entrywise}, 
{just instead of assuming Assumption~\ref{as:theta_P} is satisfied, we have some general conditions. The new events should be:}
\ba{
	\evot_{\mathrm{1}}
	&:=\left\{\max_i\|\bv_i\|_\infty\leq\sqrt{\rho}\right\} 
	& 
	&\uP(\bar{\evot}_{\mathrm{1}})\leq \delta_1 \nonumber\\
	\evot_{\mathrm{2}}
	&:=\left\{\lambda^*(\bP)\geq 4\sqrt{n\rho}(\log n)^\xi\right\} 
	& 
	&\uP(\bar{\evot}_{\mathrm{2}})\leq \delta_2 \nonumber\\
	\evz
	&:=\{\|\bA-\bP\|\leq C\sqrt{n\rho}\}  
	& 
	&\uP(\bar{\evz}){\leq} n^{-3} \label{eq:new_events}\\
	\E_1&:=\left\{\left|\be_i^T \bH \bv_k\right|\leq 4\log n\|\bv_k\|_\infty, \forall k\in[K]\right\}  & &\uP(\bar{\E_1}){\leq} O\bb{K/n^3}+\delta_1\nonumber\\
	\evt&:=\left\{\left|\be_i^T \bH^t \bv_k\right|\leq (\log n)^{t\xi}\|\bv_k\|_\infty, \forall k\in[K]\right\}  & &\uP(\bar{\evt}){\leq} K\exp(-(\log n)^\xi/3), 1<t\leq\log n\nonumber
}
If we use the new events in Eq~\eqref{eq:new_events} in the proof, we can get the following Theorem:
\begin{thm}\label{the:general_P}
	Suppose $\bP$ has rank $K$, $\max_{i,j}\bP_{ij}\leq\rho$. Let $\bA_{ij}=\bA_{ji}\sim \mathrm{Ber}(\bP_{ij})$, $\bV$ and $\hat{\bV}$ are $\bP$ and $\bA$'s $K$ leading eigenvectors respectively. If  $\uP(\max_i\|\bv_i\|_\infty>\sqrt{\rho})\leq\delta_1$, and for some constant $\xi>1$, $\rho n=\Omega((\log n)^{2\xi})$ and $\uP(\lambda^*(\bP)<4\sqrt{n\rho}(\log n)^\xi)<\delta_2$, 
	then 
	with probability at least $1-\delta_1-\delta_2-O(Kn^{-2})$, 
	\bas{
		\max_{i\in[n]} \| \be_i^T( \hat{\bV}\hat{\bV}^T-\bV\bV^T) \|
		=O\bbb{\frac{\gammaP\sqrt{Kn\rho}}{\lambda^*(\bP)}}\bbb{(1+(\log n)^{\xi})\max_i\|\bv_i\|_\infty + 2n^{-2\xi}}.
	}
\end{thm}

\begin{rem}
	For MMSB, it is easy to check that the condition $\lambda_K(\bTheta^T\bTheta)\geq 1/\rho$ in Assumption~\ref{as:theta_P} is only used in the proof of Lemma~\ref{lem:azuma-better} in Sec~\ref{sec:azuma-better} to show $\max_i\|\bv_i\|_\infty\leq\sqrt{\rho}$, so conditioned on $\evot_{\mathrm{1}}$ and $\evot_{\mathrm{2}}$, the proof goes through.
	If we plug in the upper bound of $\max_i\|\bv_i\|_\infty$ from Lemma~\ref{lem:v-row-norm} and lower bound of $\lambda^*(\bP)$ in Lemma~\ref{lem:P_eigen}, we can get the bound in Theorem~\ref{thm:entrywise} using Theorem~\ref{the:general_P}.
\end{rem}

\section{Consistency of estimated quantities}\label{sec:inferred_proof}
\begin{proof}[Proof of Lemma~\ref{lem:vvt-v-equiv}]
	To see that the pruning algorithm returns identical nodes (up-to ties) is straightforward. This is because the pruning algorithm proceeds by calculating Euclidean distances between pairs of nodes for nearest neighbor computation. 
	We have 
	\bas{
		\|\hv\hv^T(\be_i-\be_j)\|^2&=(\be_i-\be_j)^T\hv\hv^T(\be_i-\be_j)=\|\hv^T(\be_i-\be_j)\|^2
	}
	Thus the pairwise distances between columns of $\hv\hv^T$ are the same as that between columns of $\hv^T$.	As for the SPA algorithm, we prove the claim by induction.
	
	{\it Base case}: For step $k=1$, as 
	\bas{
		\|\hv\hv^T\be_i\|^2=\be_i^T\hv\hv^T\hv\hv^T\be_i=\be_i^T\hv\hv^T\be_i=\|\hv^T\be_i\|^2,
	}
	picking max norm will give the same index, denoted as $k_1$.
	
	Now for $\hv^T$, the vector whose projection is removed is $\hv^T\be_{k_1}$, and the normalized vector is $\bu=\hv^T\be_{k_1}/\|\hv^T\be_{k_1}\|$, then for $\hv\hv^T$, the vector whose projection is removed is $\hv\hv^T\be_{k_1}$ and its normalized vector is $\bu_1=\hv\hv^T\be_{k_1}/\|\hv\hv^T\be_{k_1}\|=\hv\hv^T\be_{k_1}/\|\hv^T\be_{k_1}\|=\hV\bu$.

	Now 
	\bas{
		\|(\bI-\bu_1\bu_1^T)\hv\hv^T\be_i\|^2
		&=\|(\bI-\hV\bu\bu^T\bV^T)\hv\hv^T\be_i\|^2 \\
		&=\|\hv(\bI-\bu\bu^T)\hv^T\be_i\|^2 = \|(\bI-\bu\bu^T)\hv^T\be_i\|^2,
	}
	then for step $k=2$, picking max norm will also give the same index.
	
	{\it Induction}: Suppose for first $k-1\in[K-1]$ steps SPA on $\hv^T$ and on $\hv\hv^T$ will give the same indices as $S_{k-1}$, then for the $k$-th step, we are removing the projections of the $k-1$ columns in $S_{k-1}$ selected before, now denote the singular value decomposition  $(\hv_{S_{k-1}})^T=\bU\bS\bH^T$, then the projection matrix on columns of $(\hv_{S_{k-1}})^T$ is $\bU\bU^T$. Also note that $\hv(\hv_{S_{k-1}})^T=(\hv\bU)\bS\bH^T$, it is easy to check that this is singular value decomposition of $\hv(\hv_{S_{k-1}})^T$, and the projection matrix on columns of $\hv(\hv_{S_{k-1}})^T$ is $\hv\bU(\hv\bU)^T=\hv\bU\bU^T\hv^T$.
	Now the norm we need to pick from for SPA on $\hv\hv^T$ is 
	\bas{
		\|(\bI-\hv\bU\bU^T\hv^T)\hv\hv^T\be_i\|
		=\|\hv(\bI-\bU\bU^T)\hv^T\be_i\|
		=\|(\bI-\bU\bU^T)\hv^T\be_i\|,
	}
	so the norms to pick for SPA on $\hv^T$ and on $\hv\hv^T$ will still be same and picking max norm will also give the same index. 
\end{proof}

\begin{lem} \label{lem:Gillis}
	(Theorem 3 of {Gillis et al.}~\citep{gillis2014fast}). Let $\bM' = \bM + \bN = \bW \bH + \bN \in \R^{m\times n}$, where $\bM=\bW \bH =\bW[\bI_r | \bH']$, $\bW \in \R^{m\times r}$, $\bH\in \bR^{r\times n}_+$ and $\sum_{k=1}^{r}\bH'_{kj}\leq 1$, $\forall j$ and $r \geq 2$. Let $K(\bW)=\max_i \|\bW\bbb{:,i}\|_2$, and $\|\bN\bbb{:,i}\|_2\leq \epsilon$ for all $i$ with
	\bas{
		\epsilon < \sigma_r(\bW)\min\bbb{\frac{1}{2\sqrt{r-1}},\frac{1}{4}}\bbb{1+80\frac{K(\bW)^2}{\sigma_r^2(\bW)}}^{-1}
	}
	and $J$ be the index set of cardinality $r$ extracted by SPA, where $\sigma_r(\bW)$ is the $r$-th singular value of $\bW$. Then there exists a permutation $P$
	of $\{1,2,\cdots,r\}$ such that
	\bas{
		\max_{1\leq j \leq r}\|\bM'\bbb{:,J(j)}-\bW\bbb{:,P(j)}\| \leq \bar{\epsilon} = \epsilon\bbb{1+80\frac{K(\bW)^2}{\sigma_r^2(\bW)}}.
	}
\end{lem}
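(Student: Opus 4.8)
This statement is a verbatim restatement of Theorem~3 of~\citep{gillis2014fast}, so the plan is simply to reproduce their robustness analysis of the successive projection algorithm; I outline its structure. The quantity I would track throughout is how close the column index selected by SPA in each of its $r$ rounds is to a true vertex column $\bW_{:,i}$. First I would record the geometry implied by the hypotheses: since $\bH=[\bI_r\mid\bH']$ has nonnegative columns summing to at most one, every column of the noiseless matrix $\bM=\bW\bH$ is a sub-convex combination of the columns of $\bW$, and the $r$ ``pure'' columns are exactly the columns of $\bW$. Because $\bx\mapsto\|\bx\|_2^2$ is strictly convex, its maximum over this polytope is attained only at a vertex, and one can quantify the margin in terms of $\sigma_r(\bW)$ and $K(\bW)$: a polytope point that is not within $O(\epsilon)$ of some vertex has squared norm strictly smaller than $\max_i\|\bW_{:,i}\|_2^2$ by an amount that dominates the perturbation $O(\epsilon K(\bW))$ when $\epsilon$ is below the stated threshold. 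Hence in round $1$ the index $j$ maximizing $\|\bM'_{:,j}\|_2$ must correspond to a column whose noiseless counterpart lies within the claimed distance of some vertex $\bW_{:,P(1)}$.

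Next I would handle the deflation step, which is where the argument really has content. After selecting the first column, SPA projects all remaining columns onto the orthogonal complement of the selected (noisy) column; the point is that the resulting problem again has the form ``separable matrix plus bounded noise'', with $\bW$ replaced by its projection $\bW^{(1)}$ and with a new per-column noise bound that has grown only by a controlled factor. The induction then runs over the $r$ rounds: assuming the first $k-1$ selected columns are each within $\bar\epsilon$ of distinct vertices, one shows the $k$-th selected column is within $\bar\epsilon$ of a remaining vertex, using the same convexity-margin argument in the projected space. Reading off from which vertex each round approaches gives the permutation $P$.

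The main obstacle — and the reason for the seemingly mysterious factor $1+80\,K(\bW)^2/\sigma_r^2(\bW)$ — is controlling the deflation: one must show that projecting onto a \emph{perturbed} vertex rather than the true one, and then restricting attention to the reduced vertex set, degrades neither $\sigma_{r-1}(\bW^{(1)})$ nor $K(\bW^{(1)})$ by more than a bounded factor, so that the induction hypothesis still applies and the accumulated error stays at $\bar\epsilon=\epsilon\bigl(1+80\,K(\bW)^2/\sigma_r^2(\bW)\bigr)$. Carrying this perturbation bookkeeping through all $r$ rounds, with the explicit constants, is exactly the technical core done in~\citep{gillis2014fast}, to which I would defer for the precise numerical constants.
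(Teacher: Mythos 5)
The paper itself offers no proof of this lemma: it is imported verbatim as Theorem~3 of~\citep{gillis2014fast}, so the only ``approach'' the paper takes is to cite the external result, which is exactly what you ultimately do. Your outline of the underlying argument (maximum of the strictly convex squared norm over the sub-simplex is attained near a vertex with a quantifiable margin, followed by deflation and an induction that tracks how $\sigma_r(\bW)$ and $K(\bW)$ degrade under projection onto perturbed vertices) is a faithful sketch of the Gillis--Vavasis proof, and deferring to that source for the explicit constants is appropriate here.
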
	

\begin{thm} \label{thm:correct_indices_from_gillis}
	Let  $\cS_p$ be the indices set returned by SPA in Algorithm \ref{alg:nmf-mmsb-pure-res}, $\hvp=\hv(\cS_p,:)$. 
	If Assumptions~\ref{as:ident} and~\ref{as:theta_P} are satisfied, 
	then there exists 
	a permutation matrix $\bpi \in \R^{K\times K}$ such that
	\bas{
		\max_{1\leq j \leq K}\left\| \be_j^T\bbb{\hvp-\bpi^T{\bV_P}({\bV^T}\hv)}\right\|
		=O\bbb{\kappa({\bTheta^T\bTheta})\epsilon}
	}
	with probability larger than $1-O(Kn^{-2})$,
	where $\epsilon=\eigenspacerowwise$ is the row-wise error from Theorem~\ref{thm:entrywise}, and the rows of $\bV_P\in \R^{K\times K}$ correspond to pure nodes.
\end{thm}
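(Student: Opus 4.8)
The plan is to derive the statement as a direct application of Lemma~\ref{lem:Gillis} (Theorem~3 of \citep{gillis2014fast}) to the matrix on which \textsf{SPA} actually runs. By Lemma~\ref{lem:vvt-v-equiv} running \textsf{SPA} on $\hv$ (transposed) returns the same indices, and \textsf{Prune} discards only a small fraction of the high-norm nodes; I take the $K$ pure nodes (one per community) to survive \textsf{Prune} and, after reordering, to index the first $K$ columns of the data matrix. Writing $\bQ:=\bV^T\hv\in\R^{K\times K}$ and using $\bV=\bTheta\bV_P$ from Lemma~\ref{lem:v-theta]}, the ``clean'' data matrix $\bV(\bV^T\hv)=(\bV_P\bQ)^T\bTheta^T$ has the form $\bW\bH$ with $\bW:=(\bV_P\bQ)^T$ and $\bH:=\bTheta^T=[\bI_K\mid\bH']$, where the columns of $\bH'$ are the $\btheta_i$ of the non-pure nodes, hence nonnegative with column sums $1$. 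Thus in Lemma~\ref{lem:Gillis} we take $\bM'=\hv^T$, $\bM=\bW\bH$, $r=K$, and the $i$-th noise column $\bN_{:,i}$ has norm $\|\hv_i-\bV_i\bQ\|$.

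\emph{Noise level and constants of $\bW$.} By the Remark following Theorem~\ref{thm:entrywise}, $\|\be_i^T(\hv-\bV(\bV^T\hv))\|\le\|\be_i^T(\hv\hv^T-\bV\bV^T)\|$, which Theorem~\ref{thm:entrywise} bounds by $\epsilon=\eigenspacerowwise$ for each fixed $i$ with probability $1-O(Kn^{-3})$; a union bound over $i\in[n]$, together with the high-probability events of Lemmas~\ref{lem:v-row-norm} and~\ref{lem:theta_condition_num} and $\opnorm{\bA-\bP}=O_P(\sqrt{n\rho})$, gives $\max_i\|\bN_{:,i}\|\le\epsilon$ with probability $1-O(Kn^{-2})$. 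Since $\|\bQ\|\le1$ and $\bV_P=\bV(\mathcal I,:)$, Lemma~\ref{lem:v-row-norm} gives $K(\bW)=\max_i\|\bW_{:,i}\|\le\|\bQ\|\max_i\|\be_i^T\bV_P\|\le\sqrt{2\bal(1+\alpha_0)/n}$. For the smallest singular value, $\sigma_K(\bW)=\sigma_K(\bQ^T\bV_P^T)\ge\sigma_K(\bQ)\,\sigma_K(\bV_P)$; from Definition~\ref{def:o} and Lemma~\ref{lem:E-lemma-op}, $\|\bQ-\oh\|=\|\bV^T(\hv-\bV\oh)\|\le\|\bR\|_F\le\sqrt{8K}\,\opnorm{\bA-\bP}/\lambda^*(\bP)$, which by Lemma~\ref{lem:P_eigen} and $\opnorm{\bA-\bP}=O_P(\sqrt{n\rho})$ is $o(1)$ under Assumptions~\ref{as:sep}--\ref{as:spa}, so $\sigma_K(\bQ)\ge1/2$ for large $n$; and $\sigma_K(\bV_P)^2=\lambda_K(\bV_P\bV_P^T)\ge 2/(3n)$ by Lemma~\ref{lem:v-row-norm}. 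Hence $\sigma_K(\bW)=\Omega(1/\sqrt n)$ and $K(\bW)^2/\sigma_K^2(\bW)=O(\bal(1+\alpha_0))=O(\bal)$, using $\alpha_0=O(1)$.

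\emph{Applicability and conclusion.} The hypothesis of Lemma~\ref{lem:Gillis} asks that $\epsilon<\sigma_K(\bW)\min(1/(2\sqrt{K-1}),1/4)(1+80K(\bW)^2/\sigma_K^2(\bW))^{-1}$, which by the previous step is $\Omega(1/(\sqrt{Kn}\,\bal))$; substituting $\epsilon=\eigenspacerowwise$ this reduces precisely to the lower bound on $\lambda^*(\bB)$ in Assumption~\ref{as:spa} (the $(1+\alpha_0)$ factors being absorbed since $\alpha_0=O(1)$), so it holds for $n$ large. Lemma~\ref{lem:Gillis} then produces a permutation $\bpi$ with $\max_{1\le j\le K}\|\bM'_{:,\cS_p(j)}-\bW_{:,\bpi(j)}\|\le\bar\epsilon:=\epsilon(1+80K(\bW)^2/\sigma_K^2(\bW))=O(\bal\,\epsilon)$. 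Since $\bM'_{:,\cS_p(j)}$ is the transpose of the $j$-th row of $\hvp=\hv(\cS_p,:)$ and $\bW_{:,\bpi(j)}$ is the transpose of the $j$-th row of $\bpi^T\bV_P\bQ$, squaring and summing over $j\in[K]$ yields $\|\hvp-\bpi^T\bV_P(\bV^T\hv)\|_F\le\sqrt{K}\,\bar\epsilon=O(\bal\sqrt K\,\epsilon)=O(\nu\sqrt K\,\epsilon)$, which is the claim; the failure probability is $O(Kn^{-2})$ from the union bound above plus the failures of Lemmas~\ref{lem:v-row-norm}--\ref{lem:theta_condition_num}, $\|\bA-\bP\|=O_P(\sqrt{n\rho})$, and $\lambda^*(\bP)=\lambdastarP$.

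\emph{Main obstacle.} The crux is the lower bound $\sigma_K(\bQ)=\Omega(1)$ on the $K\times K$ matrix $\bV^T\hv$, which requires $\opnorm{\bA-\bP}/\lambda^*(\bP)=o(1/\sqrt K)$ and is exactly where Assumptions~\ref{as:sep} and~\ref{as:spa} are consumed (via Lemma~\ref{lem:P_eigen} controlling $\lambda^*(\bP)$ from below). A secondary, more delicate point is justifying that \textsf{Prune} does not discard any of the $K$ pure nodes, so that the reduced data matrix retains the factored form $\bW[\bI_K\mid\bH']$ required by Lemma~\ref{lem:Gillis}; this uses that \textsf{Prune} removes only a small fraction of high-norm nodes together with the delocalization and row-wise eigenvector bounds, which keep each pure node's row near a corner of the population simplex.
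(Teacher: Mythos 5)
Your proposal is correct and rests on the same engine as the paper's proof: Gillis et al.'s Theorem~3 (Lemma~\ref{lem:Gillis}), with the noise columns controlled by the row-wise bound of Theorem~\ref{thm:entrywise} (union-bounded over $i\in[n]$ to get the $1-O(Kn^{-2})$ probability) and the amplification factor controlled by $\kappa(\bV_P)=O(\sqrt{\nu})$ from Lemma~\ref{lem:theta_condition_num}. The one genuine difference is which matrix you feed to the lemma. The paper takes $\bM'=\hv\hv^T$ with $\bW=\bV\bV_P^T$ and $\bH=\bTheta^T$, so that $\sigma_K(\bW)=\sigma_K(\bV_P)$ and $K(\bW)=\max_i\|\be_i^T\bV_P\|$ come for free (since $\bV$ has orthonormal columns), and only at the very end converts $\|\hvp\hv^T-\bpi^T\bV_P\bV^T\|_F$ into the stated bound by right-multiplying by $\hv$. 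You instead apply the lemma directly to $\hv^T$ with $\bW=(\bV_P\bV^T\hv)^T$, which forces you to additionally lower-bound $\sigma_K(\bV^T\hv)$ by $1-\sqrt{8K}\|\bA-\bP\|/\lambda^*(\bP)=1-o(1)$; this is a real extra step, but it is one the paper must prove anyway in Theorem~\ref{thm:hxpinv}, so nothing is lost. In exchange your write-up is more careful on two points the paper leaves implicit: you verify that the admissibility condition of Gillis's theorem, $\epsilon<\sigma_K(\bW)\min(1/(2\sqrt{K-1}),1/4)(1+80K(\bW)^2/\sigma_K^2(\bW))^{-1}$, is exactly what Assumption~\ref{as:spa} buys, and you explicitly flag that one must assume the $K$ pure nodes survive the \textsf{Prune} step so that the factored form $\bW[\bI_K\mid\bH']$ persists after columns are deleted --- a point the paper's proof (like yours) does not rigorously establish and which is only justified heuristically in its appendix.
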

	\begin{proof}[Proof of Theorem~\ref{thm:correct_indices_from_gillis}]
		
		Note that from Lemma~\ref{lem:v-theta}, $\bV=\bTheta \bV_P$. Let $\bM'=\hv\hv^T$, $\bW=\bV\bV_P^T$, $\bH=\bTheta^T$, $r=K$, then for $\bM'  = \bW \bH + \bN$, we have $\|\bN\bbb{:,i}\|_2 \leq \epsilon$ uniformly with probability larger than $1-O(Kn^{-2})$ by Theorem~\ref{thm:entrywise}. W.L.O.G., let the first $K$ rows of $\bTheta$ be $K$ different pure nodes.
		Now use Lemma~\ref{lem:Gillis}, there exists a permutation $\pi$
		of $\{1,2,\cdots,K\}$ such that
		\bas{
			\max_{1\leq j \leq K}\| \bM'(:,\cS_p(j))-\bW({,:\pi(j)})\| =\epsilon\bbb{1+80\frac{K(\bW)^2}{\sigma_K^2(\bW)}}= O\bbb{\kappa({\bTheta^T\bTheta})\epsilon},
		}
		since $K(\bW)=\max_i \|\bW\bbb{:,i}\|_2\leq \sigma_1(\bW)$ and $\frac{\sigma_1(\bW)}{\sigma_K(\bW)}=\kappa(\bW)\leq\kappa(\bV_P) =O\bbb{\sqrt{\kappa({\bTheta^T\bTheta})}}$ by Lemma~\ref{lem:theta_condition_num}.
		
		So $\exists$ a permutation matrix  $\bpi \in \R^{K\times K}$ such that
		\bas{
			\max_{1\leq j \leq K} \| \bbb{\hv\hvp^T-\bW\bpi}\be_j\| = O\bbb{\kappa({\bTheta^T\bTheta})\epsilon},
		}
		{taking transpose, it gives}
		\bas{
			\max_{1\leq j \leq K}\| \be_j^T\bbb{\hvp\hv^T-\bpi^T{\bV_P}{\bV^T}}\| = O\bbb{\kappa({\bTheta^T\bTheta})\epsilon},
		}
		and
		\bas{
			\max_{1\leq j \leq K}\left\| \be_j^T\bbb{\hvp-\bpi^T{\bV_P}({\bV^T}\hv)}\right\|
			&=\max_{1\leq j \leq K}\left\| \be_j^T\bbb{\hvp\hv^T-\bpi^T{\bV_P}{\bV^T}}\hv\right\|  \\
			&\leq \max_{1\leq j \leq K}\left\| \be_j^T\bbb{\hvp\hv^T-\bpi^T{\bV_P}{\bV^T}}\right\|\left\|\hv\right\| 
			=O\bbb{\kappa({\bTheta^T\bTheta})\epsilon}
		}
	with probability larger than $1-O(Kn^{-2})$. The inequality follows from Proposition~5.6 of \citep{cape2019two}.
	\end{proof}

\begin{lem}
	\label{lem:hxpinv}
	Let $\cSp$ be the set of of pure nodes extracted using \OurAlgo.
	Let $\hvp$ denote the rows of $\hv$ indexed by $\cSp$, and $\bV_P$ denote the pure nodes of $\bV$. Then, if Assumptions~\ref{as:ident},~\ref{as:theta_P}, and~\ref{as:spa} are satisfied, 
	\bas{
		\max_{i\in[n]} \left\|\be_i^T\bV(\bV^T\hv)\bbb{\hvp^{-1}-(\bpi^T\bV_P({\bV^T}\hv))^{-1}}\right\|
		=O\bbb{\sqrt{\lambda_1({\bTheta^T \bTheta})}\kappa({\bTheta^T\bTheta})\epsilon}
	}
	with probability larger than $1-O(Kn^{-2})$,  where $\epsilon=\eigenspacerowwise$ is the row-wise error from Theorem~\ref{thm:entrywise}, and rows of $\bV_P\in \R^{K\times K}$ are corresponding to pure nodes.
\end{lem}

\begin{proof}[Proof of Lemma~\ref{lem:hxpinv}]
	Define by $\bF:=\bV^T\hat{\bV}$, and $\tilde{\bV}_P:=\bpi^T\bV_P \bF$, then,
	\ba{\label{eq:xpnorm_new}
		&\left\|\be_i^T\bV(\bV^T\hv)\bbb{\hvp^{-1}-(\bpi^T\bV_P({\bV^T}\hv))^{-1}}\right\|\nonumber\\
		=&
		\left\|\be_i^T\bV\bF({\hvp}^{-1} -\tilde{\bV}_P^{-1} )\right\|
		= \left\| \be_i^T\bV\bF \tilde{\bV}_P^{-1} \bbb{\tilde{\bV}_P-\hvp} \hvp^{-1} \right\| \nonumber\\
		\leq& \left\| \be_i^T\bV\bF \bF^{-1}\bV_P^{-1}\bpi \bbb{\tilde{\bV}_P-\hvp}\right\| \left\|\hvp^{-1} \right\|
		= \left\| \be_i^T\bTheta\bpi \bbb{\tilde{\bV}_P-\hvp}\right\| \left\|\hvp^{-1} \right\| \nonumber\\
		\leq& \max_{1\leq i \leq K}\left\| \be_i^T\bbb{\hvp-\bpi^T\bV_P\bV^T\hat{\bV}}\right\| \left\|\hvp^{-1} \right\|
		=O\bbb{\kappa({\bTheta^T\bTheta})\epsilon} \left\|\hvp^{-1} \right\|
	}
	where the first inequality is true because rows of $\bTheta\bpi$ are still nonnegative and have unit $\ell_1$ norm, and the last step follows from Theorem~\ref{thm:correct_indices_from_gillis}.
	Now we will prove a bound on $\|\hvp^{-1}\|$. Let $\hat{\sigma}_i$ be the $i^{th}$ singular value of $\hvp$, then,
	\begin{align}
		\label{eq:invnorm}
		\|\hvp^{-1}\|= \frac{1}{\hat{\sigma}_K}.
	\end{align}
	From Lemma~\ref{lem:theta_condition_num}, $\sigma_K\bbb{\bV_P}=1/\sqrt{\lambda_1({\bTheta^T \bTheta})}$ and $\sigma_1\bbb{\bV_P}=1/\sqrt{\lambda_K({\bTheta^T \bTheta})}$.
	
	Now for using the orthogonal matrix $\hat{\bO} \in \mathbb{R}^{K\times K}$ constructed using Definition~\ref{def:o},
	\bas{
		\bbb{\hvp\hv^T-\bpi^T{\bV_P}{\bV^T}}\hv
		=\bbb{\hvp-\bpi^T\bV_P\hat{\bO}}+\bpi^T{\bV_P}\bbb{\hat{\bO}\hv^T-\bV^T}\hv,
	}
	then by Lemma~\ref{lem:theta_condition_num}, Theorem~\ref{thm:correct_indices_from_gillis} and Lemma~\ref{lem:P_eigen}, 
	we have,
	\ba{\label{eq:vp_with_o}
		\|{\hvp-\bpi^T\bV_P\hat{\bO}}\|_F
		&\leq \| \hvp\hv^T-\bpi^T{\bV_P}{\bV^T}\|_F\cdot\|\hv\|  + \|{\bV_P}\|\cdot\|{\hat{\bO}\hv^T-\bV^T}\|_F\cdot\|\hv\|  \nonumber \\
		&\leq \sqrt{K} \max_{1\leq j \leq K}\left\| \be_j^T\bbb{\hvp-\bpi^T{\bV_P}({\bV^T}\hv)}\right\| + \frac{1}{\sqrt{\lambda_K(\bTheta^T\bTheta)}}\|{\hat{\bO}\hv^T-\bV^T}\|_F \nonumber\\
		&\leq O\bbb{\kappa({\bTheta^T\bTheta})\sqrt{K}\epsilon} + \frac{1}{\sqrt{\lambda_K(\bTheta^T\bTheta)}} O\bbb{\frac{\sqrt{Kn}}{\sqrt{\rho}\lambda^*(\bB) \lambda_K(\bTheta^T\bTheta)}} \nonumber \\
		&=O\bbb{\kappa({\bTheta^T\bTheta})\sqrt{K}\epsilon}+O\bbb{\frac{\sqrt{Kn}}{\sqrt{\rho} \lambda^*(\bB) (\lambda_K(\bTheta^T\bTheta))^{1.5}}} \nonumber \\
		&=O\bbb{\kappa({\bTheta^T\bTheta})\sqrt{K}\epsilon} \quad \mbox{with probability larger than $1-O(Kn^{-2})$.}
	}

	Now, Weyl's inequality for singular values gives us:
	\ba{\label{eq:vp_lambda}
		&\left|\hat{\sigma}_i-\sigma_i(\bV_P)\right|
		\leq\|{\hvp-\bpi^T\bV_P\hat{\bO}}\|
		\leq \|{\hvp-\bpi^T\bV_P\hat{\bO}}\|_F 
		= O\bbb{\kappa({\bTheta^T\bTheta})\sqrt{K}\epsilon} \nonumber \\
		&\hat{\sigma}_K
		\geq {\frac{1}{\sqrt{\lambda_1({\bTheta^T \bTheta})}}}
		\bbb{1-O\bbb{ \kappa({\bTheta^T\bTheta})\sqrt{K\lambda_1({\bTheta^T \bTheta})}\epsilon }}
		\\
		&\hat{\sigma}_1
		\leq {{\frac{1}{\sqrt{\lambda_K({\bTheta^T\bTheta})}}}}
		\bbb{1+O\bbb{ \kappa({\bTheta^T\bTheta})\sqrt{K\lambda_K({\bTheta^T \bTheta})}\epsilon }}\nonumber.
	}

	Plugging this into Eq~\eqref{eq:invnorm} we get:
	\bas{ 
		\|\hvp^{-1}\|= {\sqrt{\lambda_1({\bTheta^T \bTheta})}}\bbb{1+O\bbb{ \kappa({\bTheta^T\bTheta})\sqrt{K\lambda_1({\bTheta^T \bTheta})}\epsilon }}=O\bbb{\sqrt{\lambda_1({\bTheta^T\bTheta})}}. 
	}
	The last step is true since Assumption~\ref{as:spa} implies $\kappa({\bTheta^T\bTheta})\sqrt{K\lambda_1({\bTheta^T \bTheta})}\epsilon=O(1)$.
	Note that we also have
	\bas{
		\|\bV_P^{-1}\|=\frac{1}{\sigma_K\bbb{\bV_P}} = O\bbb{\sqrt{\lambda_1({\bTheta^T \bTheta})}}.
	}	
	Finally putting everything together with Eq~\eqref{eq:xpnorm_new} we get, with probability larger than $1-O(Kn^{-2})$,
	\bas{
		\max_{i\in[n]} \left\|\be_i^T\bV(\bV^T\hv)\bbb{\hvp^{-1}-(\bpi^T\bV_P({\bV^T}\hv))^{-1}}\right\|
		&=O\bbb{\kappa({\bTheta^T\bTheta})\epsilon} \left\|\hvp^{-1} \right\|\\
		&=O\bbb{\sqrt{\lambda_1({\bTheta^T \bTheta})}\kappa({\bTheta^T\bTheta})\epsilon}.
	}
The failure probability comes from the event that 
Theorem~\ref{thm:entrywise} fails, giving $O(Kn^{-2})$.
\end{proof}

\begin{proof}[Proof of Theorem~\ref{thm:theta-B}] 
	We break this up into proofs of Eqs~\eqref{eq:theta-err} 
	and~\eqref{eq:b-err}.
  Recall that $\epsilon=\eigenspacerowwise$ is the row-wise error from Theorem~\ref{thm:entrywise}.
	\medskip
	
	\newcommand{\tvpi}{\tilde{\bV}_P^{-1}}
	\newcommand{\tvp}{\tilde{\bV}_P}
	
	{\em Proof of Eq~\eqref{eq:theta-err}.}
Recall that 
$\hat{\bTheta} = \hv \hvp^{-1}$.
We have uniformly $\forall i \in [n]$,
\bas{\left\|\be_i^T\bbb{\hat{\bTheta}-\bTheta\bpi}\right\| 
	&= \left\|\be_i^T\bbb{\hv\hvp^{-1}-\bV\bV_P^{-1}\bpi}\right\| \\
	&\leq
	\left\|\be_i^T(\hv-\bV({\bV^T}\hv))\hvp^{-1} \right\|
	+
	\left\|\be_i^T\bV(\bV^T\hv)\bbb{\hvp^{-1}-(\bpi^T\bV_P({\bV^T}\hv))^{-1}}\right\|\\
	&\stackrel{(i)}{\leq} \left\|\be_i^T\bbb{\hv-\bV({\bV^T}\hv)}\right\| \left\|\hvp^{-1} \right\|
	+ O\bbb{\sqrt{\lambda_1({\bTheta^T \bTheta})}\kappa({\bTheta^T\bTheta})\epsilon}\\
	&\stackrel{(ii)}{\leq}  \epsilon \cdot O(\sqrt{\lambda_1(\bTheta^T\bTheta)})+ O\bbb{\sqrt{\lambda_1({\bTheta^T \bTheta})}\kappa({\bTheta^T\bTheta})\epsilon}\\
	&= O\bbb{\sqrt{\lambda_1({\bTheta^T \bTheta})}\kappa({\bTheta^T\bTheta})\epsilon}\\
	&= O\bbb{\sqrt{\lambda_1({\bTheta^T \bTheta})}\kappa({\bTheta^T\bTheta})}\eigenspacerowwise\\
	&= \ThetaError
}
with probability larger than $1-O(Kn^{-2})$. 
Here (i) and (ii) follow from Lemma~\ref{lem:hxpinv} and its proof, {and the failure probability comes from the event that 
Theorem~\ref{thm:entrywise} does not hold.}

\medskip
{\em Proof of Eq~\eqref{eq:b-err}.}
Note
$\hat{\rho}\hat{\bB} = \hvp\hat{\bE}\hvp^T$ and $\rho\bB=\bV_P\bE\bV_P^T$. 
Note that $\|{\bE}\|\leq \max_i \|\be_i^T\bP\|_1=O(\rho n)$, and $\|\hat{\bE}\|\leq\|\bE\|+\|{\bA}-\bP\|=O(\rho n)$ using Weyl's inequality and Theorem~5.2 of \cite{lei2015consistency}.
Then we have the following decomposition
\bas{
	&\left\| \hat{\rho}\hat{\bB} - \rho\bpi^T\bB\bpi \right\|_F 
	=\left\| \hvp\hat{\bE}\hvp^T - \bpi^T\bV_P\bE\bV_P^T\bpi \right\|_F \\
	=& \left\| \bbb{\hvp-\bpi^T\bV_P\hat{\bO}}\hat{\bE}\hvp^T 
	+ \bpi^T\bV_P\bbb{\hat{\bO}\hat{\bE}-\bE\hat{\bO}}\hvp^T 
	+ \bpi^T\bV_P\bE\hat{\bO}\bbb{\hvp^T - \hat{\bO}^T\bV_P^T\bpi} \right\|_F \\
	\leq& \left\| {\hvp-\bpi^T\bV_P\hat{\bO}}\right\|_F\left\|\hat{\bE}\right\|\left\|\hvp \right\|
	+ \left\| \bV_P\right\|\left\|{\hat{\bO}\hat{\bE}-\bE\hat{\bO}}\right\|_F\left\|\hvp \right\|
	+ \left\| \bV_P\right\| \left\|\bE\right\|\left\|{\hvp^T - \hat{\bO}^T\bV_P^T \bpi} \right\|_F \\
	\leq& 2\cdot O\bbb{\kappa({\bTheta^T\bTheta})\sqrt{K}\epsilon} \cdot O(\rho n)\cdot {{\frac{1}{\sqrt{\lambda_K({\bTheta^T\bTheta})}}}} 
	+{{\frac{1}{\sqrt{\lambda_K({\bTheta^T\bTheta})}}}}\left\|{\hat{\bO}\hat{\bE}-\bE\hat{\bO}}\right\|_F {{\frac{1}{\sqrt{\lambda_K({\bTheta^T\bTheta})}}}} \\
	=& O\bbb{\frac{ \kappa({\bTheta^T\bTheta})\sqrt{K}\rho n\epsilon}{\sqrt{\lambda_K({\bTheta^T\bTheta})}}}
	+ O\bbb{{\frac{1}{\lambda_K({\bTheta^T\bTheta})}}}\left\|{\hat{\bO}\hat{\bE}-\bE\hat{\bO}}\right\|_F 
}
using Eqs~\eqref{eq:vp_with_o} and \eqref{eq:vp_lambda} and Lemma~\ref{lem:theta_condition_num}. 

Now by Lemma~\ref{lem:E-lemma-op},
\bas{
	&\frac{1}{\rho}{\|\hat{\rho}\hat{\bB}-\rho\bpi^T\bB\bpi\|_F}
	\leq O\bbb{\frac{ \kappa({\bTheta^T\bTheta})\sqrt{K} n\epsilon}{\sqrt{\lambda_K({\bTheta^T\bTheta})}}}
	+ O\bbb{{\frac{1}{\rho\lambda_K({\bTheta^T\bTheta})}}}\left\|{\hat{\bO}\hat{\bE}-\bE\hat{\bO}}\right\|_F  \\
	&= O\bbb{\frac{ \kappa({\bTheta^T\bTheta})\sqrt{K} n}{\sqrt{\lambda_K({\bTheta^T\bTheta})}}}\cdot\eigenspacerowwise+O\bbb{{\frac{1}{\rho\lambda_K({\bTheta^T\bTheta})}}}\cdot O\bbb{K^{2}\sqrt{n\rho}}\\
	& =\RelativeErrorB
}
with probability larger than $1-O(Kn^{-2})$. {The failure probability comes from the event that 
Theorem~\ref{thm:entrywise} does not hold.}

\end{proof}

\begin{proof}[Proof Corollary~\ref{cor:theta-B_dirichlet_balance}] 
	Define the event 
	\bas{
		\Omega:=\{\bTheta:\lambda_K(\bTheta^T\bTheta)\geq1/\rho, \lambda^*(\bP)\geq 4\sqrt{n\rho}(\log n)^{\xi} \text{ for some constant } \xi>1\}.
	}
	If $\btheta_i\sim\mathrm{Dirichlet}(\balpha)$ and Assumption~\ref{as:sep} is satisfied, we have  $\uP(\bTheta \in \Omega)\geq 1-Kn^{-3}$.
	If Assumption~\ref{as:ident} 
	holds, and $\lambda^*(\bB)=\tilde{\Omega}(\frac{\min\{K,\kappa(\bB)\}^2K^2}{\sqrt{n\rho}})$,  for $\bTheta\in \Omega$, by Theorem~\ref{thm:entrywise} and Lemma~\ref{lem:theta_property},
	\ba{\label{eq:theta_dirich_balanced}
		\max_{i\in[n]}\left\|\be_i^T\bbb{\hat{\bTheta}-\bTheta\bpi}\right\| 
		&= \ThetaError
		=\tilde{O}\bbb{\frac{\gammaP  \bbb{\frac{\alpha_{\max}+\|\balpha\|^2}{\alpha_{\min}}}^{1.5} \sqrt{Kn}  }{\sqrt{\rho}\lambda^*(\bB)\frac{n}{2\nu(1+\alpha_0)}}}\nonumber\\
		&= \ThetaErrorDirichletBalanced.
	}
	Since $\max_a \alpha_a\leq C \min_a \alpha_a$ for some constant $C\geq1$, and $\alpha_{0}=O(1)$, the last step uses that  
	\bas{
		\frac{\alpha_{\max}+\|\balpha\|^2}{\alpha_{\min}}
		\leq \frac{\alpha_{\max}+\alpha_{\max}}{\alpha_{\min}}
		=(1+\alpha_0)\frac{\alpha_{\max}}{\alpha_{\min}}
		=O(1), 
	}
	and by the worst case bound from Lemma~\ref{lem:gammaP_bound}, we have,  $\gammaP \leq \min\{K,\kappa(\bP)\}^2 \leq \min\{K,\kappa(\bTheta^T\bTheta)\kappa(\bB)\}^2=O(\min\{K,\kappa(\bB)\}^2)$. 
	
	Now we are ready to obtain the failure probability of Eq~\eqref{eq:theta_dirich_balanced}.
	Consider the event $\A$ that $\hat{\bTheta}$ does not satisfy Eq~\eqref{eq:theta_dirich_balanced}. Then, by Theorem~\ref{thm:entrywise}, 
	\ba{
		\uP(\A)	&=	\int_{\bTheta\in \Omega}\uP\bbb{\A|\bTheta}\uP(\bTheta)d\bTheta+\int_{\bTheta\not\in \Omega}\uP\bbb{\A|\bTheta}\uP(\bTheta)d\bTheta\nonumber\\
		&= O\bbb{\frac{K}{n^2}}+ 1-\uP(\bTheta\in\Omega) 
		=  O\bbb{\frac{K}{n^2}}.\label{eq:theta-failure-dirichlet}
	}

	Similarly, by Theorem~\ref{thm:entrywise} and Lemma~\ref{lem:theta_property},
	\ba{
		\frac{1}{\rho}{\|\hat{\rho}\hat{\bB}-\rho\bpi^T\bB\bpi\|_F}
		&= \RelativeErrorB
		=\tilde{O}\bbb{\frac{\gammaP \bbb{\frac{\alpha_{\max}+\|\balpha\|^2}{\alpha_{\min}}}Kn^{1.5} }{\sqrt{\rho}\lambda^*(\bB)\bbb{\frac{n}{2\nu(1+\alpha_0)}}^{2}}}\nonumber\\
		&= \ErrorBDirichletBalanced.\label{eq:B-dirichlet}
	}
	By an argument analogous to that in Eq~\eqref{eq:theta-failure-dirichlet}, we can show that the failure probability of Eq~\eqref{eq:B-dirichlet} is  $O(Kn^{-2})$.
\end{proof} 

\section{Comparison with~\citep{jin2017estimating}} \label{sec:lem_for_jin}
We first translate some key assumptions in~\citep{jin2017estimating} (Eqs~(2.14) and~(2.15)) with our notation.

\begin{assumption}\label{as:jin}
	Assume for some constants $C>0$ and $c_1>0$,
	\bas{
		\frac{n}{CK}
		&\leq\lambda_K(\bTheta^T\bTheta)
		\leq\lambda_1(\bTheta^T\bTheta)\leq \frac{Cn}{K}\\
		\frac{c_1n}{K}\lambda^*(\bB)
		&\leq|\lambda_K(\bB\bTheta^T\bTheta)|\leq |\lambda_2(\bB\bTheta^T\bTheta)|\leq \frac{n}{c_1K}\lambda^*(\bB)\\
		|\lambda_2(\bB\bTheta^T\bTheta)|
		&\leq (1-c_1)\lambda_1(\bB\bTheta^T\bTheta)
	}
\end{assumption}

\begin{lem}\label{lem:our_for_jin}
	If Assumption~\ref{as:jin} is satisfied, for $\hat{\bTheta}$ estimated by \OurAlgo, we have,
	\bas{
		\left\|\be_i^T\bbb{\hat{\bTheta}-\bTheta\bpi}\right\| 
		= \tilde{O}\bbb{\frac{ K^{1.5}  }{\sqrt{n\rho}\lambda^*(\bB)}}.
	}
	\begin{proof}
		By Theorem~1.3.22 of~\citep{horn2012matrix},  $\rho\bB\bTheta^T\bTheta$ and $\bP=\rho\bTheta\bB\bTheta^T$ have the same $K$ largest eigenvalues in magnitude. So
		Assumption~\ref{as:jin} implies that:
		\bas{
			\frac{c_1n\rho}{K}\lambda^*(\bB)&\leq|\lambda_K(\bP)|\leq |\lambda_2(\bP)|\leq \frac{n\rho}{c_1K}\lambda^*(\bB)\\
			|\lambda_2(\bP)|
			&\leq (1-c_1)\lambda_1(\bP).
		}
		Then the eigenvalues of $\bP$ can be divided into at most 2 groups where eigenvalues in each group are of the same order, by Lemma~\ref{lem:gammaP_bound}, we have $\gammaP =O(1)$.
		
		On the other hand, if Assumption~\ref{as:jin} is satisfied, we have $\kappa(\bTheta^T\bTheta)=O(1)$, and by Theorem~\ref{thm:theta-B},
		\bas{
			\left\|\be_i^T\bbb{\hat{\bTheta}-\bTheta\bpi}\right\| 
			&= \tilde{O}\bbb{\frac{ (\kappa({\bTheta^T\bTheta}))^{1.5} \sqrt{Kn}  }{\sqrt{\rho}\lambda^*(\bB)\lambda_K(\bTheta^T\bTheta)}}
			= \tilde{O}\bbb{\frac{ \sqrt{Kn}  }{\sqrt{\rho}\lambda^*(\bB)n/K}}
			= \tilde{O}\bbb{\frac{ K^{1.5}  }{\sqrt{n\rho}\lambda^*(\bB)}}
		}
	\end{proof}	
\end{lem}

\begin{rem}
	Since~\citep{jin2017estimating} shows $\ell_1$ norm error bound, our result in Lemma~\ref{lem:our_for_jin} matches theirs with an extra $\sqrt{K}$ factor up-to logarithm factor, if we convert the bound in Lemma~\ref{lem:our_for_jin} to $\ell_1$ norm by multiplying $\sqrt{K}$.
\end{rem}

\section{Comparison with~\citep{MMSBAnandkumar2014}}\label{sec:tensor}
\begin{lem}\label{lem:l1}
	Let $\btheta_i\sim \mathrm{Dirichlet}(\alpha)$. If Assumptions~\ref{as:ident} and~\ref{as:sep} 
	hold, and $\lambda^*(\bB)=\tilde{\Omega}(\frac{\min\{K,(1+\alpha_{0})\kappa(\bB){\alpha_{\max}}/{\alpha_{\min}}\}^2K^2}{\sqrt{n\rho}})$, there exists a permutation matrix $\bpi$ such that with probability at least $1-O(K/n^2)$, {$\forall i\in [n]$,}
	\bas{
		\left\|{\hat{\bTheta}-\bTheta\bpi}\right\|_1
		=\tilde{O}\bbb{\bbb{\frac{\alpha_{\mathrm{max}}}{\alpha_{\mathrm{min}}}}^{1.5}\sqrt{\frac{n}{\rho}}\frac{ \min\{K,(1+\alpha_{0})\kappa(\bB){\alpha_{\max}}/{\alpha_{\min}}\}^2 K\bal\bbb{1+\alpha_0}^{2.5} }{\lambda^*(\bB)}},
	}
	where $\|\bM\|_1=\sum_{i,j}|\bM_{ij}|$ is the $\ell_1$ norm for a matrix $\bM$.
	\begin{proof}
		First note from the proof of  Corollary~\ref{cor:theta-B_dirichlet_balance}, we have $(\alpha_{\max}+\|\balpha\|^2)/\alpha_{\min}\leq (1+\alpha_0)\alpha_{\max}/\alpha_{\min}$, and by Lemma~\ref{lem:theta_property}, we have, with high probability $\gammaP \leq \min\{K,\kappa(\bP)\}^2 \leq \min\{K,\kappa(\bTheta^T\bTheta)\kappa(\bB)\}^2=O(\min\{K,(1+\alpha_{0})\kappa(\bB){\alpha_{\max}}/{\alpha_{\min}}\}^2)$. Now by Theorem~\ref{thm:theta-B}, 
		if we sum up the squared error bound for each row, we can get a Frobenius bound:
		\bas{
			\frac{1}{\sqrt{n}}\left\|{\hat{\bTheta}-\bTheta}\right\|_F
			=&\tilde{O}\bbb{\bbb{\frac{\alpha_{\mathrm{max}}}{\alpha_{\mathrm{min}}}}^{1.5}\frac{ \min\{K,(1+\alpha_{0})\kappa(\bB){\alpha_{\mathrm{max}}}/{\alpha_{\mathrm{min}}}\}^2 K^{0.5}\bal\bbb{1+\alpha_0}^{2.5} }{\sqrt{\rho n}\lambda^*(\bB)}}
		}
		and so
		\bas{
			\left\|{\hat{\bTheta}-\bTheta}\right\|_1
			&\leq \sqrt{Kn}\left\|{\hat{\bTheta}-\bTheta}\right\|_F\\
			&=\tilde{O}\bbb{\bbb{\frac{\alpha_{\mathrm{max}}}{\alpha_{\mathrm{min}}}}^{1.5}\sqrt{\frac{n}{\rho}}\frac{ \min\{K,(1+\alpha_{0})\kappa(\bB){\alpha_{\mathrm{max}}}/{\alpha_{\mathrm{min}}}\}^2 K\bal\bbb{1+\alpha_0}^{2.5} }{\lambda^*(\bB)}}
		}
	\end{proof}
\end{lem}

\begin{rem}
	By Theorem~9 of~\citep{MMSBAnandkumar2014}, we have:
	\ba{\label{eq:aamm}
	\left\|\hat{\bTheta}-\bTheta\right\|_1
	&= \tilde{O}\left(\frac{\alpha_{\mathrm{max}}}{\alpha_0}\bbb{\frac{\alpha_{\mathrm{max}}}{\alpha_{\mathrm{min}} }}^{0.5} \frac{\sqrt{n}K\nu^{1.5}\left(1+\alpha_{0}\right)^{1.5}\sqrt{\max_i (\rho \be_i^T\bB \balpha)}}{\rho\sqrt{\alpha_0} \lambda^{*}\left({\bB}\right)}\right)\nonumber\\
	& = \tilde{O}\left(  \frac{\alpha_{\mathrm{max}}}{\alpha_0}\bbb{\frac{\alpha_{\mathrm{max}}}{\alpha_{\mathrm{min}} }}^{0.5} \sqrt{\frac{n}{\rho}} \frac{K \nu^{1.5}\left(1+\alpha_{0}\right)^{1.5}}{\lambda^{*}(\bB)} \right)
	}
	When $\max_a \alpha_a\leq C \min_a \alpha_a$ for some constant $C\geq1$, $\alpha_{0}=O(1)$ and $\kappa(\bB)=\Theta(1)$, we have $\nu=O(K)$, $\alpha_{\mathrm{max}}/\alpha_{0}=O(1/K)$,  ${\alpha_{\mathrm{max}}}/{\alpha_{\mathrm{min}}}=O(1)$ and  $\min\{K,(1+\alpha_{0})\kappa(\bB){\alpha_{\mathrm{max}}}/{\alpha_{\mathrm{min}}}\}^2=O(\min\{K,\kappa(\bB)\}^2)=O(1)$, so our bound in Lemma~\ref{lem:l1} is worse by $\sqrt{K}$ than Eq.~\eqref{eq:aamm}.
	
	For worst case analysis, $\alpha_{\mathrm{max}}/\alpha_{0}=O(1)$, $\alpha_{\mathrm{max}}/\alpha_{0}=O(\nu)$, and $\min\{K,(1+\alpha_{0})\kappa(\bB){\alpha_{\mathrm{max}}}/{\alpha_{\mathrm{min}}}\}^2=K^2$, so our bound in Lemma~\ref{lem:l1} is worse by $K^2\sqrt{\nu}(1+\alpha_{0})$ than Eq.~\eqref{eq:aamm}.
	
	Note that the proposed algorithm in~\citep{MMSBAnandkumar2014} requires prior knowledge on $\alpha_0$ while our algorithm does not need $\alpha_{0}$ as input. 
\end{rem}

\section{Why Pruning Works}\label{sec:prune_works}
Proving the pruning algorithm requires strong distributional conditions on the residuals of the rows of eigenvectors. Here we present a heuristic argument of why pruning works. Note that in the pruning algorithm, essentially we are estimating the density of points in an $\epsilon$-ball around every point $i$ which has sufficiently large norm. This should work only if the points outside the population simplex have lower density in their $\epsilon$-balls than the corners of the simplex. Otherwise, the pruning will remove the corners of the population simplex, diminishing the quality of the pure nodes. 
\begin{figure}[!b]
	\centering
	\includegraphics[width=0.5\textwidth]{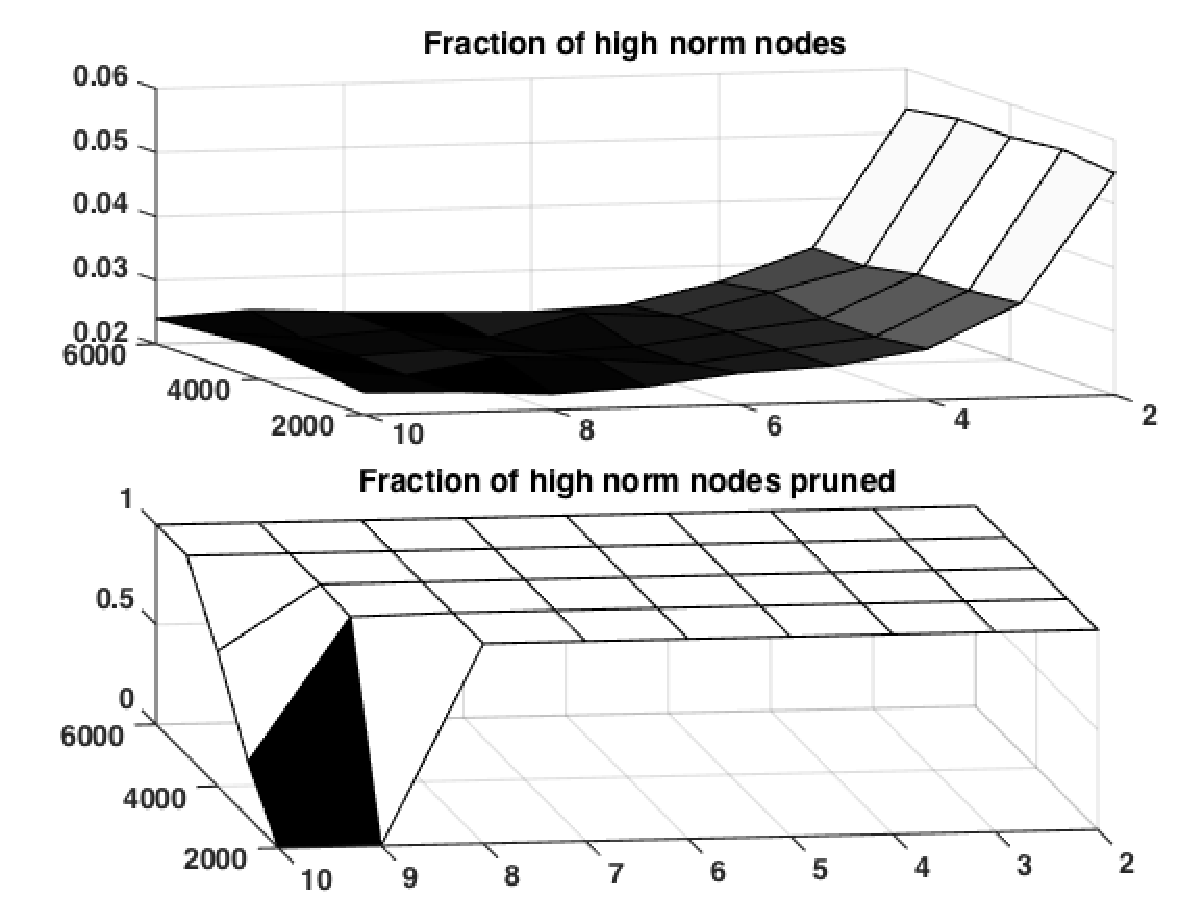}
	\caption{\label{fig:pruningmore}  Top panel: fraction of nodes with high norm. Bottom panel: fraction of nodes with high norm pruned. We vary $K\in \{2,\dots,10\}$ on the X axis  and vary $n\in\{2000,3000,\dots,6000\}$ on the $Y$ axis.  }
\end{figure}
We consider $K\in \{2,\dots, 10\}$ and $n\in\{2000,3000,\dots, 6000\}$, $\balpha=\bone_K/K$, $\bB_{ii}=1,\bB_{ij}=0.001$ and $\rho=\log n/n$. 
For each combination we use $\epsilon$ as the median of the row-wise difference of the empirical eigenvectors from their suitably rotated population counterpart. 
Let $y = \max_i \|\bV_i\|$ denote the largest row-wise norm of the population eigenvectors; recall that this occurs at one of the corners of the simplex.
Let $S_0$ denote the set of nodes with high empirical eigenvector row-norms (the ``high-norm'' nodes), defined as $S_0:=\{i: \|\hV_i\|\geq y+\epsilon\}$. 
SPA will choose at least one of these nodes (and possibly several of them) as its estimated corners.
Let $B(x,\epsilon)$ denote the $\ell_2$ ball of size $\epsilon$ centered at point $x$. 
For each of the $K$ corners $c_i$ of the population simplex ($c_i$ equals some row of $\bV_P$), we compute the number of neighborhood points $x_i:=|\{j|\hV_j\in B(c_i,\epsilon)\}|$; let
$\delta:=\min_i x_i$ be the minimum neighborhood size among these corners. 
Similarly, for each $i\in S_0$, we compute $z_i=|\{j|\hV_j\in B(\hV_i, \epsilon)\}|$.
Now we count the fraction of nodes in $S_0$ that could be pruned without pruning the corners $c_i$ of the population simplex.
This fraction is $m=\frac{\sum_{i\in S_0}\bone\{z_i<\delta\}}{|S_0|}$.
Fig~\ref{fig:pruningmore} shows that for almost all combinations of $K$ and $n$, we have $m=1$, i.e., all the nodes in $S_0$ do get pruned, except for $K=10,n=2000$. This is expected, since for large $K$ and small $n$ the pure node density around the corners of the population simplex will be small. Fig~\ref{fig:pruningmore} shows the fraction $|S_0|/n$ of high-norm nodes. For all $(K, n)$ combinations pruning removes about a 2\% to 6\% of the nodes.

\section{Extra simulation results}
\label{sec:suppsimu}
\begin{figure}[!htb]
	\centering
	\begin{tabular}{@{\hspace{0em}}c@{\hspace{0em}}c@{\hspace{0em}}}
		\includegraphics[width=.4\textwidth]{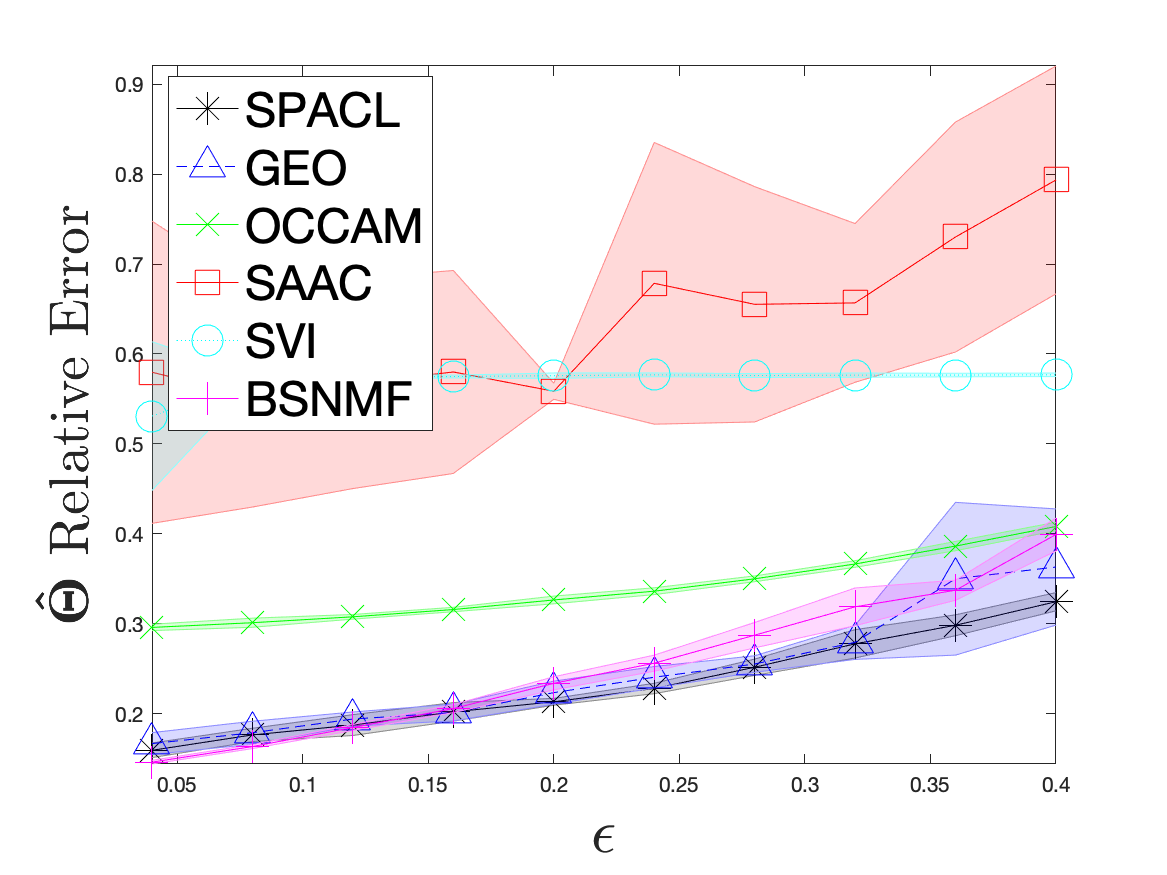}&
		\includegraphics[width=.4\textwidth]{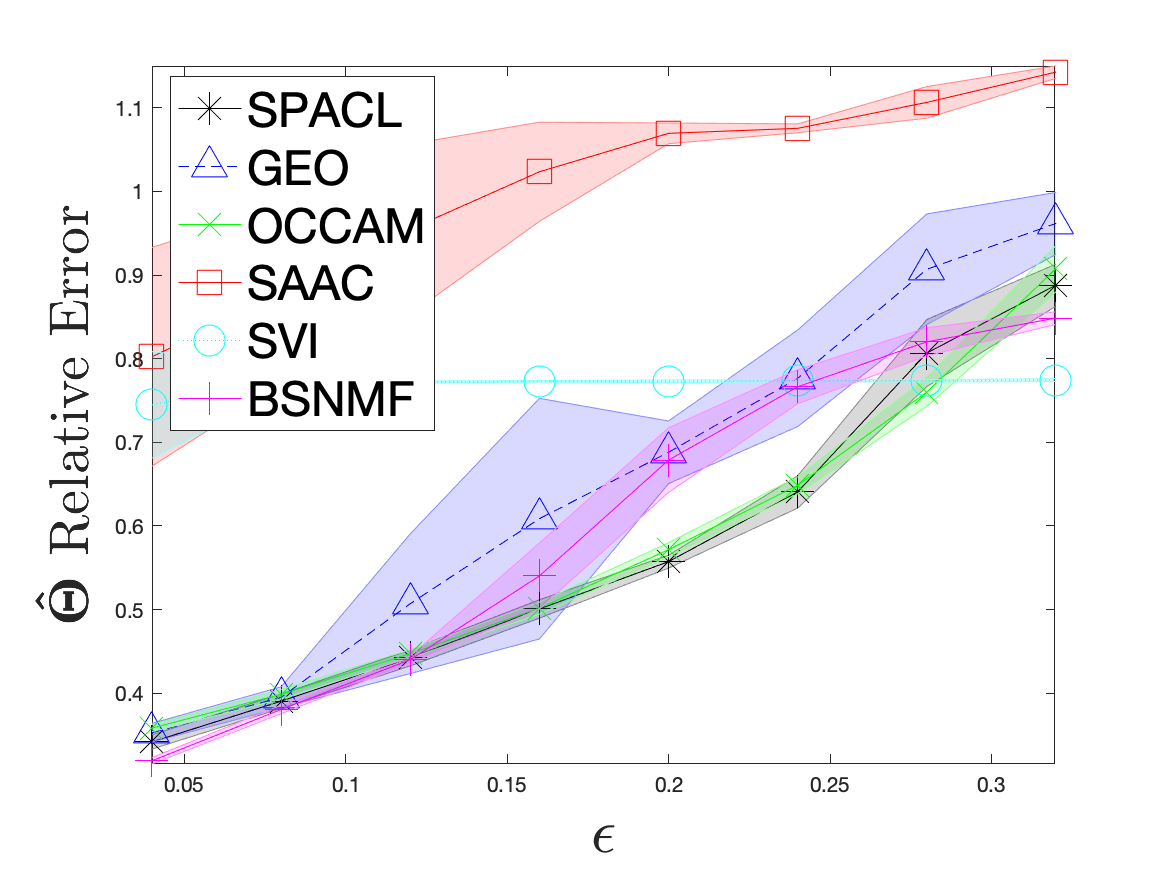}
		\\
		{\small(i)}&{\small(ii)}
	\end{tabular}
	\caption{\label{fig:expsimtwo}Error against $\epsilon$:  we use $\bB_{ii}=1$, $\bB_{ij}=\epsilon$ for $i\neq j$. (i) $K=3$. (ii) $K=7$.}	
\end{figure}

\noindent
{\bf Changing $\bB$:}
In Fig~\ref{fig:expsimtwo} (i), we plot the relative error in estimating $\bTheta$ against increasing off diagonal noise $\epsilon$ of $\bB$. We take $K=3$, $\rho=0.15$, $\alpha_i=3/K=1$, $\bB_{ii}=1$, $i\in[K]$. 
We see that \OurAlgo outperforms SAAC, SVI, and OCCAM over the entire parameter range. For large $\epsilon$, it is also better than GeoNMF and BSNMF.

We also include simulation results with $K=7$.
We take $\rho=0.15$, $\alpha_i=3/K=3/7$, $\bB_{ii}=1$, $i\in[K]$. 
We see in Fig~\ref{fig:expsimtwo} (ii) that SAAC performs poorly, and OCCAM performs similarly with \OurAlgo, which can also be implied from the simulation results on changing $K$. \OurAlgo is more stable and outperforms GeoNMF and BSNMF.

\end{document}